\documentclass[twoside]{article}
\usepackage[preprint]{arxiv}
\usepackage{xcolor}
\colorlet{blue}{black}
\usepackage{amsmath, amsfonts, amssymb, dsfont}
\usepackage{graphicx}
\graphicspath{{../}}
\usepackage{booktabs}
\usepackage{tikz}
\usepackage{subcaption}
\usetikzlibrary{arrows.meta, positioning}
\usepackage{hyperref}
\hypersetup{hidelinks}
\usepackage[most]{tcolorbox}
\usepackage{amsthm}
\usepackage{cancel}
\usepackage[normalem]{ulem}
\usepackage[round,authoryear]{natbib}

\newtheorem{assumption}{Assumption}
\newtheorem{lemma}{Lemma}
\newtheorem{theorem}{Theorem}

\newtheorem{proposition}{Proposition}

\theoremstyle{remark}

\begin{document}

\twocolumn[

\aistatstitle{Conformal Individual Treatment Effect Estimation under Networked Interference}

\aistatsauthor{
Matteo Zecchin$^{\dagger}$ \qquad
Osvaldo Simeone$^{\S}$
}

\aistatsaddress{
$^{\dagger}$Communication Systems Department, EURECOM, Sophia Antipolis, France
\qquad\\
$^{\S}$Institute for Intelligent Networked Systems (INSI), Northeastern University London, London, UK
\\
\texttt{zecchin@eurecom.fr, o.simeone@\{nulondon.ac.uk, northeastern.edu\}}
}
]

\begingroup
\renewcommand{\thefootnote}{}
\footnotetext{ The work of O. Simeone was supported by an Open Fellowship of the EPSRC (EP/W024101/1), by the EPSRC (EP/X011852/1) and by the ERC (No. 101198347). }
\addtocounter{footnote}{-1}
\endgroup

\begin{abstract}
Conformal counterfactual prediction constructs prediction sets with finite-sample coverage guarantees for counterfactual outcomes and individual treatment effects under the no-interference assumption. In this work, we relax this assumption by allowing each unit's potential outcomes to depend on other units' treatments and covariates. In this setting, propensity-score reweighting does not restore weighted exchangeability, and existing methods may fail to achieve valid coverage. To address this issue, we develop interference-adjusted weighted conformal prediction that accounts for interference by constructing an observable upper bound on the ideal and unobserved conformal $p$-value under the target intervention. The resulting prediction sets provide finite-sample marginal coverage guarantees for counterfactual outcomes and individual treatment effects in both transductive and inductive settings. We also derive a sharper construction when intervention-induced changes in nonconformity scores are bounded. Numerical experiments show that our methods preserve nominal coverage, whereas existing methods may not.
\end{abstract}

\section{Introduction}
\subsection{Motivation}

Many practical decisions require predicting how a particular unit would have responded to an action it did not receive. A physician may wish to assess what a patient's outcome would have been under an alternative treatment \citep{zhao2012estimating}, an online platform may need to evaluate how a user would have responded to an advertising campaign \citep{gordon2019comparison}, and a network operator may need to assess what would have happened had resources been allocated to a specific user \citep{bao2017prediction,hou2025if}. Such decisions require comparing the observed factual outcome under the assigned treatment with the counterfactual outcome that would have been observed under an alternative treatment. The fundamental challenge is that only the factual outcome is observed, while the remaining potential outcomes must be predicted from data \citep{rubin1974estimating,rubin2005causal}.

The reliability of the resulting decision hinges on faithfully quantifying the uncertainty associated with the counterfactual prediction. Conformal prediction has emerged as a powerful framework for this purpose, providing model-agnostic prediction sets with finite-sample marginal coverage guarantees for missing potential outcomes and individual treatment effects \citep{shafer2008tutorial}. Under the assumption that the treatment assigned to a given unit does not affect the outcomes of other units, counterfactual prediction can be formulated as a prediction problem under covariate shift, and weighted conformal prediction (WCP) \citep{tibshirani2019conformal} can be used to construct valid prediction sets via propensity-score weighting \citep{lei2021conformal}. 

In many applications, however, a unit's outcome depends not only on its own treatment but also on the treatments assigned to other units \citep{hudgens2008toward,aronow2017,forastiere2021identification}. Vaccinating one person can change the infection risk faced by others; showing an advertisement to one user can influence the behavior of their peers; and allocating resources to one user can reduce the resources available to others. In these settings, intervening on the treatment of a target unit can alter the distribution of outcomes of other units. This distributional shift cannot be expressed via unit-level propensity weights and, as a result, WCP may fail to provide valid coverage under interference.

This phenomenon is illustrated in Figure \ref{fig:motivating_example}. The example considers a simple model of local interactions in which each of $N$ units is connected to its $k$ nearest neighbors in covariate space. Treatments are assigned independently according to the units' covariates, and each unit's outcome depends on the fraction of its treated neighbors. When $k=0$, there is no interference, and WCP attains the target coverage. As the number of neighbors $k$ increases, an intervention on the target unit alters the exposures of a growing number of units. Consequently, the discrepancy between the observational and interventional distributions increases, and WCP exhibits increasingly severe undercoverage.
\begin{figure}[t]
	\centering
	\includegraphics[width=0.48\textwidth]{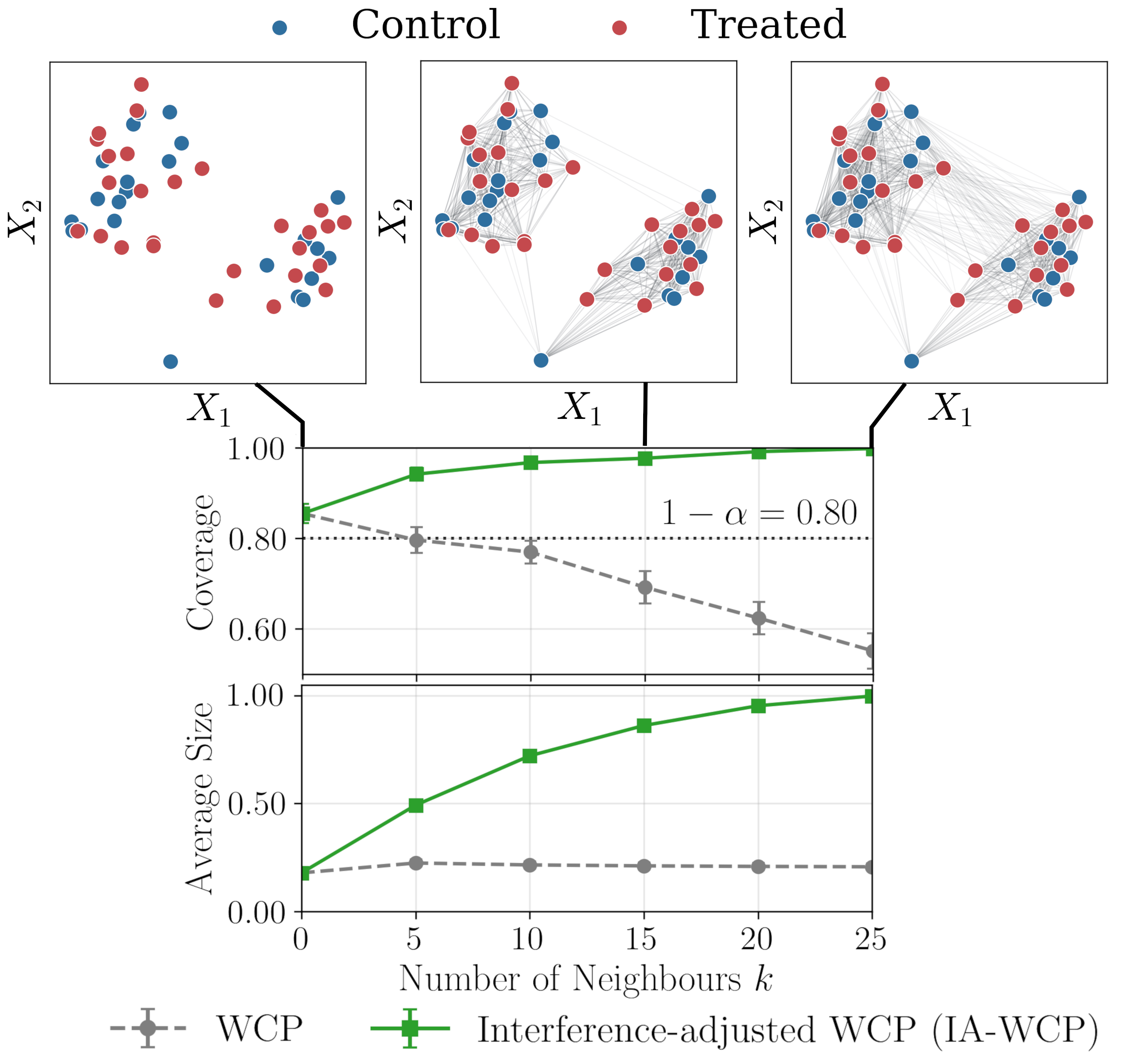}
	\caption{Effect of interference on weighted conformal prediction (WCP) \citep{lei2021conformal}. The top row shows one realization of a covariate-dependent $k$-nearest-neighbor exposure graph $\mathcal{G}(X_{1:N})$ for increasing values of $k$. The bottom row reports the empirical marginal coverage and normalized prediction-set size for a target coverage level of $1-\alpha=0.8$. WCP increasingly undercovers as interference between units increases, i.e., as $k$ increases, whereas the proposed interference-adjusted WCP (IA-WCP) maintains the target marginal coverage by returning larger prediction sets.}
	\label{fig:motivating_example}
\end{figure}

To address this problem, we develop a counterfactual conformal procedure that preserves marginal coverage under networked interference. The proposed interference-adjusted WCP (IA-WCP) is based on a corrected $p$-value computed from the observed data \citep{vovk2005algorithmic,shafer2008tutorial} and knowledge of the interference structure. As shown in Figure \ref{fig:motivating_example}, IA-WCP reduces to WCP in the absence of interference and, unlike WCP, maintains coverage as interference increases, at the cost of wider prediction sets.  We develop IA-WCP for both transductive and inductive settings \citep{clarkson2023distribution,zargarbashi2023conformal,h2024conformal} and establish finite-sample marginal coverage guarantees in each case. We also provide an example showing that the correction can be tight with respect to marginal coverage and introduce a sharper variant, interference-adjusted WCP+ (IA-WCP+), for settings in which intervention-induced changes in the affected nonconformity scores are bounded.

\subsection{Related Work}
WCP provides prediction sets for counterfactual outcomes and individual treatment effects under covariate shift and no interference \citep{tibshirani2019conformal,lei2021conformal}. Recent extensions of WCP address hidden confounding and poor overlap while retaining the no-interference assumption \citep{jin2023sensitivity,chen2024conformal,farzaneh2025synthetic,qchohi2026confounding}. Separately, research on causal inference under interference introduced the notion of exposure mappings to identify and estimate direct and spillover effects \citep{hudgens2008toward,aronow2017,forastiere2021identification}. Conformal methods for networked data have primarily targeted factual node labels or links \citep{clarkson2023distribution,huang2023uncertainty,zhao2024conformalized}. The closely related method of \cite{zhou2025adaptive} constructs conformal prediction sets for ITEs on networked data but assumes no interference. Our work connects these lines of research by constructing prediction sets with finite-sample marginal coverage guarantees for individual potential outcomes and treatment effects under network interference. A more detailed discussion is provided in Appendix \ref{app:related_work}.

\subsection{Contributions}

Our main contributions are as follows:

\begin{itemize}
    \item We formulate conformal inference for ITEs and treatment effects under interference in both the transductive setting, where the targets are units in the observed population, and the inductive setting, where the goal is to generalize to new units.

    \item We develop IA-WCP, an interference-adjusted counterfactual conformal procedure that produces prediction sets for the potential outcomes with finite-sample marginal coverage guarantees under interference, and reduces to WCP \citep{lei2021conformal} in the absence of interference.

    \item Under a score-stability condition, we present a sharper version of IA-WCP, referred to as IA-WCP+, that can produce smaller prediction sets while maintaining coverage.

    \item We provide experimental results showing that WCP can undercover under interference, whereas the proposed IA-WCP and IA-WCP+ preserve marginal coverage.
\end{itemize}

\section{Setting}

\begin{figure*}[t]
	\centering
	\includegraphics[width=0.8\textwidth]{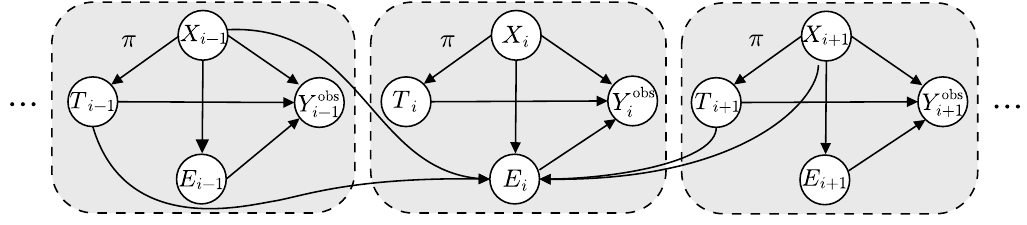}
	\caption{Graphical model of the setting considered in this work. For each unit $i$, the covariate $X_i$ is independently sampled from a common population distribution $P_X$, and the treatment $T_i$ is independently assigned according to a common covariate-dependent policy $\pi(X_i)$. The exposure $E_i$ depends on the unit's covariate $X_i$ and on the covariates $X_{j}$ and treatments $T_{j}$ of the other units $j\in\mathcal{N}_i(X_{1:N})$. The observed outcome $Y_i^{\mathrm{obs}}$ equals the potential outcome corresponding to the assigned treatment, and depends on the unit's covariate and exposure.}
	\label{fig:graphical_model}
\end{figure*}

We consider a population of $N$ units indexed by $i\in\{1,\ldots,N\}$. Each unit has covariates $X_i\in\mathcal X$ and receives a binary treatment $T_i\in\{0,1\}$. We write $X_{1:N}=(X_1,\ldots,X_N)$ and $T_{1:N}=(T_1,\ldots,T_N)$, and use $X^{-i}_{1:N}$ and $T^{-i}_{1:N}$ for the corresponding vectors with unit $i$ removed. The covariates are independently sampled from a common population distribution $P_X$, and, as detailed next, treatments are conditionally independent under a common covariate-dependent policy.
\begin{assumption}[Treatment assignment and overlap]
	\label{ass:prop_unit}
	There exists a propensity score function $\pi:\mathcal X\to(0,1)$ such that, conditionally on covariates $X_{1:N}$, the assigned treatments are independent and distributed as
	\begin{align}
		T_i\mid X_{1:N}\sim \mathrm{Bernoulli}(\pi(X_i)), \quad i=1,\dots,N.
	\end{align}
	Moreover, there exists a strictly positive constant $\underline \pi\in(0,1/2]$ such that
	\begin{align}
		\underline \pi \le \pi(x)\le 1-\underline \pi, \qquad \forall x\in\mathcal X.
	\end{align}
\end{assumption}
For a given treatment vector $T_{1:N}$, define the control and treated index sets, respectively, as
\begin{align}
	\mathcal I_0=\{j\in\{1,\dots,N\}:T_j=0\},
\end{align}
and
\begin{align}
    \mathcal I_1=\{j\in\{1,\dots,N\}:T_j=1\}.
\end{align}

For each unit $i$, let $(Y_i(0),Y_i(1))$ denote its potential outcomes under treatment $T_i=0$ and $T_i=1$. Departing from the standard SUTVA, we allow these outcomes to depend on covariates and treatments of a subset of units, excluding unit $i$ \citep{tchetgen2012causal,hudgens2008toward}. Specifically, for each unit $i$, we summarize this inter-unit dependence through an exposure variable $E_i\in\mathcal E$ \citep{aronow2017} that depends on units in a neighborhood set
\begin{align}
    \mathcal N_i(X_{1:N})  \subseteq  \{1,\ldots,N\}\setminus\{i\}.
\end{align}

The neighborhood $\mathcal N_i(X_{1:N})$ generally depends on the covariates $X_{1:N}$. For example, in Figure \ref{fig:motivating_example} the set $\mathcal N_i(X_{1:N})$ corresponds to the $k$ nearest neighbors to unit $i$. Furthermore, the neighborhood relation may be directed, meaning that $j\in\mathcal N_i(X_{1:N})$ need not imply $i\in\mathcal N_j(X_{1:N})$. Overall, the neighborhood relations $\{\mathcal N_i(X_{1:N})\}^N_{i=1}$ define a covariate-dependent exposure (directed) graph $\mathcal G(X_{1:N})$ with nodes given by the units $\{1,\dots,N\}$ and edges $(i,j)$ with $j\in \mathcal N_i(X_{1:N})$.

In this work, we restrict attention to permutation-equivariant neighborhood rules. For a permutation $\sigma$ of the set $\{1,\dots,N\}$, write
\begin{align}
    T_{\sigma(1:N)}=(T_{\sigma(1)},\dots,T_{\sigma(N)}), 
\end{align}
and
\begin{align}
 \quad X_{\sigma(1:N)}=(X_{\sigma(1)},\dots,X_{\sigma(N)}).
\end{align}

\begin{assumption}[Permutation-equivariant neighborhood rule]
    \label{ass:eq_exp_map}
    For any permutation $\sigma$ of the set $\{1,\dots,N\}$, and every unit $i\in\{1,\dots,N\}$, the neighborhood rule satisfies
    \begin{align}
    \mathcal N_i(X_{\sigma(1:N)})=\left\{j:\sigma(j)\in\mathcal N_{\sigma(i)}(X_{1:N})\right\}.
        \label{eq:neighborhood_equivariance}
    \end{align}
\end{assumption}
\paragraph{Example:}
Given a distance $d:\mathcal X\times\mathcal X\to\mathbb R_+$ and a radius $r\geq 0$, the radius-$r$ neighborhood rule corresponds to $\mathcal N_i(X_{1:N})  =   \left\{  j\neq i:d(X_i,X_j)\leq r \right\}$.

This rule satisfies Assumption \ref{ass:eq_exp_map} since permutations of units preserve pairwise distances. For illustration, consider $X_1=0$, $X_2=0.5$, and $X_3=1$, with $d(x,x')=|x-x'|$, $r=0.6$, and a permutation $ \sigma(1)=2, \sigma(2)=3, \sigma(3)=1$.

Then, we have $X_{\sigma(1:3)}=(X_2,X_3,X_1)$ and, for $i=1$, the neighborhood set is
$\mathcal N_1(X_{\sigma(1:3)}) =\left\{j\neq 1:   d(X_{\sigma(1)},X_{\sigma(j)})\leq 0.6\right\}=\{2,3\}$, which can be seen to satisfy Assumption \ref{ass:eq_exp_map}. More examples of neighborhood rules satisfying this property are given in Appendix \ref{sec:perm_eq_maps}.
 
We define the multiset of treatments and covariates for units in neighborhood $\mathcal N_i(X_{1:N})$ as
\begin{align}
    \mathcal{L}_i(X_{1:N},T_{1:N})=  \bigl\{\bigl\{(X_j,T_j):j\in\mathcal N_i(X_{1:N})\bigr\}\bigr\},
    \label{eq:local_configuration}
\end{align}
where the doubled braces indicate that repeated covariate--treatment pairs are retained with their multiplicities. Each exposure variable $E_i$ depends only on $X_i$ and variables in the multiset $\mathcal{L}_i(X_{1:N},T_{1:N})$ as detailed next.

\begin{assumption}[Exposure generation and observation]
    \label{ass:exposure_mechanism}
    Conditionally on $(X_{1:N},T_{1:N})$, the exposure variables $E_1,\dots,E_N$ are independent. Moreover, there exists a common conditional distribution $P_{E\mid X, \mathcal{L}}$ such that, for each $i=1,\dots,N$, we have
    \begin{align}
        \label{eq:cond_ind_exposure}
        E_i \mid (X_{1:N},T_{1:N}) \sim  P_{E\mid X,\mathcal{L}} \left(\cdot\mid X_i,\mathcal{L}_i(X_{1:N},T_{1:N})\right).
    \end{align}
	Each exposure $E_i$ is either directly observed or can be exactly recovered from $X_{i}$ and the multiset $\mathcal{L}_i(X_{1:N},T_{1:N})$.
\end{assumption}

The assumption that the exposure variable $E_i$ is observable is natural when the exposure mechanism is known. Examples include the number or fraction of treated neighbors within a known group or cluster and distance-weighted exposures that depend on observed covariates \citep{aronow2017,forastiere2022estimating,cai2024independent}.

\begin{assumption}[No hidden interference confounding]
	\label{ass:no_hidden_interference_confounding}
	Conditional on the covariates, exposures, and treatments $(X_{1:N},E_{1:N},T_{1:N})$, where $E_{1:N}=(E_1,\dots,E_N)$, the potential outcome pairs $(Y_i(0),Y_i(1))_{i=1}^N$ are independent across units. Moreover, for $t\in\{0,1\}$, there exists a common conditional distribution $P_{Y(t)\mid X,E}$ such that, for each $i\in\{1,\dots,N\}$,
	\begin{align}
		\label{eq:potential_distribution}
		Y_i(t) \mid (X_{1:N},E_{1:N},T_{1:N}) \sim P_{Y(t)\mid X,E}(\cdot\mid X_i,E_i).
	\end{align}
\end{assumption}

\begin{assumption}[Consistency]
	\label{ass:consistency}
	For each observed unit $i$, the observation $Y_i^{\mathrm{obs}}$ equals the potential outcome evaluated at the realized treatment, i.e.,
	\begin{align}
		Y_i^{\mathrm{obs}}
		=
		\begin{cases}
			Y_i(0), & \text{if } T_i=0,\\
			Y_i(1), & \text{if } T_i=1.
		\end{cases}
	\end{align}
\end{assumption}
Overall, as a result of the observations above, each observation is distributed as
\begin{align}
	Y_i^{\mathrm{obs}} \mid (X_{1:N},T_{1:N},E_{1:N}) \sim P_{Y(T_i)\mid X,E}(\cdot\mid X_i,E_i),
\end{align}
and the joint distribution of all relevant variables is specified by the following conditional distributions
\begin{subequations}\label{eq:data_model}
\begin{align}
&X_i \sim P_X, \\
&T_i \mid X_i \sim \mathrm{Bernoulli}(\pi(X_i)), \\
&E_i \mid (X_{1:N},T_{1:N})
\sim P_{E\mid X,\mathcal{L}}\!\left(
\cdot \mid X_i,\mathcal{L}_i(X_{1:N},T_{1:N})
\right), \label{eq:exposure_distribution}\\
&Y_i(T_i)\mid (X_{1:N},E_{1:N},T_{1:N}) \sim P_{Y(T_i)\mid X,E}(\cdot\mid X_i,E_i)\\
&Y_i^{\mathrm{obs}}=Y_i(T_i)
\end{align}
\end{subequations}
Under this joint distribution, variables associated with different units are generally dependent.

\subsection{Problem Formulation}

In this section, we formalize the individual treatment effect under interference and state the corresponding counterfactual coverage objectives in the transductive and inductive settings.   The primary inferential target is the individual treatment effect (ITE), defined as the difference between a unit's two potential outcomes at the same exposure,
\begin{align}
	\label{eq:ite}
	\tau = Y(1)-Y(0).
\end{align}
Given the observational dataset $\mathcal D$, to be specified differently for the inductive and transductive settings, and a unit with covariates $X$ and exposure $E$, the goal is to construct an ITE prediction set $\Gamma^{\mathrm{ITE}}(X,E)$ satisfying the marginal coverage guarantee
\begin{align}
	\Pr[\tau\in \Gamma^{\mathrm{ITE}}(X,E)]\ge 1-\alpha,
\end{align}
for a user-defined miscoverage level $\alpha\in(0,1)$, where the probability is taken over both the observational data $\mathcal{D}$ and the test unit. We consider two instantiations of this requirement.
\subsubsection{Transductive ITE Estimation}

As illustrated in the left panel of Figure \ref{fig:tran_vs_ind}, in the transductive setting we are given a dataset $\mathcal D=((X_i,T_i,E_i,Y_i^{\mathrm{obs}}))_{i=1}^N$ for all $N$ units, and the goal is to construct a family of prediction sets $\{\Gamma_i^{\mathrm{ITE}}\}_{i=1}^N$ that cover the ITEs of the observed units on average. Specifically, the transductive ITE coverage requirement is
\begin{align}
	\label{eq:transductive_coverage_ite}
	\Pr_{I\sim\mathrm{Unif}(\{1,\dots,N\})}\hspace{-0.25em}\left[Y_{I}(1)-Y_{I}(0)\hspace{-0.2em}\in \hspace{-0.2em}\Gamma_{I}^{\mathrm{ITE}}(X_I,E_I)\right]\hspace{-0.25em}\ge\hspace{-0.25em} 1-\alpha,
\end{align}
where the probability is taken over the random index ${I}\sim\mathrm{Unif}(\{1,\dots,N\})$, as well as over the randomness in the observational dataset $\mathcal D$ and the potential outcomes.

Since the factual outcome $Y_i(T_i)$ is observed for every unit, constructing an ITE prediction set reduces to constructing a prediction set for the missing potential outcome $Y_i(\bar T_i)$, where $\bar T_i=1-T_i$ denotes the counterfactual treatment. Let $\Gamma_i^{\bar T_i}$ denote a prediction set for this missing potential outcome. Combining this set with the observed factual outcome yields the sets
\begin{align}
    \label{eq:transductive_ite_set}
	\Gamma_i^{\mathrm{ITE}}
	=
	\begin{cases}
		\Gamma_i^1 - Y_i(0), & \text{if } T_i=0,\\
		Y_i(1) - \Gamma_i^0, & \text{if } T_i=1,
	\end{cases}
\end{align}
where for sets $A\subset \mathbb{R}^d$ and $B\subset \mathbb{R}^d$, we have defined the difference set $A-B=\{a-b:a\in A, b\in B\}$.
Consequently, if the counterfactual potential-outcome prediction sets satisfy
\begin{align}
	\label{eq:transductive_coverage_counterfactual}
	\Pr_{{I}\sim\mathrm{Unif}(\{1,\dots,N\})}\left[Y_{I}(\bar T_{I})\in \Gamma_{I}^{\bar T_{I}}\right]\ge 1-\alpha,
\end{align}
then the induced sets $\{\Gamma_i^{\mathrm{ITE}}\}_{i=1}^N$ satisfy the transductive ITE coverage requirement in \eqref{eq:transductive_coverage_ite}.

\begin{figure}
	\centering
	\includegraphics[width=0.475\textwidth]{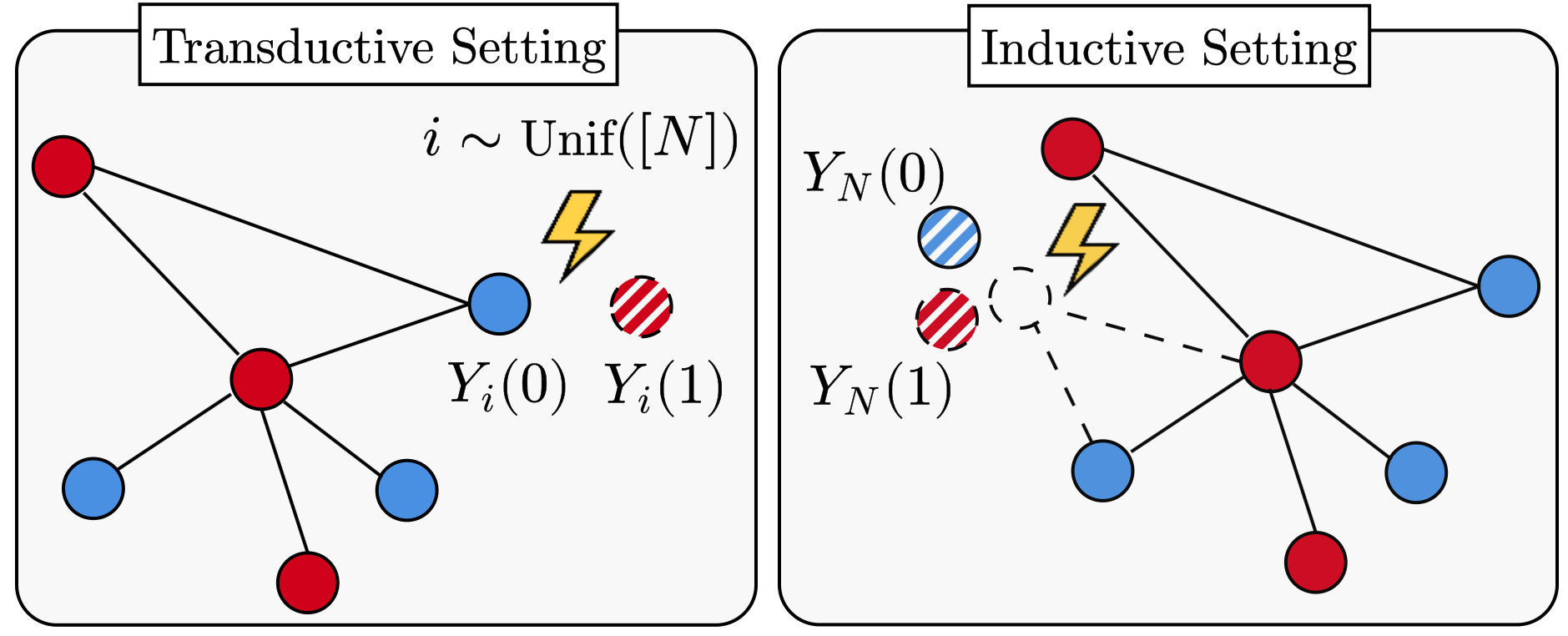}
	\caption{Illustration of the ITE estimation task in the transductive and inductive settings. In the transductive setting, in the left panel, given a network of $N$ interfering units, the goal is to estimate the potential outcome of a random unit $i$ picked uniformly at random from the observed units. In the inductive setting, in the right panel, given a network of $N-1$ interfering units, the inferential targets are the two unobserved potential outcomes associated with an additional unit sampled from the marginal distribution $P_X$ and embedded in the network.}
	\label{fig:tran_vs_ind}
\end{figure}

\subsubsection{Inductive ITE Estimation}

\label{sec:inductive_ite_estimation}
As illustrated in the right panel of Figure \ref{fig:tran_vs_ind}, in the inductive setting we are given a dataset $\mathcal{D}$ including tuples $\mathcal{D}=\{(X_i,T_i,E_i,Y_i^{\mathrm{obs}})\}_{i=1}^{N-1}$ for $N-1$ units, and the goal is to construct a set predictor $\Gamma^{\mathrm{ITE}}$ that covers the ITE of a new unit with covariates $X_{N}\sim P_X$ and an exposure level $E_{N}$ given by the exposure that this unit would have experienced in the network of $N$ interfering units whose first $N-1$ units have covariates and treatments as in $\mathcal{D}$. 

Formally, the output of inductive ITE estimation is a set predictor $\Gamma^{\mathrm{ITE}}$ satisfying the marginal coverage guarantee
\begin{align}
	\label{eq:coverage_inductive_id}
	\Pr\left[Y_{N}(1)-Y_{N}(0)\in \Gamma^{\mathrm{ITE}}(X_{N},E_{N})\right]\ge 1-\alpha,
\end{align}
where the probability is over the dataset $\mathcal{D}$ and the test unit $(X_{N},E_{N},Y_{N}(0),Y_{N}(1))$. Unlike in the transductive setting, neither potential outcome is observed for the test unit $N$. Therefore, the ITE prediction set $\Gamma^{\mathrm{ITE}}(X_{N},E_{N})$ must be constructed from prediction sets $\Gamma^0(X_{N},E_{N})$ and $\Gamma^1(X_{N},E_{N})$ for both potential outcomes.

In the following, we detail the proposed solution for the transductive setting, and present its  counterpart for inductive inference, which follows the same principles, in Appendix \ref{app:inductive_ia_wcp}.

\section{Transductive Interference-Adjusted Weighted Conformal Prediction}
\label{sec:conf_tran_ite}
In this section, we introduce IA-WCP for the transductive setting. Without loss of generality, we condition on the event that the randomly selected test unit $I\sim\mathrm{Unif}(\{1,\dots,N\})$ is a control unit, i.e., $T_I=0$, and construct a prediction set for its missing treated potential outcome $Y_I(1)$. In what follows, we suppress the dependence on the target treatment in the notation whenever it is clear from context.

\subsection{Weighted Conformal Prediction}
\label{sec:imp_int_trans}
We first review conventional WCP for counterfactual outcomes \citep{tibshirani2019conformal,lei2021conformal} and explain why it is generally invalid under interference. Following the construction in \citep{lei2021conformal}, let $S:\mathcal X\times\mathcal E\times\mathcal Y\to\mathbb R$ be a fixed nonconformity scoring function. For every treated calibration unit $j\in\mathcal I_1$, define the observational calibration score $V_j^{\mathrm{obs}} = S(X_j,E_j,Y_j(1))$,
and, for a candidate value $y\in\mathcal Y$ and test unit $I$, define the test score $V_I(y) = S(X_I,E_I,y)$.

To correct for the covariate shift induced by treatment assignment, WCP \citep{tibshirani2019conformal,lei2021conformal} assigns to each unit $j\in\mathcal I_1\cup\{I\}$ the propensity-ratio-based weight 
\begin{align}
\label{eq:prop_ratio_tran} 
W_j= \frac{1-\pi(X_j)}{\pi(X_j)},
\end{align}
and it defines the observational conformal $p$-value as
\begin{align}
\label{eq:conf_p_value_trans} 
p_{I,y}^{\mathrm{obs}} = \frac{W_I + \sum_{j\in\mathcal I_1} W_j \mathds 1\{V_j^{\mathrm{obs}}\ge V_I(y)\}}{W_I + \sum_{j\in\mathcal I_1}W_j}.
\end{align}

Finally, the prediction set is constructed as
\begin{align}
\label{eq:conf_set_trans_no_interference}
 \Gamma_I^{\rm WCP} = \{y\in\mathcal Y:p_{I,y}^{\mathrm{obs}}>\alpha\}.
\end{align}
As proved in \citep{lei2021conformal}, the marginal coverage guarantee in \eqref{eq:transductive_coverage_counterfactual} holds for the missing potential outcome $Y_I(1)$ under the standard SUTVA. In fact, in this case, the discrepancy between the distributions of calibration and test scores is fully accounted for by the covariates, and the weights in \eqref{eq:prop_ratio_tran} compensate for this shift. Under network interference, however, a residual discrepancy may remain through the exposure variables, even after correcting for covariate shift.

\subsection{Transductive IA-WCP}
\label{subsec:trans_ia_wcp}
Define the set of treated calibration units whose neighborhoods contain the test unit $I$ as
\begin{align}
\mathcal H_I = \left\{j\in\mathcal I_1:I\in\mathcal N_j(X_{1:N})\right\}.
\label{eq:structural_affected_trans}
\end{align}
IA-WCP modifies the WCP prediction set \eqref{eq:conf_set_trans_no_interference} as
\begin{align}
\label{eq:conf_set_trans_adj} 
\Gamma_I^{\mathrm{IA-WCP}} = \left\{y\in\mathcal Y: p_{I,y}^{\mathrm{obs}} > (\alpha-\rho_{I,y})_+\right\},
\end{align}
where  $(x)_+=\max\{x,0\}$ and 
\begin{align}
		\label{eq:rho_trans}
		\rho_{I,y} = \frac{\sum_{j\in\mathcal H_I} W_j \mathds 1\{V_j^{\mathrm{obs}}<V_I(y)\}}{W_I + \sum_{j\in\mathcal I_1}W_j}.
\end{align}
The correction $\rho_{I,y}$ is the fraction of total conformal weight assigned to the affected calibration units whose observational score lies below the candidate test score. Intuitively, this term captures the discrepancy between calibration and test scores caused by the presence of network interference. In particular, note that, in the absence of interference, we have $\mathcal H_I=\emptyset$ and $\rho_{I,y}=0$ for every $y\in\mathcal Y$, and the prediction set $\Gamma_I^{\mathrm{IA-WCP}}$ reduces to the WCP set \citep{lei2021conformal} in \eqref{eq:conf_set_trans_no_interference}.

The following result shows that the correction term \eqref{eq:rho_trans} is sufficient to restore the marginal guarantee \eqref{eq:transductive_coverage_counterfactual}.

\begin{theorem}
    \label{thm:cov_trans_iawcp}
    Under Assumptions \ref{ass:prop_unit}--\ref{ass:consistency}, for a randomly chosen control unit $I$, the IA-WCP set $\Gamma_I^{\mathrm{IA-WCP}}$ in \eqref{eq:conf_set_trans_adj} satisfies the inequality
    \begin{align}
    \label{eq:cov_trans_iawcp}
    \Pr\left[Y_I(1)\notin\Gamma_I^{\mathrm{IA-WCP}}\Big| T_I=0\right] \le \alpha.
    \end{align}
\end{theorem}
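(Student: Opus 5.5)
The plan is to compare the observable $p$-value $p^{\mathrm{obs}}_{I,y}$ with an ideal, unobservable $p$-value built from the \emph{interventional} world, in which the treatment of the test unit $I$ is switched to $1$ while all other variables are kept. The coverage analysis will run in that world, and at the end the ideal $p$-value is bounded by observable quantities.

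\textbf{Step 1: the interventional world.} Set $T_I\leftarrow 1$ and leave $X_{1:N}$ and $T^{-I}_{1:N}$ unchanged. The exposures $E_j$ can change only for units with $I\in\mathcal N_j(X_{1:N})$, because $E_j$ depends on $X_j$ and $\mathcal L_j$ by Assumption~\ref{ass:exposure_mechanism}. Under a natural coupling, units with $I\notin\mathcal N_j$ keep their exposure and outcome. Let $\tilde E_j$ and $\tilde Y_j(1)$ denote the interventional quantities. Define the interventional scores $\tilde V_j=S(X_j,\tilde E_j,\tilde Y_j(1))$ for $j\in\mathcal I_1\cup\{I\}$. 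Then $\tilde V_j=V_j^{\mathrm{obs}}$ for $j\in\mathcal I_1\setminus\mathcal H_I$. The exposure of $I$ itself is unchanged, since $I\notin\mathcal N_I$, so $\tilde V_I=V_I(Y_I(1))$.

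\textbf{Step 2: weighted exchangeability.} In the interventional world, the treated set is $\mathcal I_1\cup\{I\}$. Condition on the unordered multiset of tuples $(X_j,T_j,E_j,Y_j)$ over all $N$ units. By Assumptions~\ref{ass:prop_unit}--\ref{ass:consistency} and permutation equivariance (Assumption~\ref{ass:eq_exp_map}), the joint law of the observational data is invariant under relabeling. The event that the randomly chosen control unit is a specific unit then carries a likelihood ratio proportional to $(1-\pi(X_j))/\pi(X_j)=W_j$. This is the Tibshirani et al./Lei--Cand\`es argument applied at the level of the whole network: the interventional configuration with $T_I=1$ equals the observational configuration in which unit $I$ happened to be treated. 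The resulting weighted-exchangeability identity yields
\[
\Pr\bigl[\tilde p_I\le\alpha \,\big|\, T_I=0\bigr]\le\alpha,\qquad
\tilde p_I=\frac{W_I+\sum_{j\in\mathcal I_1}W_j\mathds 1\{\tilde V_j\ge \tilde V_I\}}{W_I+\sum_{j\in\mathcal I_1}W_j}.
\]
The hardest step is making this rigorous. The test unit is drawn uniformly among controls, so I would first write $\Pr[Y_I(1)\notin\Gamma\mid T_I=0]$ as an average over $i$. I would then compare the law of the data with $T_i=0$, intervened to $T_i=1$, to the observational law conditioned on $T_i=1$. These agree in distribution up to the factor $(1-\pi(X_i))/\pi(X_i)$, because interventional exposures and outcomes are generated by the same kernels $P_{E\mid X,\mathcal L}$ and $P_{Y(t)\mid X,E}$. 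Summing over units and using the exchangeable multiset gives the weighted quantile lemma with weights $W_j$. Care is needed with the normalization by the random number of controls, and I expect this normalization to be the main obstacle.

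\textbf{Step 3: observable upper bound.} Only affected units $j\in\mathcal H_I$ can have $\tilde V_j\neq V_j^{\mathrm{obs}}$. In the worst case, every affected unit with $V_j^{\mathrm{obs}}<V_I(y)$ has $\tilde V_j\ge V_I(y)$. Taking $y=Y_I(1)$ therefore gives
\[
\tilde p_I\le p^{\mathrm{obs}}_{I,Y_I(1)}+\rho_{I,Y_I(1)}.
\]
Hence $Y_I(1)\notin\Gamma_I^{\mathrm{IA-WCP}}$ means $p^{\mathrm{obs}}\le(\alpha-\rho)_+$. Since $p^{\mathrm{obs}}>0$ always, this forces $\alpha-\rho>0$, and then $\tilde p_I\le\alpha$. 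Combining with Step 2 yields \eqref{eq:cov_trans_iawcp}.
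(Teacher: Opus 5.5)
Your plan is essentially the paper's proof. You intervene on $T_I$, show that the interventional sample over $\mathcal I_1\cup\{I\}$ is weighted exchangeable with weights $W_j$ so that the ideal $p$-value is valid, then couple the two worlds so that the target and unaffected scores coincide, which yields $p^{\mathrm{int}}_{I,y}\le p^{\mathrm{obs}}_{I,y}+\rho_{I,y}$ and the same threshold argument. The normalization you flag in Step 2 never arises: conditioning on the full assignment event $A_{\mathcal I_1,I}$ fixes the calibration set and the number of controls, the conditional density of the interventional sample factors as $c\,w(X_I)$ times a function symmetric over $\mathcal I_1\cup\{I\}$, and averaging is harmless because $I\sim\mathrm{Unif}(\{1,\dots,N\})$ gives $\Pr(I=i\mid T_I=0)=1/N$ (the paper's route).
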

\begin{proof}
See Appendix \ref{app:transductive_validity}.
\end{proof}
When $T_I=1$, the analogous construction uses the control units as the calibration sample and the reverse propensity-ratio weights $W_j=\pi(X_j)/(1-\pi(X_j))$, ensuring that the resulting set \eqref{eq:transductive_ite_set} satisfies the desired condition \eqref{eq:transductive_coverage_ite}.

To explain the rationale behind the correction term \eqref{eq:rho_trans} introduced by IA-WCP, we outline the main steps in
the proof of Theorem \ref{thm:cov_trans_iawcp}. First, we condition on the event
\begin{align}
\label{eq:factual_conditioning_event}
A_{\mathcal I_1,I}
=
\big\{T_j=1 \text{ for } &j\in\mathcal I_1,T_I=0,\nonumber\\
&T_j=0 \text{ for } j\notin\mathcal I_1\cup\{I\}\big\},
\end{align}
which states that unit $I$ is a control unit and that $\mathcal I_1$ is the set of treated units. As $\mathcal I_1$ ranges over all subsets of $\{1,\ldots,N\}\setminus\{I\}$, the events $A_{\mathcal I_1,I}$ form a disjoint partition of $\{T_I=0\}$. It therefore suffices to establish \eqref{eq:cov_trans_iawcp} conditionally on each event $A_{\mathcal I_1,I}$ and then apply the law of total probability.

Conditional on $A_{\mathcal I_1,I}$, as shown in Appendix \ref{app:transductive_weighted_exchangeability}, the covariates $(X_j)_{j\in\mathcal I_1\cup\{I\}}$ are weighted exchangeable with the weights in \eqref{eq:prop_ratio_tran}. This is the key fact underlying the validity of WCP in the absence of interference.

In the presence of interference, however, this does not imply the weighted exchangeability of the calibration and test scores due to the dependence of the outcomes on the exposure variables. In particular, conditional on
$A_{\mathcal I_1,I}$ and $X_{1:N}$, the joint distribution $p\left( (Y_j(1),E_j)_{j\in\mathcal I_1\cup\{I\}} \mid A_{\mathcal I_1,I},X_{1:N}\right)$ need not be invariant under permutations that exchange unit $I$ with a calibration unit. 

IA-WCP rests on the observation that this source of non-exchangeability disappears under a modified interventional distribution in which the treatment of unit $I$ is changed to one. In fact, denoting the resulting exposures and potential outcomes by $(\tilde E_j^I)_{j\in\mathcal I_1\cup\{I\}}$ and
$(\tilde Y_j^I(1))_{j\in\mathcal I_1\cup\{I\}}$, respectively, their conditional joint distribution $p\left(
    (\tilde Y_j^I(1),\tilde E_j^I)_
    {j\in\mathcal I_1\cup\{I\}}
    \mid A_{\mathcal I_1,I},X_{1:N},
    \operatorname{do}(T_I=1)
\right)$ can be shown to be exchangeable under permutations of test and calibration units (see Appendix \ref{app:transductive_weighted_exchangeability}).  Combined with the weighted exchangeability of the covariates, this implies that the prediction set 
\begin{align}
\label{eq:ideal_WCP_set}
 \tilde\Gamma_I^{\rm WCP} = \{y\in\mathcal Y:p_{I,y}^{\mathrm{int}}>\alpha\},
\end{align}
built using the interventional conformal $p$-value 
\begin{align}
\label{eq:conf_p_value_trans_int} 
p_{I,y}^{\mathrm{int}} = \frac{{W}_I + \sum_{j\in\mathcal I_1} {W}_j \mathds 1\{V_j^{\mathrm{int}}\ge V_I^{\mathrm{int}}(y)\}}{{W}_I + \sum_{j\in\mathcal I_1}{W}_j},
\end{align}
with scores $V_j^{\mathrm{int}} = S(X_j,\tilde E_j^{I},\tilde Y_j^{I}(1))$ for $j\in\mathcal I_1$ and 
$V_I^{\mathrm{int}}(y) = S(X_I,\tilde{E}^I_I,y)$ satisfies the marginal coverage guarantee.
\begin{lemma}
     \label{thm:ideal_trans_cov}
    Under Assumptions \ref{ass:prop_unit}--\ref{ass:consistency}, for a randomly chosen control unit $I$, the ideal WCP set $\tilde\Gamma_I^{\rm WCP} $ in \eqref{eq:ideal_WCP_set} satisfies the inequality
    \begin{align}
    \label{eq:ideal_trans_cov_eq}
    \Pr\left[Y_I(1)\notin\tilde\Gamma_I^{\rm WCP} \Big| T_I=0\right] \le \alpha.
    \end{align}
\end{lemma}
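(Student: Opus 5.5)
The plan is to reduce the statement to the classical weighted-exchangeability argument of \citet{tibshirani2019conformal} under the intervention $\operatorname{do}(T_I=1)$. A preliminary observation: since $I\notin\mathcal N_I(X_{1:N})$, the exposure of unit $I$ does not depend on $T_I$. Hence $\tilde E_I^I=E_I$, and by Assumptions~\ref{ass:no_hidden_interference_confounding}--\ref{ass:consistency} the missing outcome $Y_I(1)$ has the same conditional law as $\tilde Y_I^I(1)$. It can be coupled so that $V_I^{\mathrm{int}}(Y_I(1))=S(X_I,\tilde E_I^I,\tilde Y_I^I(1))$. The event $\{Y_I(1)\notin\tilde\Gamma_I^{\rm WCP}\}$ is therefore the event $\{p^{\mathrm{int}}_{I,\tilde Y_I^I(1)}\le\alpha\}$, which involves only interventional quantities. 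As outlined after Theorem~\ref{thm:cov_trans_iawcp}, I will condition on each event $A_{\mathcal I_1,I}$ (and on the identity of $I$), prove the bound there, and conclude by the law of total probability over the partition of $\{T_I=0\}$ and over $I\sim\mathrm{Unif}$.

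Fix $I$ and $\mathcal I_1$, and set $\mathcal M=\mathcal I_1\cup\{I\}$ and $Z_j=(X_j,\tilde E_j^I,\tilde Y_j^I(1))$ for $j\in\mathcal M$. The first key step is to write the joint density of $X_{1:N}$ restricted to $A_{\mathcal I_1,I}$:
\[
\prod_{k}P_X(x_k)\prod_{j\in\mathcal I_1}\pi(x_j)\,\bigl(1-\pi(x_I)\bigr)\prod_{k\notin\mathcal M}\bigl(1-\pi(x_k)\bigr).
\]
This equals $W_I$ times a factor $\prod_{j\in\mathcal M}\pi(x_j)\prod_{k\notin\mathcal M}(1-\pi(x_k))\prod_k P_X(x_k)$, which is symmetric in the covariates indexed by $\mathcal M$.

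The second step treats exposures and outcomes. Under $\operatorname{do}(T_I=1)$ the treatment vector is $\mathbf 1\{k\in\mathcal M\}$, so it is invariant under permutations $\sigma$ of $\mathcal M$. By Assumption~\ref{ass:eq_exp_map}, the multisets $\mathcal L_j$ are permuted accordingly. By the common kernels $P_{E\mid X,\mathcal L}$ and $P_{Y(1)\mid X,E}$ of Assumptions~\ref{ass:exposure_mechanism}--\ref{ass:no_hidden_interference_confounding}, together with conditional independence, the joint conditional law of $(\tilde E_j^I,\tilde Y_j^I(1))_{j\in\mathcal M}$ given $X_{1:N}$ is equivariant under such permutations. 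Integrating out the units outside $\mathcal M$ then shows that the joint density of $(Z_j)_{j\in\mathcal M}$ under the conditioning and the intervention has the form $W(x_I)\,g(z_{\mathcal M})$ with $g$ symmetric. This is weighted exchangeability with weights $W_j=(1-\pi(X_j))/\pi(X_j)$.

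The third step is standard. Conditioning on the multiset $\{\{Z_j:j\in\mathcal M\}\}$, the probability that the test point is $Z_i$ is $W_i/\sum_{j\in\mathcal M}W_j$. Hence $V_I^{\mathrm{int}}$ is distributed as a draw from the weighted empirical distribution $\sum_{j\in\mathcal M}\frac{W_j}{\sum_k W_k}\delta_{V_j^{\mathrm{int}}}$. The event $p^{\mathrm{int}}\le\alpha$ means the test score lies strictly above the weighted $(1-\alpha)$-quantile, which has conditional probability at most $\alpha$ by the usual quantile lemma (ties only help). Taking expectations yields the bound conditionally on $A_{\mathcal I_1,I}$, and averaging yields \eqref{eq:ideal_trans_cov_eq}.

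I expect the main obstacle to be the second step: verifying the permutation symmetry of the post-intervention law of $(\tilde E^I_j,\tilde Y^I_j(1))_{j\in\mathcal M}$ carefully. The exposures of units inside $\mathcal M$ depend on covariates and treatments of units outside $\mathcal M$, so the symmetry must be argued jointly over all of $X_{1:N}$ and only then marginalized. My plan is to extend each permutation of $\mathcal M$ by the identity outside $\mathcal M$, apply \eqref{eq:neighborhood_equivariance} to get $\mathcal L_j(X_{\sigma(1:N)},T)=\mathcal L_{\sigma(j)}(X_{1:N},T)$ (using $T_{\sigma(1:N)}=T$), and then change variables in the full product density.
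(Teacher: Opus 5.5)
Your proposal is correct and follows essentially the same route as the paper: extract the factor $(1-\pi(X_I))/\pi(X_I)$ from the covariate density conditional on $A_{\mathcal I_1,I}$, establish permutation invariance of the post-intervention exposure and outcome factors via Assumption~\ref{ass:eq_exp_map} before integrating out the units outside $\mathcal I_1\cup\{I\}$, apply the standard weighted conformal argument, couple $(\tilde E_I^I,\tilde Y_I^I(1))=(E_I,Y_I(1))$ using $I\notin\mathcal N_I(X_{1:N})$, and average over the partition and over $I$. Your explicit check that $\mathcal L_j(X_{\sigma(1:N)},\tilde T^I_{1:N})=\mathcal L_{\sigma(j)}(X_{1:N},\tilde T^I_{1:N})$ for $\sigma$ fixing the complement is exactly the step the paper asserts more briefly.
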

\begin{proof}
	See Appendix \ref{app:transductive_weighted_exchangeability}.
\end{proof}

The last step of the proof of Theorem \ref{thm:cov_trans_iawcp} is to show that it is possible to construct a coupling
of the observational and interventional vectors of nonconformity scores, that is, a joint distribution whose marginals coincide with their respective observational and interventional distributions, such that for every $y\in\mathcal{Y}$, the following bound holds
\begin{align}
    p_{I,y}^{\mathrm{int}}\leq p_{I,y}^{\mathrm{obs}}+\rho_{I,y}, \quad \text{a.s.}
\end{align}
The correction term $\rho_{I,y}$ in \eqref{eq:rho_trans} then accounts for the discrepancy between the observable and interventional conformal $p$-values, ensuring that the set $\Gamma_I^{\mathrm{IA-WCP}}$ includes $\tilde\Gamma_I^{\rm WCP}$ almost surely, and completing the proof of Theorem \ref{thm:cov_trans_iawcp}.

\subsection{Transductive IA-WCP+}
The correction in \eqref{eq:rho_trans} is worst-case, as it is derived without imposing any restriction on how the observational and interventional nonconformity scores of the affected units may differ. Appendix \ref{sec:trans_equal_coverage} provides an example in which the correction is tight in terms of marginal coverage, in the sense that the prediction sets induced by the ideal interventional and corrected observational $p$-values have identical marginal coverage. In many applications, however, changing the treatment of one unit induces only a bounded perturbation in the nonconformity scores of the affected units. We denote a bound on the perturbation in the score of unit $j$ caused by changing the treatment of unit $I$ by the scalar $\Delta_j^I$, which is formally defined in Appendix \ref{app:ia_wcp_plus_trans}. Under the technical assumptions stated therein, this bound yields a sharper correction that accounts for the magnitude of the score perturbation
\begin{align}
	\rho_{I,y}^{+}=\frac{\sum_{j\in\mathcal H_I} W_j \mathds 1 \left\{V_I(y)-\Delta_j^I \le V_j^{\mathrm{obs}} < V_I(y) \right\}}{W_I+\sum_{j\in\mathcal I_1}W_j}.
	\label{eq:rho_delta_trans_plus}
\end{align}
Replacing \eqref{eq:rho_trans} with \eqref{eq:rho_delta_trans_plus} yields a stability-aware variant of IA-WCP, which we call IA-WCP+. As shown in Appendix \ref{app:ia_wcp_plus_trans}, IA-WCP+ preserves the coverage requirement \eqref{eq:cov_trans_iawcp} under the mentioned technical assumptions.

\section{Experiments}
\label{sec:experiments}
We evaluate the proposed methods using a synthetic peer-interference model inspired by \citep{forastiere2021identification,chin2019regression}. Appendix \ref{app:add_experiments} provides additional results for this setting, along with experiments in a wireless traffic-slicing problem \citep{foukas2017network,bao2017prediction}.

\subsection{Setup}
We consider a network of $N=300$ units in which each unit $i$ has covariate $X_i=(U_i,G_i)$, where $U_i\sim\operatorname{Unif}[-1,1]$ and $G_i\sim\operatorname{Unif}\{1,\ldots,G\}$ is a group label. Treatments are drawn independently according to $T_i\mid X_i\sim\operatorname{Bernoulli}(\sigma(U_i/2))$, and the neighborhood of unit $i$ contains its same-group peers, $\mathcal N_i(X_{1:N})=\{j\ne i:G_j=G_i\}$. The exposure is given by the fraction of treated units in the same group, i.e., 
\begin{align}
 E_i=\frac{\sum_{j\in\mathcal N_i(X_{1:N})}T_j}{ |\mathcal N_i(X_{1:N})|},
 \end{align}
 with the convention that $E_i=0$ if $\mathcal N_i(X_{1:N})=\emptyset$. The potential outcomes are 
 \begin{align}
 Y_i(0)&=0,\\
 Y_i(1)&=\theta U_i+\left[\sigma(\lambda(E_i-0.5))-\sigma(-0.5 \lambda )\right] +\varepsilon_i,
 \label{eq:outcome_ablation}
\end{align}
where $\sigma(\cdot)$ denotes the sigmoid function and $\varepsilon_i\sim\mathcal N(0,\sigma_\varepsilon^2)$. This model captures a simple peer-effect mechanism in which each unit responds to the treatment rate in its group, with the response becoming stronger once peer adoption crosses 50\%. Such a mechanism can represent settings with social reinforcement or congestion effects. Unless specified otherwise, we instantiate the model with $G = 16$ groups, linear coefficient $\theta = 0.25$, sigmoid parameter $\lambda = 10$, and noise standard deviation $\sigma_\varepsilon = 0.05$. 

\subsection{Benchmarks}
\label{subsec:exp_benchmarks}
We use the absolute-residual score and compare WCP \citep{lei2021conformal}, ideal WCP based on the unobservable interventional $p$-value in \eqref{eq:conf_p_value_trans_int}, and the proposed IA-WCP and IA-WCP+. Ideal WCP serves as an oracle reference, and it is available here because the data-generating mechanism is known. We apply these calibration methods to three different base predictors. The first is the interference-free linear predictor $\hat Y(1)=\theta U$, which returns the mean outcome in the absence of interference. We also consider two fitted linear predictors, a covariate-only predictor  $\hat Y(1)=\hat\theta_U U+\hat \theta_0$, and an exposure-aware predictor $\hat Y(1)=\hat\theta_E  E+\hat\theta_U U+\hat \theta_0$. The two linear predictors are fitted on treated units from ten independent training networks.

For IA-WCP+, the stability margin is given by
\begin{align}
\Delta_j^I=\left(\frac{\lambda}{4}+|\hat\theta_E|\right)\left|\tilde E_j^I-E_j\right|,
\label{eq:synthetic_stability_margin}
\end{align}
where $\hat\theta_E$ is set to zero for predictors that do not use the exposure level $E$. 
\subsection{Results}

We first study the effect of neighborhood size by varying the number of groups $G$. As shown in Figure \ref{fig:main_synthetic_groups}, smaller values of $G$ produce larger neighborhoods and increase the fraction of calibration units affected by the intervention. In this regime, WCP undercovers with the miscoverage gap increasing as the number of groups $G$ decreases. For example, at $G=4$, its empirical coverage is approximately $0.75$, below the target level $0.8$. Across all values of $G$, ideal WCP and the proposed IA-WCP and IA-WCP+ attain or exceed the desired coverage level. However, because IA-WCP uses a worst-case correction, it becomes highly conservative for small values of $G$. In contrast, IA-WCP+ performance tracks that of ideal WCP and is substantially more efficient than IA-WCP.

\begin{figure}[t]
    \centering
    \includegraphics[width=0.475\textwidth]{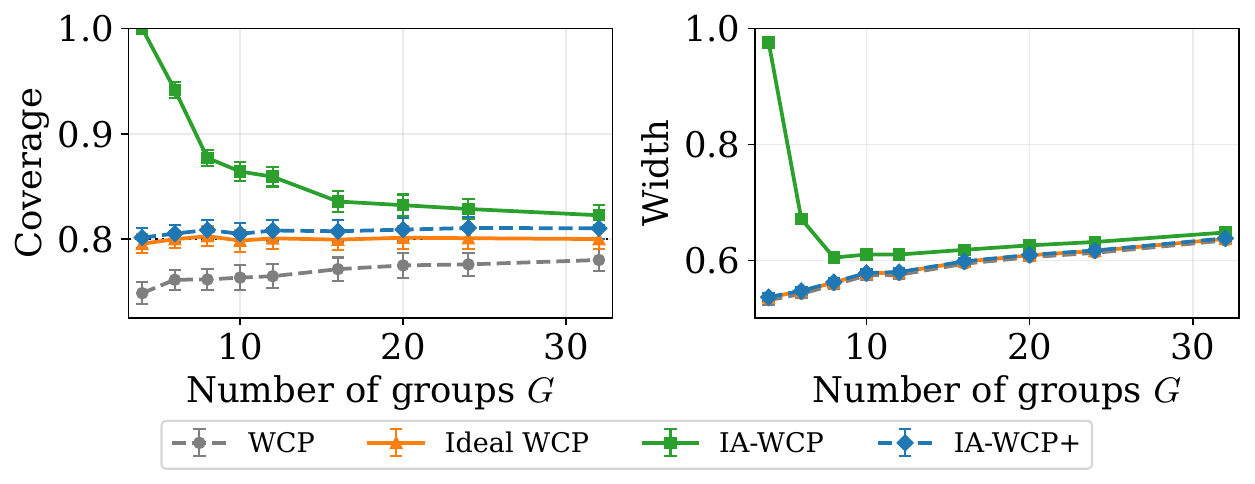}
    \caption{Empirical coverage and normalized prediction-set width of WCP, ideal WCP, IA-WCP, and IA-WCP+ as a function of the number of groups $G$. The target coverage is set to $1-\alpha=0.8$. Results are averaged over 200 runs.}
    \label{fig:main_synthetic_groups}
\end{figure}

We next investigate how the choice of base predictor affects coverage and efficiency. Figure \ref{fig:main_synthetic_predictors} shows that replacing the interference-free linear predictor with predictors fitted on data generated under interference substantially reduces prediction-set width. It also reduces the empirical miscoverage gap of WCP, whose coverage approaches that of ideal WCP when the predictor accounts for the exposure level. It is important to highlight that this behavior is specific to the present experiment, as conventional WCP lacks a general coverage guarantee under interference. The proposed methods retain their validity across all three predictors. Moreover, IA-WCP+ is noticeably narrower than IA-WCP when using the interference-free linear predictor or covariate-only predictor. When exposure $E$ is included in the predictor, the additional term $|\hat\theta_E|$ enlarges the stability margin \eqref{eq:synthetic_stability_margin}, causing IA-WCP+ to nearly coincide with IA-WCP.

\begin{figure}[t]  
    \centering
    \includegraphics[width=0.475\textwidth]{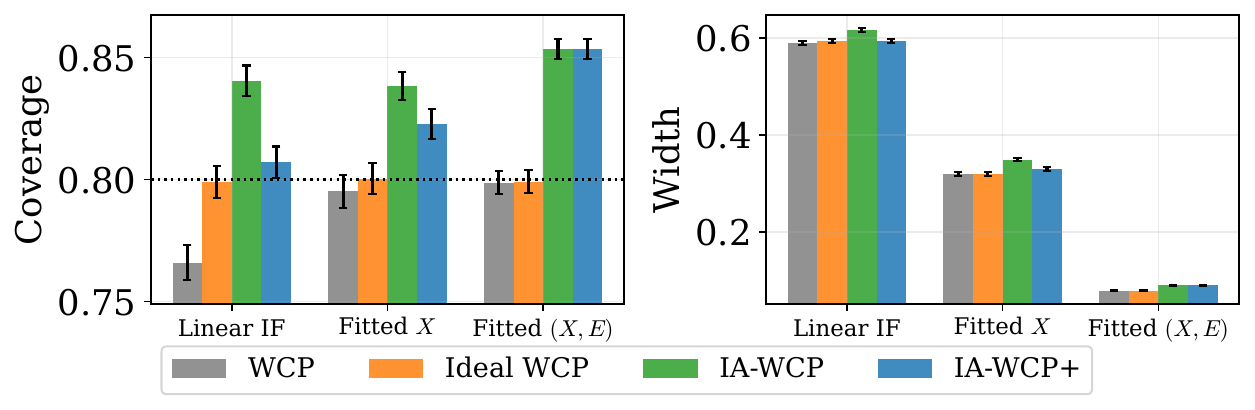}
    \caption{Empirical coverage and normalized prediction-set width of WCP, ideal WCP, IA-WCP, and IA-WCP+ when applied to the interference-free linear predictor (IF), the covariate-only predictor (Fitted X), and the exposure-aware predictor (Fitted (X,E)). The target coverage is set to $1-\alpha=0.8$. Results are averaged over 200 runs.}
    \label{fig:main_synthetic_predictors}
\end{figure}

\section{Conclusion}

We studied conformal counterfactual prediction for individual potential outcomes and treatment effects under interference in the inductive and transductive settings. In either setting, a unit's potential outcomes may depend on the covariates and treatment assignments of other units, creating a source of non-exchangeability that is not corrected by standard unit-level propensity weighting and may invalidate existing conformal methods.

To address this challenge, we have developed two interference-adjusted calibration methods. The first, IA-WCP, retains finite-sample marginal coverage under interference by adjusting the observational conformal $p$-value using a worst-case bound on the discrepancy between observational and interventional calibration scores. Although this correction remains valid under arbitrary score perturbations, it can produce conservative prediction sets. When bounds on intervention-induced score perturbations are available, IA-WCP+ uses them to sharpen the correction and improve efficiency while retaining the same coverage guarantee. Our experiments show that conventional WCP can remain close to nominal coverage when interference is weak, but it may undercover as interference becomes stronger. In contrast, IA-WCP and IA-WCP+ maintain the target coverage, with IA-WCP+ generally producing smaller prediction sets.

\bibliographystyle{apalike}
\bibliography{references}

\clearpage
\appendix
\onecolumn
\aistatstitle{Conformal Individual Treatment Effect Estimation under Networked Interference: Supplementary Materials}

\section{Related Work}
\label{app:related_work}
\paragraph{Conformal Inference for Treatment Effects.}
Conformal prediction (CP) is a calibration framework that transforms point predictions into prediction sets with finite-sample marginal coverage guarantees, provided that the calibration and test data are exchangeable \citep{shafer2008tutorial}. In counterfactual inference, however, exchangeability is typically violated because outcomes under a given treatment are observed only for units assigned to that treatment, inducing covariate shift between the calibration and test distributions. WCP addresses this shift by reweighting the calibration data \citep{tibshirani2019conformal}. Under the Stable Unit Treatment Value Assumption (SUTVA), unconfoundedness, and overlap, WCP constructs valid prediction sets for counterfactual outcomes and individual treatment effects (ITEs) under covariate-dependent treatment assignment \citep{lei2021conformal}.

Subsequent work has relaxed these assumptions in several directions. Sensitivity analysis has been introduced to address potential violations of unconfoundedness \citep{jin2023sensitivity,yin2024conformal}; interventional data have been leveraged to mitigate hidden confounding \citep{chen2024conformal,qchohi2026confounding}; and synthetic-powered prediction has been proposed to reduce the inefficiency caused by poor overlap \citep{farzaneh2025synthetic}. However, all these methods fundamentally rely on the no-interference assumption embedded in SUTVA. In this work, we relax this assumption and construct valid conformal prediction sets for counterfactual outcomes and ITEs under network interference.

\paragraph{Causal inference under interference.}
The causal-inference literature relaxes the no-interference component of SUTVA by imposing structure on how one unit's treatment may affect another unit's outcome. Early work studies direct and spillover effects under partial interference, where interference is restricted to known groups \citep{hudgens2008toward,tchetgen2012causal}. Exposure mappings extend this framework to richer interference patterns by summarizing the treatment assignments of other units through an exposure variable, thereby enabling the identification and estimation of causal effects under randomized or observational treatment assignment \citep{aronow2017,forastiere2021identification}. More recent machine-learning approaches exploit network structure to estimate heterogeneous treatment effects, exposure-response functions, and treatment policies in networked populations \citep{ma2021causal,huang2024modeling}. These works have primarily focused on defining causal estimands and estimating population-level or heterogeneous effects under interference. In contrast, we focus on constructing model-agnostic, finite-sample prediction sets for individual potential outcomes and treatment effects. 

\paragraph{Conformal Prediction on Networked Data.}
A growing body of literature adapts conformal prediction to graph-structured observations. For node-level prediction, existing methods use graph neighborhoods to reweight nonconformity scores \citep{clarkson2023distribution, zargarbashi2023conformal}, or learn topology-aware corrections that exploit permutation invariance to preserve marginal validity \citep{huang2023uncertainty}. Related work addresses the distribution shift induced when new nodes or edges enter an inductive graph \citep{h2024conformal}, and extends conformal uncertainty quantification to link prediction \citep{zhao2024conformalized}. In these settings, the graph structure is leveraged to learn node representations and improve calibration efficiency. However, the target remains an unobserved label or edge in the factual graph, rather than a counterfactual outcome under a treatment intervention. The closest work to ours is that of \cite{zhou2025adaptive}, which exploits a fixed transductive network to learn representations that account for hidden confounding, adapts conformal calibration to node degree, and estimates ITEs. Nevertheless, the framework in \citep{zhou2025adaptive} explicitly assumes no network interference, whereas in our setting, treatment spillover across neighboring nodes directly alters the target counterfactual outcomes.

\section{Examples of Permutation-equivariant Neighborhood and Exposure Maps}
\label{sec:perm_eq_maps}

We now list several permutation-equivariant neighborhood rules and deterministic exposure maps satisfying Assumption \ref{ass:eq_exp_map}. These maps induce degenerate exposure kernels of the form
\begin{align}
    P_{E\mid X,  \mathcal{L}}\left(\cdot\mid X_i,  \mathcal{L}_i(X_{1:N},T_{1:N})\right)=\delta_{e_i(X_{1:N},T^{-i}_{1:N})}(\cdot),
\end{align}
where $e_i(X_{1:N},T^{-i}_{1:N})$ denotes the exposure assigned to unit $i$, and $\delta_x(\cdot)$ denotes the Dirac measure centered at $x$. 

Throughout this section we use the same permutation convention as in Assumption \ref{ass:eq_exp_map}. For a permutation $\sigma$ of $\{1,\ldots,N\}$, we denote the permuted treatment and covariate vectors by
\begin{align}
	T_{\sigma(1:N)}=(T_{\sigma(1)},\ldots,T_{\sigma(N)}), \qquad X_{\sigma(1:N)}=(X_{\sigma(1)},\ldots,X_{\sigma(N)}).
\end{align}
Thus, under the relabeled vectors $(T_{\sigma(1:N)},X_{\sigma(1:N)})$, the unit at position $i$ corresponds to the original unit $\sigma(i)$. A deterministic exposure map is permutation-equivariant if, for every permutation $\sigma$ and every $i=1,\ldots,N$,
\begin{align}
	e_i(X_{\sigma(1:N)},T^{-i}_{\sigma(1:N)}) = e_{\sigma(i)}(X_{1:N},T^{-\sigma(i)}_{1:N}).
\end{align}

\begin{enumerate}
	\item \textbf{Global treated-count exposure.}
	Set $\mathcal N_i(X_{1:N})=\{1,\ldots,N\}\setminus\{i\}$ and define
	\begin{align}
		e_i(X_{1:N},T^{-i}_{1:N}) = \sum_{j\neq i}T_j.
	\end{align}
	Then, for any permutation $\sigma$,
	\begin{align}
		e_i(X_{\sigma(1:N)},T^{-i}_{\sigma(1:N)}) =  \sum_{\ell\neq i}T_{\sigma(\ell)} = \sum_{j\neq \sigma(i)}T_j = e_{\sigma(i)}(X_{1:N},T^{-\sigma(i)}_{1:N}).
	\end{align}

	\item \textbf{Fraction treated outside unit $i$.}
	Again set $\mathcal N_i(X_{1:N})=\{1,\ldots,N\}\setminus\{i\}$ and define
	\begin{align}
		e_i(X_{1:N},T^{-i}_{1:N}) = \frac{1}{N-1}\sum_{j\neq i}T_j.
	\end{align}
	Then, for any permutation $\sigma$,
	\begin{align}
		e_i(X_{\sigma(1:N)},T^{-i}_{\sigma(1:N)})= \frac{1}{N-1}\sum_{\ell\neq i}T_{\sigma(\ell)}= \frac{1}{N-1}\sum_{j\neq \sigma(i)}T_j = e_{\sigma(i)}(X_{1:N},T^{-\sigma(i)}_{1:N}).
	\end{align}

	\item \textbf{$k$-nearest-neighbor treated-count exposure.}
	Let $d:\mathcal X\times\mathcal X\to\mathbb R_+$ be a distance, and let $\mathcal N_i^{(k)}(X_{1:N})$ denote the set of indices corresponding to the $k$ nearest neighbors of unit $i$ among $\{1,\ldots,N\}\setminus\{i\}$. We assume either that ties occur with probability zero, or that they are resolved by a permutation-equivariant rule. Define
	\begin{align}
		e_i(X_{1:N},T^{-i}_{1:N}) = \sum_{j\in\mathcal N_i^{(k)}(X_{1:N})}T_j.
	\end{align}
	Under the permutation convention above, the nearest-neighbor set satisfies
	\begin{align}
		\mathcal N_i^{(k)}(X_{\sigma(1:N)}) = \{\sigma^{-1}(j):j\in\mathcal N_{\sigma(i)}^{(k)}(X_{1:N})\}.
	\end{align}
	Therefore,
	\begin{align}
		e_i(X_{\sigma(1:N)},T^{-i}_{\sigma(1:N)}) = \sum_{\ell\in\mathcal N_i^{(k)}(X_{\sigma(1:N)})} \hspace{-2em}T_{\sigma(\ell)} = \sum_{j\in\mathcal N_{\sigma(i)}^{(k)}(X_{1:N})} \hspace{-1.5em} T_j= e_{\sigma(i)}(X_{1:N},T^{-\sigma(i)}_{1:N}).
	\end{align}

	\item \textbf{Radius-based treated-count exposure.}
	Let $d:\mathcal X\times\mathcal X\to\mathbb R_+$ be a distance and fix $r>0$. Set
	\begin{align}
		\mathcal N_i(X_{1:N}) = \{j\ne i:d(X_i,X_j)\le r\},
	\end{align}
	and define
	\begin{align}
		e_i(X_{1:N},T^{-i}_{1:N}) = \sum_{j\in \mathcal N_i(X_{1:N})} T_j.
	\end{align}
	Then, for any permutation $\sigma$,
	\begin{align}
		e_i(X_{\sigma(1:N)},T^{-i}_{\sigma(1:N)})= \sum_{\ell\neq i} T_{\sigma(\ell)} \mathds 1\{d(X_{\sigma(i)},X_{\sigma(\ell)})\le r\} &= \sum_{j\neq \sigma(i)} T_j \mathds 1\{d(X_{\sigma(i)},X_j)\le r\} \nonumber\\
		&= e_{\sigma(i)}(X_{1:N},T^{-\sigma(i)}_{1:N}).
	\end{align}

	\item \textbf{Weighted exposure.}
	Let $a:\mathcal X\times\mathcal X\to\mathbb R$ be a measurable weight function. Set $\mathcal N_i(X_{1:N})=\{1,\ldots,N\}\setminus\{i\}$ and define
	\begin{align}
		e_i(X_{1:N},T^{-i}_{1:N}) = \sum_{j\neq i}a(X_i,X_j)T_j.
	\end{align}
	Then, for any permutation $\sigma$,
	\begin{align}
		e_i(X_{\sigma(1:N)},T^{-i}_{\sigma(1:N)}) = \hspace{-0.2em}\sum_{\ell\neq i} a(X_{\sigma(i)},X_{\sigma(\ell)})T_{\sigma(\ell)} &= \hspace{-0.2em}\sum_{j\neq \sigma(i)} a(X_{\sigma(i)},X_j)T_j = e_{\sigma(i)}(X_{1:N},T^{-\sigma(i)}_{1:N}).
	\end{align}

	\item \textbf{Threshold exposure.}
	Let $m\in\{1,\ldots,N-1\}$, set $\mathcal N_i(X_{1:N})=\{1,\ldots,N\}\setminus\{i\}$, and define
	\begin{align}
		e_i(X_{1:N},T^{-i}_{1:N}) = \mathds 1 \left\{ \sum_{j\neq i}T_j\ge m \right\}.
	\end{align}
	Since the treated count outside unit $i$ is permutation-equivariant,
	\begin{align}
		e_i(X_{\sigma(1:N)},T^{-i}_{\sigma(1:N)}) = \mathds 1 \left\{ \sum_{\ell\neq i}T_{\sigma(\ell)}\ge m \right\}= \mathds 1 \left\{ \sum_{j\neq \sigma(i)}T_j\ge m \right\} = e_{\sigma(i)}(X_{1:N},T^{-\sigma(i)}_{1:N}).
	\end{align}

\end{enumerate}

\section{Proof of Transductive IA-WCP Validity}
\label{app:transductive_validity}

We prove the result for the case in which the test unit $I$ satisfies $T_{I}=0$, so that the missing potential outcome is $Y_{I}(1)$. The case $T_{I}=1$ is analogous. All probabilities in the remainder of this proof are conditional on $T_I=0$.

We first establish weighted exchangeability of the interventional sample and the validity of the ideal WCP procedure under interference. We then compare the observational and interventional $p$-values under a joint coupling and conclude the validity proof of IA-WCP.

\subsection{Validity of the ideal WCP procedure}
\label{app:transductive_weighted_exchangeability}

Let $\tilde T^{I}_{1:N}$ denote the treatment vector obtained by setting the treatment of unit $I$ to one while leaving all other treatment assignments in $T_{1:N}$ unchanged, i.e.,
\begin{align}
	\tilde T^{I}_{I}=1, \qquad \tilde T^{I}_j=T_j,\quad j\neq I.
\end{align}
Under this intervention, let $\tilde E_j^{I}$ denote the exposure of unit $j$, and let $\tilde Y_j^{I}(1)$ denote the corresponding treated potential outcome. Define
\begin{align}
	\tilde Z_j = (X_j,\tilde E_j^{I},\tilde Y_j^{I}(1)), \qquad j=1,\dots,N.
\end{align}
We now show that, for any treatment vector $T_{1:N}$ such that $T_{I}=0$, the augmented interventional sample
\begin{align}
	\tilde{\mathcal D}_1 = (\tilde Z_j)_{j\in\mathcal I_1\cup\{I\}},
\end{align}
is weighted exchangeable.

\begin{lemma}[Weighted exchangeability of the interventional sample]
\label{lemma:w_exch:merged}
Fix an index $I$ and a treated calibration set $\mathcal I_1\subseteq \{1,\dots,N\}\setminus\{I\}$, and let $\mathcal{S}=\mathcal I_1\cup\{I\}$. Work conditionally on the observational assignment event
\begin{align}
	A_{\mathcal I_1,I} = \{T_j=1 \text{ for } j\in\mathcal I_1, T_{I}=0, T_j=0 \text{ for } j\notin \mathcal{S}\}.
\end{align}
The intervention is applied after conditioning on this observational assignment event.

Under Assumptions \ref{ass:prop_unit}--\ref{ass:consistency}, the conditional density of $(\tilde Z_j)_{j\in \mathcal{S}}$ admits the decomposition
\begin{align}
	p\bigl((\tilde Z_j)_{j\in \mathcal{S}}\mid A_{\mathcal I_1,I}\bigr) = cw(X_{I}) g\bigl((\tilde Z_j)_{j\in \mathcal{S}}\bigr),
\end{align}
where $c>0$,
\begin{align}
	w(x)=\frac{1-\pi(x)}{\pi(x)},
\end{align}
and $g$ is invariant under permutations of the arguments indexed by $\mathcal{S}$.
\end{lemma}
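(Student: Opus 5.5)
The plan is to write out the full joint density of all variables conditional on $A_{\mathcal I_1,I}$, integrate out everything except $(\tilde Z_j)_{j\in\mathcal S}$, and then read off the factorization.

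\textbf{Step 1: covariates given the assignment event.} By Bayes' rule and Assumption \ref{ass:prop_unit}, the density of $X_{1:N}$ given $A_{\mathcal I_1,I}$ is proportional to
\begin{align}
\prod_{j=1}^N p_X(X_j)\prod_{j\in\mathcal I_1}\pi(X_j)\,(1-\pi(X_I))\prod_{j\notin\mathcal S}(1-\pi(X_j)).
\end{align}
Multiplying and dividing by $\pi(X_I)$ rewrites this as
\begin{align}
w(X_I)\,\Bigl[\prod_{j\in\mathcal S}p_X(X_j)\pi(X_j)\Bigr]\Bigl[\prod_{j\notin\mathcal S}p_X(X_j)(1-\pi(X_j))\Bigr],
\end{align}
where the first bracket is symmetric in the indices of $\mathcal S$. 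The normalizing constant is $c$.

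\textbf{Step 2: interventional exposures and outcomes.} Under $\operatorname{do}(T_I=1)$, the treatment vector $\tilde T^I_{1:N}$ equals $1$ on $\mathcal S$ and $0$ off $\mathcal S$. This vector is a deterministic function of the set $\mathcal S$ and no longer singles out $I$.

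By Assumptions \ref{ass:exposure_mechanism} and \ref{ass:no_hidden_interference_confounding}, given $X_{1:N}$ the joint density of $(\tilde E_j^I,\tilde Y_j^I(1))_{j\in\mathcal S}$ is
\begin{align}
\prod_{j\in\mathcal S}P_{E\mid X,\mathcal L}\bigl(\tilde E_j^I\mid X_j,\mathcal L_j(X_{1:N},\tilde T^I_{1:N})\bigr)\,P_{Y(1)\mid X,E}\bigl(\tilde Y^I_j(1)\mid X_j,\tilde E^I_j\bigr).
\end{align}
Here the exposures of units outside $\mathcal S$ are simply marginalized. Consistency (Assumption \ref{ass:consistency}) is not even needed for the interventional variables, beyond identifying $\tilde Y$ as the $Y(1)$ potential outcome.

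\textbf{Step 3: integrate out the outside covariates.} Define $g$ as the integral over $(X_j)_{j\notin\mathcal S}$ of the symmetric bracket from Step 1 times the product from Step 2. The key remaining task, and the main obstacle, is to show that $g$ is invariant under any permutation $\sigma$ of $\mathcal S$, extended by the identity off $\mathcal S$.

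To do this, I will apply Assumption \ref{ass:eq_exp_map} with this $\sigma$. Because $\tilde T^I$ is constant on $\mathcal S$ and $\sigma$ fixes the complement, we have $\tilde T^I_{\sigma(1:N)}=\tilde T^I_{1:N}$. Equivariance of the neighborhoods then gives
\begin{align}
\mathcal L_j(X_{\sigma(1:N)},\tilde T^I_{1:N})=\mathcal L_{\sigma(j)}(X_{1:N},\tilde T^I_{1:N}),
\end{align}
as multisets of (covariate, treatment) pairs. Consequently, relabeling the arguments $\tilde Z_j\mapsto\tilde Z_{\sigma(j)}$ only permutes the factors of the product over $\mathcal S$. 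The integrand is therefore unchanged, and so is $g$ after integrating the fixed outside covariates. The care needed here is to verify that the multiset definition \eqref{eq:local_configuration} makes the exposure kernel depend only on $X_j$ and $\mathcal L_j$, and that no term depends on the label $I$.

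Combining the three steps yields $p(\cdot\mid A_{\mathcal I_1,I})=c\,w(X_I)\,g(\cdot)$ with $g$ permutation invariant. The one non-symmetric factor, $w(X_I)$, comes entirely from the observational treatment of $I$ being $0$ rather than $1$.
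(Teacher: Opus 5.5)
Your proposal is correct and follows essentially the same route as the paper: Bayes' rule with the multiply-and-divide by $\pi(X_I)$ to isolate $w(X_I)$, product factorization of the interventional exposures and outcomes under $\tilde T^I_{1:N}$, and permutation invariance of $g$ via Assumption~\ref{ass:eq_exp_map}, using that $\tilde T^I$ is constant on $\mathcal S$. You integrate the outside covariates jointly with the exposure kernels that depend on them, which is slightly more careful than the paper's intermediate remark that they contribute only a constant.
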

\begin{proof}
By Assumption \ref{ass:prop_unit},
\begin{align}
    \Pr(A_{\mathcal I_1,I}\mid X_{1:N}) = \prod_{j\in\mathcal I_1}\pi(X_j) \bigl(1-\pi(X_I)\bigr) \prod_{j\notin\mathcal S}\bigl(1-\pi(X_j)\bigr).
\end{align}
Therefore,
\begin{align}
	p(X_{1:N}\mid A_{\mathcal I_1,I}) &= \frac{ \Pr(A_{\mathcal I_1,I}\mid X_{1:N})p_X^{\otimes N}(X_{1:N}) }{ \Pr(A_{\mathcal I_1,I}) } \nonumber\\
	&= c_0 \prod_{j\in\mathcal I_1}\pi(X_j)p_X(X_j) \bigl(1-\pi(X_{I})\bigr)p_X(X_{I}) \prod_{j\notin \mathcal{S}}\bigl(1-\pi(X_j)\bigr)p_X(X_j),
\end{align}
where $c_0=1/\Pr(A_{\mathcal I_1,I})$. Multiplying and dividing the factor associated with
$I$ by $\pi(X_{I})$ gives
\begin{align}
	p(X_{1:N}\mid A_{\mathcal I_1,I}) = c_0 w(X_{I}) \prod_{j\in \mathcal{S}}\pi(X_j)p_X(X_j) \prod_{j\notin \mathcal{S}}\bigl(1-\pi(X_j)\bigr)p_X(X_j).
	\label{eq:cond_cov_factorization:merged}
\end{align}

Integrating out the covariates indexed by $\mathcal S^c$ contributes only a constant independent of $(X_j)_{j\in\mathcal S}$. The resulting marginal density is therefore proportional to
\begin{align}
    w(X_I)\prod_{j\in\mathcal S}\pi(X_j)p_X(X_j),
\end{align}
which proves the weighted exchangeability of covariates conditioned on the event $A_{\mathcal I_1,I}$.

We now define the interventional variables using the intervened treatment vector $\tilde T^{I}$. Conditionally on $(X_{1:N},\tilde T^{I})$, Assumption \ref{ass:exposure_mechanism} gives
\begin{align}
	p(\tilde E_{1:N}^{I}\mid X_{1:N},A_{\mathcal I_1,I},\operatorname{do}(T_I=1)) = \prod_{j=1}^N p_{E\mid X,  \mathcal{L}} \left( \tilde E_j^{I} \mid X_j,  \mathcal{L}_j(X_{1:N},\tilde T^I_{1:N}) \right).
	\label{eq:interventional_exposure_factorization:merged}
\end{align}
Similarly, by Assumption \ref{ass:no_hidden_interference_confounding},
\begin{align}
	p\left( \tilde Y_{1:N}^{I}(1) \mid X_{1:N},\tilde E_{1:N}^{I},A_{\mathcal I_1,I},\operatorname{do}(T_I=1) \right) = \prod_{j=1}^N p_{Y(1)\mid X,E} \left( \tilde Y_j^{I}(1) \mid X_j,\tilde E_j^{I} \right).
	\label{eq:interventional_outcome_factorization:merged}
\end{align}

Combining \eqref{eq:cond_cov_factorization:merged}, \eqref{eq:interventional_exposure_factorization:merged}, and \eqref{eq:interventional_outcome_factorization:merged}, and then marginalizing over all variables indexed by $\mathcal{S}^c=\{1,\dots,N\}\setminus \mathcal{S}$, yields
\begin{align}
	p\bigl((\tilde Z_j)_{j\in \mathcal{S}}\mid A_{\mathcal I_1,I}\bigr) = cw(X_{I}) g\bigl((\tilde Z_j)_{j\in \mathcal{S}}\bigr),
\end{align}
for a normalizing constant $c>0$, where $g$ collects all remaining factors after the extraction of $w(X_{I})$.

It remains to show that $g$ is invariant under permutations of its arguments. Let $\sigma$ be any permutation of the indices in $\mathcal{S}$, extended to $\{1,\dots,N\}$ by setting $\sigma(k)=k$ for $k\notin \mathcal{S}$. The factors
\begin{align}
	\prod_{j\in \mathcal{S}}\pi(X_j)p_X(X_j)
\end{align}
and
\begin{align}
	\prod_{j\in \mathcal{S}} p_{Y(1)\mid X,E} \left( \tilde Y_j^{I}(1) \mid X_j,\tilde E_j^{I} \right)
\end{align}
are invariant under permutations of the triples
\begin{align}
	(X_j,\tilde E_j^{I},\tilde Y_j^{I}(1)), \qquad j\in \mathcal{S}.
\end{align}

Finally, under $\tilde T^{I}$, all units in $\mathcal{S}$ have treatment equal to one. Hence a permutation of the labels in $\mathcal{S}$ only relabels units with the same intervened treatment. By Assumption \ref{ass:eq_exp_map}, the neighborhood rule is permutation equivariant and the exposure kernel is invariant to the ordering of the neighbors, so the joint exposure density is unchanged under the same simultaneous permutation of the covariates, exposures, and outcomes indexed by $\mathcal{S}$. The variables indexed by $\mathcal{S}^c$ are integrated out, and the integration measure is invariant under this relabeling.

Therefore $g$ is invariant under permutations of the arguments $(\tilde Z_j)_{j\in \mathcal{S}}$, proving the claimed decomposition.
\end{proof}

By Lemma \ref{lemma:w_exch:merged}, conditionally on $A_{\mathcal I_1,I}$, the augmented interventional sample is weighted exchangeable. Therefore, the standard weighted conformal validity argument \citep{tibshirani2019conformal} gives
\begin{align}
    \Pr\left[ p_{I,\tilde Y_I^{I}(1)}^{\mathrm{int}} \le \alpha \middle| A_{\mathcal I_1,I} \right] \le \alpha.
    \label{eq:interventional_superuniform_conditional:merged}
\end{align}

This establishes conditional validity at the interventional target outcome $\widetilde Y_I^I(1)$. Because $I\notin\mathcal N_I(X_{1:N})$, changing $T_I$ does not alter the exposure law of the target unit. Under Assumptions \ref{ass:exposure_mechanism} and \ref{ass:no_hidden_interference_confounding}, conditionally on $A_{\mathcal I_1,I}$, we may therefore couple the target variables so that
\begin{align}
    (\widetilde E_I^I,\widetilde Y_I^I(1))
    =
    (E_I,Y_I(1))
    \qquad\text{a.s.},
\end{align}
while preserving their joint law with the interventional calibration scores. Consequently,
\begin{align}
    \Pr\left[
        p_{I,Y_I(1)}^{\mathrm{int}}\le\alpha
        \,\middle|\,
        A_{\mathcal I_1,I}
    \right]
    \le \alpha.
\end{align}
For each fixed $I$, the events $A_{\mathcal I_1,I}$, indexed by $\mathcal I_1\subseteq\{1,\ldots,N\}\setminus\{I\}$, form a partition of $\{T_I=0\}$. Applying the law of total probability over this partition and then averaging over the selected index $I$ proves Lemma \ref{thm:ideal_trans_cov}.

\subsection{Coupling comparison}
\label{app:transductive_coupling}

For this proof, we consider the exposure-law affected set
\begin{align}
    \mathcal H_I^{\mathrm{ex}} = \Bigl\{j\in\mathcal I_1: &P_{E\mid X,  \mathcal{L}} \left(\cdot\mid X_j,  \mathcal{L}_j(X_{1:N},T_{1:N})\right) \neq P_{E\mid X,  \mathcal{L}} \left(\cdot\mid X_j,  \mathcal{L}_j(X_{1:N},\tilde T^I_{1:N})\right) \Bigr\},
    \label{eq:exact_affected_trans:merged}
\end{align}
and write $\mathcal U_I^{\mathrm{ex}}=\mathcal I_1\setminus\mathcal H_I^{\mathrm{ex}}$. Note that $\mathcal H_I^{\mathrm{ex}}\subseteq\mathcal H_I$ because the conditional exposure law of unit $j$ is unchanged when $I\notin\mathcal N_j(X_{1:N})$. Thus, proving the comparison first with the set $\mathcal H_I^{\mathrm{ex}}$ yields the main-text bound by enlarging the index set of the nonnegative correction sum from $\mathcal H_I^{\mathrm{ex}}$ to $\mathcal H_I$.

\begin{lemma}
	\label{lem:trans_p_value_comparison:merged}
	Under Assumptions \ref{ass:exposure_mechanism} and \ref{ass:no_hidden_interference_confounding}, conditionally on $T_I=0$, the observational conformal $p$-value in \eqref{eq:conf_p_value_trans} and the interventional conformal $p$-value in \eqref{eq:conf_p_value_trans_int} admit a coupling such that, simultaneously for every $y\in\mathcal Y$,
	\begin{align}
		p_{I,y}^{\mathrm{int}} \le p_{I,y}^{\mathrm{obs}}+\rho^{\rm ex}_{I,y} \qquad\text{a.s.},
	\end{align}
	where 
	\begin{align}
		\rho^{\rm ex}_{I,y} = \frac{ \sum_{j\in\mathcal H_I^{\mathrm{ex}}} W_j\mathds 1\{V_j^{\mathrm{obs}}<V_I(y)\} }{ W_I+\sum_{j\in\mathcal I_1}W_j }.
	\end{align}
\end{lemma}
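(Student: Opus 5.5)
We construct the coupling conditionally on $(X_{1:N},T_{1:N})$ with $T_I=0$, that is, on each event $A_{\mathcal I_1,I}$ together with the covariates. Then we compare the two $p$-values term by term. The weights $W_j$ depend only on $X_j$, and $\mathcal I_1$, $\mathcal H_I^{\mathrm{ex}}$ are determined by $(X_{1:N},T_{1:N})$. Hence the two $p$-values share the same denominator $W_I+\sum_{j\in\mathcal I_1}W_j$ and the same leading term $W_I$. They can differ only through the indicators $\mathds 1\{V_j^{\mathrm{obs}}\ge V_I(y)\}$ versus $\mathds 1\{V_j^{\mathrm{int}}\ge V_I^{\mathrm{int}}(y)\}$, and through the test score.

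\textbf{Step 1 (coupling of exposures).} Conditionally on $(X_{1:N},T_{1:N})$, Assumption \ref{ass:exposure_mechanism} makes the observational exposures independent across units, with laws $P_{E\mid X,\mathcal L}(\cdot\mid X_j,\mathcal L_j(X_{1:N},T_{1:N}))$. The interventional exposures are likewise independent, with $\tilde T^I_{1:N}$ in place of $T_{1:N}$. For each $j$ outside $\mathcal H_I^{\mathrm{ex}}$, including $j=I$ (since $I\notin\mathcal N_I$) and untreated units $j\notin\mathcal I_1$, the two laws coincide, and we set $\tilde E_j^I=E_j$. For $j\in\mathcal H_I^{\mathrm{ex}}$ we draw $\tilde E_j^I$ independently from its interventional law. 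Any coupling works here, since no control is claimed for these units. Independence across units keeps both joint marginals correct.

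\textbf{Step 2 (coupling of outcomes).} Given all covariates, exposures and treatments, Assumption \ref{ass:no_hidden_interference_confounding} makes each $Y_j(1)$ depend only on $(X_j,E_j)$ through the common kernel $P_{Y(1)\mid X,E}$, independently across units. For every $j$ with $\tilde E_j^I=E_j$ we set $\tilde Y_j^I(1)=Y_j(1)$. For $j\in\mathcal H_I^{\mathrm{ex}}$ we draw $\tilde Y_j^I(1)$ from $P_{Y(1)\mid X,E}(\cdot\mid X_j,\tilde E_j^I)$ independently. This gives a valid coupling in which $V_j^{\mathrm{int}}=V_j^{\mathrm{obs}}$ for all $j\in\mathcal U_I^{\mathrm{ex}}$, and $\tilde E_I^I=E_I$, so $V_I^{\mathrm{int}}(y)=V_I(y)$ for every $y$ simultaneously. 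The coupling does not depend on $y$, which is what yields the simultaneous almost-sure statement.

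\textbf{Step 3 (comparison).} Split the numerator sum of $p^{\mathrm{int}}_{I,y}$ over $\mathcal U_I^{\mathrm{ex}}$ and $\mathcal H_I^{\mathrm{ex}}$. On $\mathcal U_I^{\mathrm{ex}}$ the terms equal their observational counterparts. On $\mathcal H_I^{\mathrm{ex}}$ we use the trivial bound $\mathds 1\{V_j^{\mathrm{int}}\ge V_I(y)\}\le 1=\mathds 1\{V_j^{\mathrm{obs}}\ge V_I(y)\}+\mathds 1\{V_j^{\mathrm{obs}}<V_I(y)\}$. Summing with the weights $W_j$ and dividing by the common denominator gives $p^{\mathrm{int}}_{I,y}\le p^{\mathrm{obs}}_{I,y}+\rho^{\mathrm{ex}}_{I,y}$.

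The main obstacle is Step 1. The issue is making sure the constructed joint law has the correct marginals for the full vectors, and not just unit by unit. This relies on conditional independence of exposures given $(X_{1:N},T_{1:N})$, and on the intervention being applied after conditioning on $A_{\mathcal I_1,I}$, so that the covariates and treatments of other units are shared. I would write the joint kernel explicitly as a product of per-unit couplings (diagonal couplings for unaffected units, product couplings for affected ones) and verify both marginals by integrating out the other copy. I would then note that the interventional calibration law used in Lemma \ref{thm:ideal_trans_cov} is exactly this marginal.
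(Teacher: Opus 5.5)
Your proposal is correct and follows the paper's proof essentially step for step. Both couple the target unit identically (using $I\notin\mathcal N_I(X_{1:N})$) and do the same for the exposure-law-unaffected treated units $\mathcal U_I^{\mathrm{ex}}$. Both use an arbitrary coupling with correct marginals on $\mathcal H_I^{\mathrm{ex}}$, assemble a product coupling via the conditional independence in Assumptions \ref{ass:exposure_mechanism} and \ref{ass:no_hidden_interference_confounding}, and finish with the bound $\mathds 1\{V_j^{\mathrm{int}}\ge V_I(y)\}-\mathds 1\{V_j^{\mathrm{obs}}\ge V_I(y)\}\le\mathds 1\{V_j^{\mathrm{obs}}<V_I(y)\}$ on $\mathcal H_I^{\mathrm{ex}}$.

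One small slip: a control unit $j\notin\mathcal I_1\cup\{I\}$ with $I\in\mathcal N_j(X_{1:N})$ generally does change its exposure law under $\operatorname{do}(T_I=1)$, so setting $\tilde E_j^I=E_j$ for it is not a valid coupling. This is harmless, because such units enter neither $p$-value. You can simply leave them out of the construction, and by conditional independence the marginals of the relevant sub-vectors indexed by $\mathcal I_1\cup\{I\}$ are unaffected.
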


\begin{proof}
	We construct the coupling conditionally on $(X_{1:N},T_{1:N},I)$, with $T_I=0$. Since the exposure of a unit does not depend on its own treatment,
	\begin{align}
		E_I\mid X_{1:N},T_{1:N} \overset{d}{=} \tilde E_I^{I}\mid X_{1:N},\tilde T^I_{1:N},
	\end{align}
	where $\overset{d}{=}$ denotes equality in distribution.
	Moreover, for every $j\in\mathcal U_I^{\mathrm{ex}}$, the definition of $\mathcal U_I^{\mathrm{ex}}$ gives
	\begin{align}
		E_j\mid X_{1:N},T_{1:N} \overset{d}{=} \tilde E_j^{I}\mid X_{1:N},\tilde T^I_{1:N}.
	\end{align}
	We may therefore couple these exposures so that
	\begin{align}
		E_I = \tilde E_I^{I}, \quad \text{ and } \quad E_j = \tilde E_j^{I}, \qquad j\in\mathcal U_I^{\mathrm{ex}},
	\end{align}
	almost surely.
	 Conditional on these coupled exposures, the corresponding treated potential outcomes have the same conditional distribution $P_{Y(1)\mid X,E}$. Hence, they may be coupled so that 
	\begin{align}
		Y_I(1) = \tilde Y_I^{I}(1), \quad \text{ and } \quad Y_j(1) = \tilde Y_j^{I}(1), \qquad j\in\mathcal U_I^{\mathrm{ex}},
	\end{align}
	almost surely.  For every $j\in\mathcal H_I^{\mathrm{ex}}$, choose any coupling with the correct observational and interventional marginals.

	By Assumptions \ref{ass:exposure_mechanism} and \ref{ass:no_hidden_interference_confounding}, the unit-specific exposure and outcome variables are conditionally independent across units. These couplings can therefore be combined into a product coupling that preserves the observational and interventional score-vector marginals and their joint laws with the corresponding target potential outcome. Under this coupling, for $y\in\mathcal Y$,
	\begin{align}
		V_I^{\mathrm{int}}(y) = V_I(y), \quad \text{ and } \quad V_j^{\mathrm{int}} = V_j^{\mathrm{obs}}, \qquad j\in\mathcal U_I^{\mathrm{ex}},
	\end{align}
	almost surely. The conformal weights are also unchanged because the intervention does not alter the covariates.

	Consequently, the unaffected units cancel from the difference between the two $p$-values and
	\begin{align}
		p_{I,y}^{\mathrm{int}}-p_{I,y}^{\mathrm{obs}} &= \frac{\sum_{j\in\mathcal H_I^{\mathrm{ex}}} W_j \left[ \mathds 1\{V_j^{\mathrm{int}}\ge V_I(y)\} - \mathds 1\{V_j^{\mathrm{obs}}\ge V_I(y)\} \right] }{ W_I+\sum_{j\in\mathcal I_1}W_j }.
	\end{align}
	Note that for every $j\in\mathcal H_I^{\mathrm{ex}}$,
	\begin{align}
		\mathds 1\{V_j^{\mathrm{int}}\ge V_I(y)\} - \mathds 1\{V_j^{\mathrm{obs}}\ge V_I(y)\} \le \mathds 1\{V_j^{\mathrm{obs}}<V_I(y)\},
	\end{align}
	and therefore,
	\begin{align}
		p_{I,y}^{\mathrm{int}}-p_{I,y}^{\mathrm{obs}} \le \frac{ \sum_{j\in\mathcal H_I^{\mathrm{ex}}} W_j\mathds 1\{V_j^{\mathrm{obs}}<V_I(y)\} }{ W_I+\sum_{j\in\mathcal I_1}W_j } \le \frac{ \sum_{j\in\mathcal H_I} W_j\mathds 1\{V_j^{\mathrm{obs}}<V_I(y)\} }{ W_I+\sum_{j\in\mathcal I_1}W_j } =\rho_{I,y}.
	\end{align}
	Hence,
	\begin{align}
		p_{I,y}^{\mathrm{int}}\le p_{I,y}^{\mathrm{obs}}+\rho^{\rm ex}_{I,y} \le p_{I,y}^{\mathrm{obs}}+\rho_{I,y} \qquad\text{a.s.}
	\end{align}
	Since the coupling does not depend on $y$, this inequality holds simultaneously for every $y\in\mathcal Y$.
\end{proof}

\subsection{Coverage of IA-WCP}
\label{app:transductive_coverage}

Use the joint coupling constructed in Lemma \ref{lem:trans_p_value_comparison:merged}. It preserves the joint law of the interventional scores and target outcome and satisfies
\begin{align}
    Y_I(1)=\tilde Y_I^I(1), \qquad
    p_{I,Y_I(1)}^{\mathrm{int}}=p_{I,\tilde Y_I^I(1)}^{\mathrm{int}}
\end{align}
almost surely.

For each fixed $I$, the events $A_{\mathcal I_1,I}$, indexed by the possible treated sets $\mathcal I_1\subseteq\{1,\ldots,N\}\setminus\{I\}$, partition the event $\{T_I=0\}$. Averaging \eqref{eq:interventional_superuniform_conditional:merged} over these events and over the randomly selected index $I$, and using the target equality under the coupling, yields
\begin{align}
    \Pr\left[ p_{I,Y_I(1)}^{\mathrm{int}} \le \alpha \middle| T_I=0 \right] \le \alpha.
    \label{eq:interventional_superuniform:merged}
\end{align}

By Lemma \ref{lem:trans_p_value_comparison:merged}, under the same coupling, almost surely and simultaneously for every $y\in\mathcal Y$,
\begin{align}
	p_{I,y}^{\mathrm{int}} \le p_{I,y}^{\mathrm{obs}}+\rho_{I,y}.
\end{align}
Since overlap ensures $p_{I,y}^{\mathrm{obs}}>0$, rejection is impossible when $\rho_{I,y}\ge\alpha$. Otherwise, rejection implies $p_{I,y}^{\mathrm{obs}}+\rho_{I,y}\le\alpha$. Therefore,
\begin{align}
	\left\{ p_{I,y}^{\mathrm{obs}} \le (\alpha-\rho_{I,y})_+ \right\} \subseteq \left\{ p_{I,y}^{\mathrm{int}} \le \alpha \right\}.
\end{align}
Applying this inclusion at $y=Y_{I}(1)$ gives
\begin{align}
	\Pr \left[ p_{I,Y_{I}(1)}^{\mathrm{obs}} \le (\alpha-\rho_{I,Y_{I}(1)})_+ \middle| T_I=0 \right] &\le \Pr \left[ p_{I,Y_{I}(1)}^{\mathrm{int}} \le \alpha \middle| T_I=0 \right]\le \alpha.
\end{align}
By definition of the IA-WCP prediction set in \eqref{eq:conf_set_trans_adj},
\begin{align}
	Y_{I}(1)\notin\Gamma_{I}^{\mathrm{IA-WCP}} \quad \Longleftrightarrow \quad p_{I,Y_{I}(1)}^{\mathrm{obs}} \le (\alpha-\rho_{I,Y_{I}(1)})_+.
\end{align}
Hence,
\begin{align}
	\Pr \left[ Y_{I}(1)\notin\Gamma_{I}^{\mathrm{IA-WCP}} \middle| T_I=0 \right] \le \alpha.
\end{align}
This proves Theorem \ref{thm:cov_trans_iawcp}.

\section{Coverage Tightness of the Correction Factor}
\label{sec:trans_equal_coverage}

We give an example in which the IA-WCP set in \eqref{eq:conf_set_trans_adj} is tight in the sense that it achieves the same marginal coverage as the ideal WCP set in \eqref{eq:ideal_WCP_set}. Fix a population size $N\ge 2$ and set the miscoverage level to $\alpha=\alpha_N=1/N$. For a constant propensity $\pi\in(0,1)$, covariates and treatments are generated as
\begin{align}
	\label{eq:equal_coverage_x_t}
	X_i&\overset{\mathrm{iid}}{\sim}\operatorname{Unif}(0,1), & T_i&\overset{\mathrm{iid}}{\sim}\operatorname{Bernoulli}(\pi), \qquad i\in\{1,\dots,N\}.
\end{align}
Define the permutation-equivariant neighborhood rule
\begin{align}
    \mathcal N_i(X_{1:N})=\{\ell\ne i:X_\ell<X_i\}.
\end{align}
We consider an exposure mechanism in which the exposure $E_i$ of unit $i$ is determined by the number of control units in this neighborhood, i.e.,
\begin{align}
	E_i &= \sum_{\ell\ne i}(1-T_\ell)\mathds 1\{X_\ell<X_i\}.
	\label{eq:equal_coverage_exposure}
\end{align}
The potential outcomes are defined as
\begin{align}
	Y_i(1) &= \begin{cases} X_i, & E_i=0,\\ -1, & E_i\ge 1, \end{cases} & Y_i(0)&=0.
	\label{eq:equal_coverage_outcomes}
\end{align}
The deterministic exposure and outcome kernels also satisfy the conditional-independence requirements in Assumptions \ref{ass:exposure_mechanism} and \ref{ass:no_hidden_interference_confounding}. Take $\mathcal Y=[-1,1]$ and use the nonconformity score
\begin{align}
	S(x,e,y)=y.
	\label{eq:equal_coverage_score}
\end{align}

We use the ideal WCP set $\tilde\Gamma_I^{\mathrm{WCP}}$ in \eqref{eq:ideal_WCP_set} and the IA-WCP set $\Gamma_I^{\mathrm{IA-WCP}}$ in \eqref{eq:conf_set_trans_adj}, both evaluated at miscoverage level $\alpha_N$.

\begin{proposition}
	\label{prop:trans_equal_coverage}
	Under the construction in \eqref{eq:equal_coverage_x_t}--\eqref{eq:equal_coverage_score}, conditionally on $T_I=0$, the adjusted and interventional prediction sets have the same marginal coverage,
	\begin{align}
		\Pr\left[Y_I(1)\in\tilde\Gamma_I^{\mathrm{WCP}}\mid T_I=0\right] = \Pr\left[Y_I(1)\in\Gamma_I^{\mathrm{IA-WCP}}\mid T_I=0\right] = 1-\frac{\pi^{N-1}}{N} \ge 1-\alpha_N .
		\label{eq:equal_coverage_result}
	\end{align}
\end{proposition}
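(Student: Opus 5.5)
The plan is to compute both miscoverage probabilities exactly by conditioning on the treated set $\mathcal I_1$. First I would record the features of the construction that make the computation tractable. The propensity is constant, so every weight equals $W=(1-\pi)/\pi$ and cancels. Writing $n=|\mathcal I_1|$, both $p$-values then take the form $(1+\#\{j\in\mathcal I_1:V_j\ge y\})/(n+1)$, and the correction becomes $\rho_{I,y}=\#\{j\in\mathcal H_I:V_j^{\mathrm{obs}}<y\}/(n+1)$. Because $S(x,e,y)=y$, each score is simply an outcome.

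Next I would track the target. Since $I\notin\mathcal N_I$, the exposure $E_I$ (the number of control units below $X_I$) is the same under the observational and interventional assignments. Hence $Y_I(1)=\tilde Y_I^I(1)$, which equals $X_I$ if no control lies below $X_I$ and equals $-1$ otherwise. Finally, $\mathcal H_I=\{j\in\mathcal I_1: X_j>X_I\}$.

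The key step is a case split on whether $n<N-1$ or $n=N-1$, using $\alpha_N=1/N$.

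\emph{Case $n<N-1$.} Both $p$-values are at least $1/(n+1)>1/N$. So neither $p^{\mathrm{int}}\le\alpha_N$ nor $p^{\mathrm{obs}}\le(\alpha_N-\rho)_+\le\alpha_N$ can occur, and both sets cover.

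\emph{Case $n=N-1$.} All units other than $I$ are treated.
\begin{itemize}
\item Interventional side: after setting $T_I=1$, every exposure is $0$. Thus $\tilde Y_j^I(1)=X_j$ and $Y_I(1)=X_I$. Miscoverage happens iff no $X_j\ge X_I$, i.e.\ iff $X_I$ is the maximum, which has probability $1/N$ by exchangeability of the continuous i.i.d.\ covariates.
\item Observational side: treated $j$ with $X_j<X_I$ have $E_j=0$ and score $X_j<X_I$. Treated $j$ with $X_j>X_I$ have $I$ as a control below them, so $E_j\ge1$ and their score is $-1<X_I$ a.s. Hence $p^{\mathrm{obs}}_{I,X_I}=1/N$ and $\rho_{I,X_I}=m/N$ with $m=|\{j:X_j>X_I\}|$. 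Miscoverage $1/N\le(1/N-m/N)_+$ holds iff $m=0$, which is exactly the same event as on the interventional side.
\end{itemize}

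Combining the two cases, conditionally on $T_I=0$ the miscoverage probability of either set is $\Pr[n=N-1\mid T_I=0]\cdot\frac1N=\pi^{N-1}/N$. This uses the independence of $(T_j)_{j\ne I}$ from $X_{1:N}$ under the constant propensity. The final inequality $\pi^{N-1}/N\le 1/N=\alpha_N$ is immediate.

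I expect the main obstacle to be bookkeeping rather than depth. Specifically, I must check that ties and the boundary value $X_I>0$ are null events, so that the $-1$ scores are strictly below the target. I must also carefully verify that the affected set $\mathcal H_I$ coincides with the units above $X_I$ whose scores fall below the target, so that $\rho$ exactly cancels the $p$-value excess except on the event $\{X_I=\max\}$.
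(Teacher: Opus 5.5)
Your proposal is correct and follows the paper's proof essentially step for step. You use the same split into the event that $I$ is the only control and its complement. On the first event, $p^{\mathrm{obs}}_{I,Y_I(1)}=1/N$, $\rho_{I,Y_I(1)}=m/N$ and $p^{\mathrm{int}}_{I,Y_I(1)}=(m+1)/N$, so both procedures miscover exactly when $X_I$ is the maximum. On the complement, $|\mathcal I_1|\le N-2$, so both $p$-values exceed $1/N$ and both sets cover. Your explicit note that the treated set is independent of $X_{1:N}$ under constant propensity is what justifies the product $\pi^{N-1}\cdot\frac{1}{N}$, which the paper uses only implicitly.
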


\begin{proof}
	Let $I$ be drawn uniformly from $\{1,\dots,N\}$ and condition on $T_I=0$.  Let
	\begin{align}
		\mathcal A_I = \{T_j=1\text{ for every }j\ne I\}
	\end{align}
	be the event that the target is the only control.  Conditional on $T_I=0$,
	\begin{align}
		\Pr(\mathcal A_I\mid T_I=0)=\pi^{N-1}.
		\label{eq:equal_coverage_event_probability}
	\end{align}

	On $\mathcal A_I$, the target exposure is zero, so its true treated score is $V_I(Y_I(1))=X_I$.  If a treated calibration unit $j$ satisfies $X_j<X_I$, then its observational and interventional scores both equal $X_j<X_I$.  If $X_j>X_I$, its observational exposure is one because of the control target, whereas its interventional exposure is zero after setting $T_I=1$.  Therefore,
	\begin{align}
		V_j^{\mathrm{obs}}=-1<X_I<X_j=V_j^{\mathrm{int}}.
	\end{align}
	Consequently, on $\mathcal A_I$,
	\begin{align}
		\mathcal H_I=\mathcal H_I^{\mathrm{ex}}=\{j\ne I:X_j>X_I\}.
	\end{align}
	Writing $H=|\mathcal H_I|$ and using the equality of the weights gives
	\begin{align}
		p_{I,Y_I(1)}^{\mathrm{obs}} = \frac{1}{N}, \qquad \rho_{I,Y_I(1)} = \frac{H}{N}, \qquad p_{I,Y_I(1)}^{\mathrm{int}} = \frac{H+1}{N}.
		\label{eq:equal_coverage_rank_identity}
	\end{align}
	Thus $p_{I,Y_I(1)}^{\mathrm{obs}}+\rho_{I,Y_I(1)} =p_{I,Y_I(1)}^{\mathrm{int}}$ at the true outcome.  Since $p_{I,Y_I(1)}^{\mathrm{obs}}>0$, the adjusted acceptance condition is equivalent to
	\begin{align}
		p_{I,Y_I(1)}^{\mathrm{obs}}+\rho_{I,Y_I(1)}>\alpha_N .
	\end{align}
	Hence, the ideal and corrected observational procedures make the same coverage decision on $\mathcal A_I$.  Moreover, $H+1$ is the descending rank of $X_I$ among $N$ i.i.d. continuous observations, and is therefore uniform on $\{1,\ldots,N\}$.  At $\alpha_N =1/N$, both procedures miscover on $\mathcal A_I$ precisely when $H=0$, which has conditional probability $1/N$.

	On $\mathcal A_I^c$, there is at least one control in addition to $I$, so the number of treated calibration units is at most $N-2$, i.e., $|\mathcal I_1|\le N-2$.  Both the interventional and observational $p$-values are bounded below as
	\begin{align}
		p_{I,Y_I(1)}^{\mathrm{obs}}, p_{I,Y_I(1)}^{\mathrm{int}} \ge \frac{1}{1+|\mathcal I_1|} \ge \frac{1}{N-1} > \frac{1}{N} = \alpha_N .
	\end{align}
	The ideal and corrected observational procedures therefore both cover on $\mathcal A_I^c$.  Combining this with  \eqref{eq:equal_coverage_event_probability} proves \eqref{eq:equal_coverage_result}.
\end{proof}

The empirical coverage of the ideal WCP, WCP, and the proposed IA-WCP is illustrated in Figure \ref{fig:equal_coverage}. As established in Proposition \ref{prop:trans_equal_coverage}, the ideal and corrected procedures have identical empirical coverage, while the unadjusted observational procedure can substantially undercover for smaller values of $N$, although the discrepancy decreases as $N$ grows.
\begin{figure}[h!]
	\centering
	\includegraphics[width=0.5\textwidth]{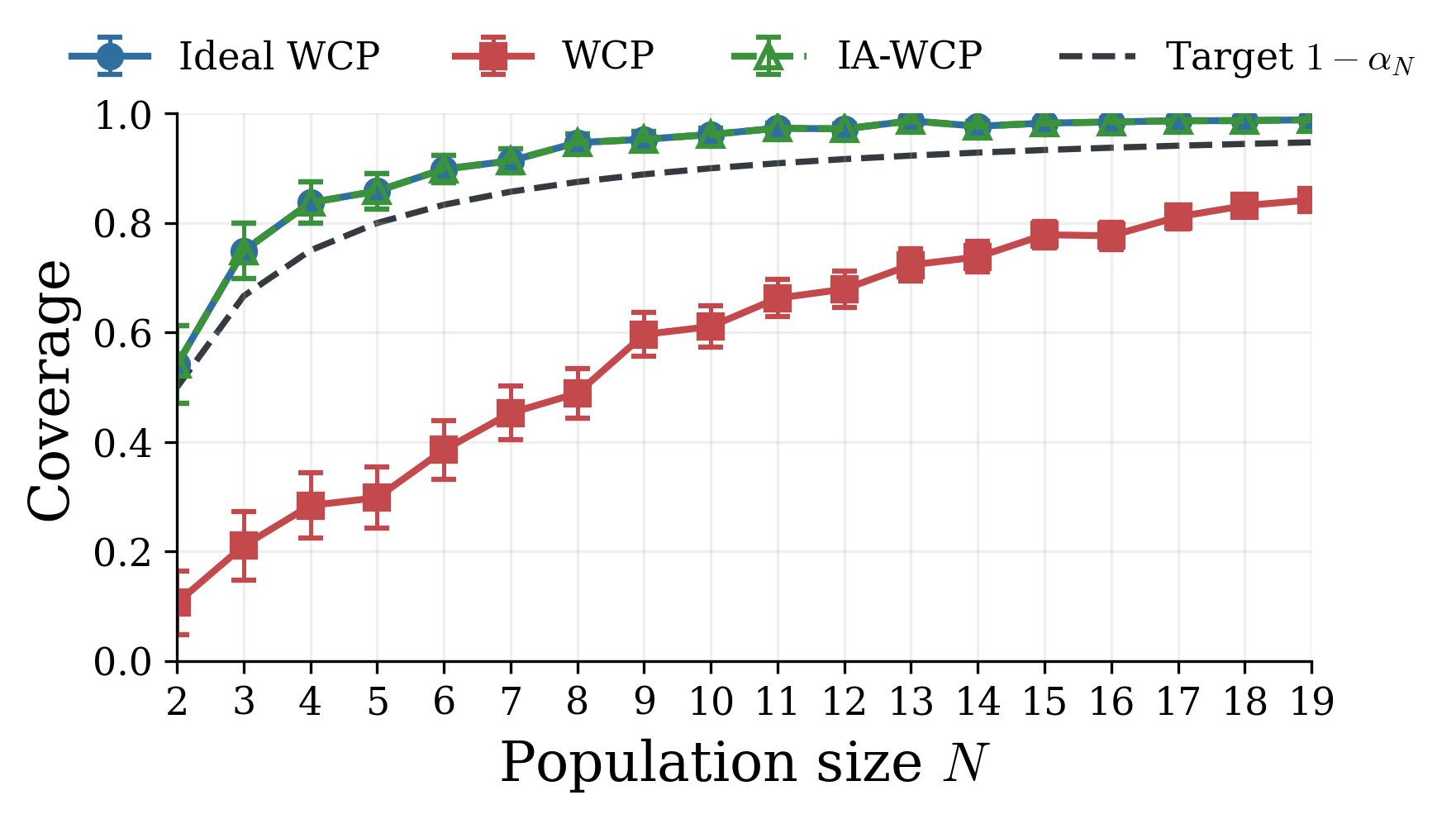}
	\caption{Empirical marginal coverage of the ideal interventional, corrected observational, and unadjusted observational procedures under the setting of Proposition \ref{prop:trans_equal_coverage}, with $\pi=0.9$ and $\alpha_N=1/N$. Error bars represent $95\%$ Monte Carlo confidence intervals. The ideal and corrected procedures have identical empirical coverage, in agreement with Proposition \ref{prop:trans_equal_coverage}. The unadjusted observational procedure can substantially undercover for smaller values of $N$, although the discrepancy decreases as $N$ grows.}
	\label{fig:equal_coverage}	
\end{figure}

\section{Transductive IA-WCP+}
\label{app:ia_wcp_plus_trans}

As mentioned in Section \ref{subsec:trans_ia_wcp}, IA-WCP+ leverages bounds on intervention-induced changes in the affected calibration scores to produce a sharper correction than the worst-case correction of IA-WCP. In this section, we formalize the stability condition under which the refined correction $\rho_{I,y}^{+}$ introduced in \eqref{eq:rho_delta_trans_plus} preserves marginal coverage. Throughout, we use the notation of Section \ref{sec:conf_tran_ite}, condition on $T_I=0$, and impose the following stability assumption on the affected calibration scores.

\begin{assumption}
	\label{ass:delta_score_stability_trans}
	For every affected calibration unit $j\in\mathcal H_I$, there exists a finite, nonnegative, observable margin $\Delta_j^I$ that is a measurable function of $(X_{1:N},T_{1:N},I,E_j,Y_j(1))$, but does not depend on the random exposures or outcomes of other units, such that for every $t\in\mathbb R$,
	\begin{align}
\Pr\left[V_j^{\mathrm{int}}>t\mid X_{1:N},T_{1:N},I\right]\le\Pr\left[V_j^{\mathrm{obs}}+\Delta_j^I>t\mid X_{1:N},T_{1:N},I\right].
		\label{eq:delta_score_tail_condition}
	\end{align}
\end{assumption}

Assumption \ref{ass:delta_score_stability_trans} states that, conditionally on the covariates, treatment assignments, and target index, the interventional score of each affected unit is stochastically dominated by its observational score shifted upward by $\Delta_j^I$. As shown in Section \ref{subsec:suff_conf_stability_trans}, sufficient conditions for this assumption include Lipschitz continuity of the nonconformity score and almost-sure bounds on the intervention-induced changes in exposures and potential outcomes.

Using margins satisfying \eqref{eq:delta_score_tail_condition} in the correction factor \eqref{eq:rho_delta_trans_plus}, IA-WCP+ defines the prediction set as
\begin{align}
\Gamma_I^{\mathrm{IA-WCP}+}
=\left\{y\in\mathcal Y:p_{I,y}^{\mathrm{obs}}>(\alpha-\rho_{I,y}^{+})_+\right\},
\qquad \alpha\in(0,1).
\label{eq:plus_trans_set}
\end{align}
Since $0\le\rho_{I,y}^{+}\le\rho_{I,y}$ for every $y$, comparing the acceptance thresholds in \eqref{eq:plus_trans_set} and \eqref{eq:conf_set_trans_adj} gives
\begin{align}
\Gamma_I^{\mathrm{IA-WCP}+}\subseteq\Gamma_I^{\mathrm{IA-WCP}}.
\end{align}
Thus, IA-WCP+ returns a prediction set no larger than IA-WCP for the same observational sample. Under the assumptions of the following lemma, $p_{I,y}^{\mathrm{obs}}+\rho_{I,y}^{+}$ bounds the interventional $p$-value from above under a suitable coupling.

\begin{lemma}
	\label{lem:trans_p_value_comparison_delta}
	Under Assumptions \ref{ass:exposure_mechanism}, \ref{ass:no_hidden_interference_confounding}, and \ref{ass:delta_score_stability_trans}, conditionally on $T_I=0$, the observational and interventional conformal $p$-values admit a coupling such that, simultaneously for every $y\in\mathcal Y$,
	\begin{align}
		p_{I,y}^{\mathrm{int}} \le p_{I,y}^{\mathrm{obs}}+\rho_{I,y}^{+} \qquad\text{a.s.}
	\end{align}
\end{lemma}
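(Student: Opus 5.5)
We follow the structure of the proof of Lemma~\ref{lem:trans_p_value_comparison:merged}, changing only how the affected units are coupled. Throughout we work conditionally on $(X_{1:N},T_{1:N},I)$ with $T_I=0$.

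\textbf{Coupling of the unaffected units and the target.} For the target $I$ and for every $j\in\mathcal U_I^{\mathrm{ex}}$, the observational and interventional exposure laws coincide. We therefore couple exposures and treated outcomes to be equal almost surely, exactly as before. This gives
\[
V_I^{\mathrm{int}}(y)=V_I(y)\quad\text{for every } y,\qquad V_j^{\mathrm{int}}=V_j^{\mathrm{obs}}\quad\text{for } j\in\mathcal U_I^{\mathrm{ex}}.
\]
Units in $\mathcal H_I\setminus\mathcal H_I^{\mathrm{ex}}$ also have identical laws. We couple them identically too, so the only remaining issue is the units in $\mathcal H_I^{\mathrm{ex}}\subseteq\mathcal H_I$.

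\textbf{Coupling of the affected units.} For each $j\in\mathcal H_I^{\mathrm{ex}}$, Assumption~\ref{ass:delta_score_stability_trans} says that the conditional law of $V_j^{\mathrm{int}}$ is stochastically dominated by the conditional law of $V_j^{\mathrm{obs}}+\Delta_j^I$. Since $\Delta_j^I$ is a function of $(X_{1:N},T_{1:N},I,E_j,Y_j(1))$, the variable $V_j^{\mathrm{obs}}+\Delta_j^I$ depends only on unit $j$'s own observational variables.

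We apply the quantile (monotone) coupling. Draw $U_j\sim\mathrm{Unif}(0,1)$ independently across units. Generate $(E_j,Y_j(1))$ from its observational law, and set
\[
V_j^{\mathrm{int}}=F^{-1}_{\mathrm{int},j}\bigl(\tilde U_j\bigr),
\]
where $\tilde U_j$ is the randomized probability integral transform of $V_j^{\mathrm{obs}}+\Delta_j^I$. Stochastic dominance then yields $V_j^{\mathrm{int}}\le V_j^{\mathrm{obs}}+\Delta_j^I$ almost surely.

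The coupling must produce interventional triples $(\tilde E_j^I,\tilde Y_j^I(1))$, not just scores, so that the interventional marginals are preserved. We handle this by sampling the triple from its conditional law given the score value $V_j^{\mathrm{int}}$. Doing this independently across units keeps the product structure guaranteed by Assumptions~\ref{ass:exposure_mechanism} and~\ref{ass:no_hidden_interference_confounding}. The observational and interventional joint laws are therefore preserved, together with the coupling of the target outcome that the coverage argument needs.

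\textbf{Pointwise comparison of the $p$-values.} As in the earlier proof, the unaffected terms cancel and
\[
p_{I,y}^{\mathrm{int}}-p_{I,y}^{\mathrm{obs}}=\frac{\sum_{j\in\mathcal H_I^{\mathrm{ex}}}W_j\bigl[\mathds 1\{V_j^{\mathrm{int}}\ge V_I(y)\}-\mathds 1\{V_j^{\mathrm{obs}}\ge V_I(y)\}\bigr]}{W_I+\sum_{j\in\mathcal I_1}W_j}.
\]
Each bracket is at most $\mathds 1\{V_j^{\mathrm{obs}}<V_I(y)\le V_j^{\mathrm{int}}\}$. Using $V_j^{\mathrm{int}}\le V_j^{\mathrm{obs}}+\Delta_j^I$, this is at most
\[
\mathds 1\{V_I(y)-\Delta_j^I\le V_j^{\mathrm{obs}}<V_I(y)\}.
\]
Summing over $j$ and enlarging the index set from $\mathcal H_I^{\mathrm{ex}}$ to $\mathcal H_I$, which only adds nonnegative terms, gives $p_{I,y}^{\mathrm{int}}\le p_{I,y}^{\mathrm{obs}}+\rho^{+}_{I,y}$. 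The coupling is built once and does not depend on $y$, so the inequality holds simultaneously for all $y$.

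\textbf{Main obstacle.} The delicate step is making the quantile coupling legitimate at the level of full per-unit variables rather than scores alone. If this cannot be done cleanly, the fallback is Strassen's theorem applied to the pair $(V_j^{\mathrm{obs}}+\Delta_j^I,\,V_j^{\mathrm{int}})$, followed by conditional resampling of the underlying exposure and outcome given each score.
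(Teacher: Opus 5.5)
Your proposal is correct and follows essentially the same route as the paper: identical coupling of the target and the unaffected units, a per-unit stochastic-order coupling for the affected units (your quantile/randomized-PIT construction is the one-dimensional instance of the Strassen argument the paper invokes), a product coupling justified by conditional independence across units, and the same indicator bound $\mathds 1\{V_I(y)-\Delta_j^I\le V_j^{\mathrm{obs}}<V_I(y)\}$. The differences are cosmetic: you first restrict to $\mathcal H_I^{\mathrm{ex}}$ and enlarge to $\mathcal H_I$ at the end, and you lift the coupling to full exposure--outcome pairs, which is valid but not needed because the $p$-values depend on the calibration units only through their scores.
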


\begin{proof}
	Work conditionally on $(X_{1:N},T_{1:N},I)$ with $T_I=0$, and write $\mathcal U_I=\mathcal I_1\setminus\mathcal H_I$. As in Lemma \ref{lem:trans_p_value_comparison:merged}, the observational and interventional scores can be coupled so that
	\begin{align}
		V_I^{\mathrm{int}}(y)=V_I(y),\quad\text{ and } \quad V_j^{\mathrm{obs}} = V_j^{\mathrm{int}} \qquad\text{a.s. for every }j\in\mathcal U_I.
	\end{align}

	For every $j\in\mathcal H_I$, Assumption \ref{ass:delta_score_stability_trans} and the coupling characterization of stochastic order \citep{strassen1965existence} imply the existence of a coupling, preserving the conditional joint law of $(V_j^{\mathrm{obs}},\Delta_j^I)$, such that
	\begin{align}
		V_j^{\mathrm{int}} \le V_j^{\mathrm{obs}}+\Delta_j^I \qquad\text{a.s.}
		\label{eq:delta_score_coupling}
	\end{align}
	Since, conditional on $(X_{1:N},T_{1:N},I)$, each $\Delta_j^I$ may depend only on $(E_j,Y_j(1))$ and not on the exposures or outcomes of other units, the pair $(V_j^{\mathrm{obs}},\Delta_j^I)$ is a function only of the random variables $(E_j,Y_j(1))$. From Assumptions \ref{ass:exposure_mechanism} and \ref{ass:no_hidden_interference_confounding} it then follows that the pairs $(V_j^{\mathrm{obs}},\Delta_j^I)$ are conditionally independent across units. The unit couplings can therefore be combined into a product coupling under which \eqref{eq:delta_score_coupling} holds simultaneously for every $j\in\mathcal H_I$.

	The interventional scores are conditionally independent for the same reason. The product coupling therefore preserves both score-vector marginals. We also retain the identical target coupling $(\tilde E_I^I,\tilde Y_I^I(1))=(E_I,Y_I(1))$ from Lemma \ref{lem:trans_p_value_comparison:merged}, preserving the joint laws with the target potential outcome.

	For every $j\in\mathcal H_I$, \eqref{eq:delta_score_coupling} implies
	\begin{align}
		\mathds 1\{V_j^{\mathrm{int}}\ge V_I(y)\} - \mathds 1\{V_j^{\mathrm{obs}}\ge V_I(y)\}\le \mathds 1 \left\{ V_I(y)-\Delta_j^I \le V_j^{\mathrm{obs}} < V_I(y) \right\}.
		\label{eq:delta_indicator_bound}
	\end{align}
	In fact, the left-hand side can be positive only if
	\begin{align}
		V_j^{\mathrm{obs}} < V_I(y) \le V_j^{\mathrm{int}} \le V_j^{\mathrm{obs}}+\Delta_j^I.
	\end{align}

	Since the unaffected scores coincide under the coupling,
	\begin{align}
		p_{I,y}^{\mathrm{int}}-p_{I,y}^{\mathrm{obs}} &= \frac{ \sum_{j\in\mathcal H_I} W_j \left[ \mathds 1\{V_j^{\mathrm{int}}\ge V_I(y)\} - \mathds 1\{V_j^{\mathrm{obs}}\ge V_I(y)\} \right] }{ W_I+\sum_{j\in\mathcal I_1}W_j }\le \rho_{I,y}^{+}.
	\end{align}
	Therefore,
	\begin{align}
		p_{I,y}^{\mathrm{int}} \le p_{I,y}^{\mathrm{obs}}+\rho_{I,y}^{+} \qquad\text{a.s.}
	\end{align}
	Since the coupling does not depend on $y$, the inequality holds simultaneously for every $y\in\mathcal Y$.
\end{proof}

It follows that IA-WCP+ preserves marginal coverage, as formalized in the following theorem.

\begin{theorem}
\label{thm:plus_trans_properties_reorganized}
Under Assumptions \ref{ass:prop_unit}, \ref{ass:eq_exp_map}, \ref{ass:exposure_mechanism}, \ref{ass:no_hidden_interference_confounding}, \ref{ass:consistency}, and \ref{ass:delta_score_stability_trans}, the prediction set in \eqref{eq:plus_trans_set} satisfies
\begin{align}
\Pr\left[Y_I(1)\notin\Gamma_I^{\mathrm{IA-WCP}+}\mid T_I=0\right]\le\alpha.
\label{eq:plus_trans_coverage}
\end{align}
\end{theorem}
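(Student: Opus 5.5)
The argument repeats the proof of Theorem \ref{thm:cov_trans_iawcp} from Appendix \ref{app:transductive_coverage}, with Lemma \ref{lem:trans_p_value_comparison_delta} used in place of Lemma \ref{lem:trans_p_value_comparison:merged}. It has three steps.

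\textbf{Step 1: validity of the ideal procedure.} Use Lemma \ref{thm:ideal_trans_cov}, which needs only Assumptions \ref{ass:prop_unit}--\ref{ass:consistency}. Its proof gives super-uniformity of the interventional $p$-value at the interventional target, conditionally on each event $A_{\mathcal I_1,I}$:
\begin{align}
\Pr\left[p^{\mathrm{int}}_{I,Y_I(1)}\le\alpha\mid T_I=0\right]\le\alpha .
\end{align}
This holds under any coupling that sets $(\tilde E_I^I,\tilde Y_I^I(1))=(E_I,Y_I(1))$ and preserves the joint law of the target with the interventional calibration scores.

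\textbf{Step 2: compare the two $p$-values.} Use the coupling of Lemma \ref{lem:trans_p_value_comparison_delta}. Work conditionally on $(X_{1:N},T_{1:N},I)$, and check that the coupling keeps this target identification and preserves both score-vector marginals. Since the coupling does not depend on $y$, we obtain, almost surely and simultaneously for all $y$,
\begin{align}
p^{\mathrm{int}}_{I,y}\le p^{\mathrm{obs}}_{I,y}+\rho^+_{I,y}.
\end{align}
Evaluating this at the random point $y=Y_I(1)$ is then legitimate.

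\textbf{Step 3: event inclusion.} Overlap gives $W_I>0$, hence $p^{\mathrm{obs}}_{I,y}>0$. So if $\rho^+_{I,y}\ge\alpha$, the threshold $(\alpha-\rho^+_{I,y})_+=0$ and $y$ is never rejected. Otherwise, rejection means $p^{\mathrm{obs}}_{I,y}+\rho^+_{I,y}\le\alpha$, which by Step 2 forces $p^{\mathrm{int}}_{I,y}\le\alpha$. Therefore
\begin{align}
\{Y_I(1)\notin\Gamma_I^{\mathrm{IA-WCP}+}\}=\{p^{\mathrm{obs}}_{I,Y_I(1)}\le(\alpha-\rho^+_{I,Y_I(1)})_+\}\subseteq\{p^{\mathrm{int}}_{I,Y_I(1)}\le\alpha\}.
\end{align}
Taking probabilities under the coupling and applying Step 1 gives \eqref{eq:plus_trans_coverage}. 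The law of total probability over the partition $\{A_{\mathcal I_1,I}\}$ of $\{T_I=0\}$, followed by averaging over $I$, is handled exactly as before.

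\textbf{Main obstacle.} The delicate point is that Steps 1 and 2 must be realized on one probability space. We need a single coupling that (i) identifies the target variables, (ii) gives the interventional calibration scores their correct joint law, so that Step 1 transfers, and (iii) satisfies the Strassen-type domination $V^{\mathrm{int}}_j\le V^{\mathrm{obs}}_j+\Delta^I_j$ for $j\in\mathcal H_I$.

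Step 1 only needs the marginal law of the interventional vector together with the target, and Lemma \ref{lem:trans_p_value_comparison_delta} asserts that its product coupling preserves this. The coupling is built conditionally on $(X_{1:N},T_{1:N},I)$, and $A_{\mathcal I_1,I}$ is a function of these variables. Hence the conditional-on-$A_{\mathcal I_1,I}$ statement of Step 1 survives after integrating out $X_{1:N}$.

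I will verify this compatibility explicitly. In particular, Assumption \ref{ass:delta_score_stability_trans} allows $\Delta^I_j$ to depend only on unit $j$'s own variables. That restriction is what makes the unit-wise couplings combine into a product coupling without disturbing the target's joint law.
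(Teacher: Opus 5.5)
Your proposal is correct and follows essentially the same route as the paper: it combines the interventional super-uniformity from the proof of Lemma \ref{thm:ideal_trans_cov} with the $y$-uniform coupling bound $p^{\mathrm{int}}_{I,y}\le p^{\mathrm{obs}}_{I,y}+\rho^{+}_{I,y}$ of Lemma \ref{lem:trans_p_value_comparison_delta}, and then uses $p^{\mathrm{obs}}_{I,y}>0$ to obtain the event inclusion evaluated at $y=Y_I(1)$. Your ``main obstacle'' paragraph spells out what the paper asserts in a single line (``the coupling preserves the joint law with $Y_I(1)$''): the target identification, the law of the interventional scores, and the Strassen domination must all hold on one probability space.
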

\begin{proof}
The interventional superuniformity established in \eqref{eq:interventional_superuniform:merged} gives
\begin{align}
	\Pr\left[ p_{I,Y_{I}(1)}^{\mathrm{int}}\le \alpha\mid T_I=0 \right] \le \alpha.
\end{align}
By Lemma \ref{lem:trans_p_value_comparison_delta},
\begin{align}
	p_{I,y}^{\mathrm{int}} \le p_{I,y}^{\mathrm{obs}}+\rho^{+}_{I,y}.
\end{align}
Since overlap ensures $p_{I,y}^{\mathrm{obs}}>0$, rejection is impossible when $\rho_{I,y}^{+}\ge\alpha$. Otherwise, rejection implies $p_{I,y}^{\mathrm{obs}}+\rho_{I,y}^{+}\le\alpha$. Thus, under this coupling,
\begin{align}
	\left\{ p_{I,y}^{\mathrm{obs}} \le \left(\alpha-\rho^{+}_{I,y}\right)_+ \right\} \subseteq \left\{ p_{I,y}^{\mathrm{int}}\le \alpha \right\}.
\end{align}
The coupling preserves the joint law with $Y_I(1)$ and holds simultaneously for every $y$. Evaluating at $y=Y_I(1)$ therefore proves the claim.
\end{proof}

\subsection{Sufficient conditions for score stability}
\label{subsec:suff_conf_stability_trans}
Let $d_{\mathcal E}:\mathcal E\times\mathcal E\to[0,\infty)$ be a metric on the exposure space $\mathcal E$. Conditionally on $(X_{1:N},T_{1:N},I)$, suppose that the observational and interventional exposures and outcomes of every affected unit $j\in\mathcal H_I$ admit a coupling with the correct conditional marginals satisfying
\begin{align}
	d_{\mathcal E}(E_j,\tilde E_j^{I}) &\le \Delta_{E,j}^{I},\\
	\left|Y_j(1)-\tilde Y_j^{I}(1)\right| &\le \Delta_{Y,j}^{I}
\end{align}
almost surely, where the two bounds are finite, nonnegative, and observable, with the same local dependence allowed for $\Delta_j^I$ in Assumption \ref{ass:delta_score_stability_trans}. Suppose also that the nonconformity score satisfies, for finite constants $L_E,L_Y\ge0$,
\begin{align}
	\left|S(x,e,y)-S(x,e',y')\right| \le L_Ed_{\mathcal E}(e,e')+L_Y|y-y'|.
\end{align}
Then
\begin{align}
	\left|V_j^{\mathrm{int}}-V_j^{\mathrm{obs}}\right| \le L_E\Delta_{E,j}^{I}+L_Y\Delta_{Y,j}^{I}
\end{align}
almost surely. Hence, Assumption \ref{ass:delta_score_stability_trans} holds with
\begin{align}
	\Delta_j^{I} = L_E\Delta_{E,j}^{I}+L_Y\Delta_{Y,j}^{I}.
\end{align}
If the score does not depend on the exposure, the same conclusion holds with
\begin{align}
	\Delta_j^{I} = L_Y\Delta_{Y,j}^{I}.
\end{align}

\section{Inductive Interference-Adjusted Weighted Conformal Prediction}
\label{app:inductive_ia_wcp}
In this section, we consider the inductive setting described in Section \ref{sec:inductive_ite_estimation}. Given data $\mathcal D$ from a network of $N-1$ units, we construct a prediction set for the ITE of a new unit with covariates $X_N\sim P_X$ after it is embedded in the network. We outline how the reasoning developed for the transductive setting in Section \ref{sec:conf_tran_ite} extends to this case. In what follows, we construct a prediction set for the treated potential outcome $Y_N(1)$. 

As in the transductive setting, a direct application of WCP to the treated calibration units $\mathcal I_1=\{j\in\{1,\ldots,N-1\}:T_j=1\}$ does not generally guarantee coverage for $Y_N(1)$. In fact, inverse-propensity weighting with weights
\begin{align}
    W_j = \frac{1}{\pi(X_j)},\quad\text{ for } j\in\mathcal I_1\cup\{N\},
\end{align}
accounts for the difference between the covariate distributions of the treated calibration units and the new unit. However, the calibration exposures and outcomes are generated in the original network of $N-1$ units, whereas the test exposure $E_N$ and outcome $Y_N(1)$ are generated in the augmented network of $N$ units. Consequently, the calibration scores $V_j^{\mathrm{obs}}=S(X_j,E_j,Y_j(1))$, for $j\in\mathcal I_1$, and the test score $V_N(Y_N(1))$ need not be weighted exchangeable. Thus, the observational conformal $p$-value
\begin{align}
    p_{N,y}^{\mathrm{obs}}
    =\frac{W_N+\sum_{j\in\mathcal I_1}W_j\mathds 1\{V_j^{\mathrm{obs}}\ge V_N(y)\}}
    {W_N+\sum_{j\in\mathcal I_1}W_j}
\end{align}
need not be valid.

To correct this procedure, we follow the same interventional reasoning as in the transductive setting. Consider adding unit $N$ with treatment $T_N=1$ and regenerating the calibration exposures and potential outcomes according to the model in the augmented network. Under this intervention, calibration and test units become weighted exchangeable with weights $W_j$. Replacing the observational calibration scores in $p_{N,y}^{\mathrm{obs}}$ with their interventional counterparts yields an ideal WCP procedure with the desired marginal coverage guarantee. 

To relate the ideal but unobservable procedure to the observational one, IA-WCP defines the treated calibration units whose neighborhoods change when unit $N$ is added to the network
\begin{align}
    \mathcal H_N
    =\left\{j\in\mathcal I_1:
    \mathcal N_j(X_{1:N-1})\neq\mathcal N_j(X_{1:N})\right\},
\end{align}
and evaluates the correction factor
\begin{align}
    \label{eq:correction_factor_inductive}
    \rho_{N,y}
    =\frac{\sum_{j\in\mathcal H_N}W_j\mathds 1\{V_j^{\mathrm{obs}}<V_N(y)\}}
    {W_N+\sum_{j\in\mathcal I_1}W_j}.
\end{align}
The factor \eqref{eq:correction_factor_inductive} accounts for the discrepancy between the observational and interventional conformal $p$-values and it is used to define the IA-WCP prediction set
\begin{align}
    \label{eq:iawcp_inductive_set}
    \Gamma_N^{\mathrm{IA-WCP}}
    =\left\{y\in\mathcal Y:
    p_{N,y}^{\mathrm{obs}}>(\alpha-\rho_{N,y})_+\right\}.
\end{align}
A suitable coupling of the observational and interventional scores ensures that the IA-WCP set contains the ideal WCP set, yielding the following coverage guarantee.

\begin{theorem}[Inductive IA-WCP coverage]
    \label{thm:iawcp_inductive_coverage}
    Under Assumptions \ref{ass:prop_unit}--\ref{ass:consistency}, the IA-WCP set $\Gamma_N^{\mathrm{IA-WCP}}$ in \eqref{eq:iawcp_inductive_set} satisfies the inequality
    \begin{align}
        \Pr\left[Y_N(1)\in\Gamma_N^{\mathrm{IA-WCP}}\right]\ge 1-\alpha.
    \end{align}
\end{theorem}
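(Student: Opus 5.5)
The plan mirrors the transductive argument in three steps: weighted exchangeability of an interventional sample, validity of an ideal WCP set built from it, and a coupling bound $p_{N,y}^{\mathrm{int}}\le p_{N,y}^{\mathrm{obs}}+\rho_{N,y}$ that transfers coverage to $\Gamma_N^{\mathrm{IA-WCP}}$.

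\textbf{Step 1 (weighted exchangeability).} Define the interventional sample by adding unit $N$ to the network with $\operatorname{do}(T_N=1)$. Keep $T_{1:N-1}$ and $X_{1:N}$ as observed, and regenerate the exposures $\tilde E_j^N$ and outcomes $\tilde Y_j^N(1)$ of all $N$ units through the kernels $P_{E\mid X,\mathcal L}$ and $P_{Y(1)\mid X,E}$ on the augmented network. Set $\tilde Z_j=(X_j,\tilde E_j^N,\tilde Y_j^N(1))$ for $j\in\mathcal S=\mathcal I_1\cup\{N\}$.

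Condition on the event $A_{\mathcal I_1}=\{T_j=1,\ j\in\mathcal I_1;\ T_j=0,\ j\in\{1,\dots,N-1\}\setminus\mathcal I_1\}$. The covariate density is then proportional to
\begin{align}
\prod_{j\in\mathcal I_1}\pi(X_j)p_X(X_j)\,p_X(X_N)\prod_{j\notin\mathcal S}(1-\pi(X_j))p_X(X_j).
\end{align}
Writing $p_X(X_N)=\pi(X_N)p_X(X_N)/\pi(X_N)$ extracts the factor $w(X_N)=1/\pi(X_N)$. This matches the weights $W_j=1/\pi(X_j)$ once we note that the conformal $p$-value is invariant to a common rescaling of the weights, so only the ratios $W_N/W_j$ matter.

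Under the intervention every unit in $\mathcal S$ is treated. Assumption \ref{ass:eq_exp_map} together with the multiset form of $\mathcal L_j$ then makes the exposure and outcome factors invariant under permutations of $\mathcal S$, exactly as in Lemma \ref{lemma:w_exch:merged}. The weighted conformal argument of \citet{tibshirani2019conformal} gives $\Pr[p_{N,\tilde Y_N^N(1)}^{\mathrm{int}}\le\alpha\mid A_{\mathcal I_1}]\le\alpha$. Averaging over the partition $\{A_{\mathcal I_1}\}$ yields the unconditional bound.

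\textbf{Step 2 (target identification).} The test unit's actual exposure $E_N$ is generated in the augmented network with treatments $T_{1:N-1}$. Since $N\notin\mathcal N_N(X_{1:N})$, the value of $T_N$ does not affect it. Hence $(\tilde E_N^N,\tilde Y_N^N(1))$ has the same conditional law as $(E_N,Y_N(1))$, and we couple them to be equal almost surely.

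\textbf{Step 3 (coupling comparison, the main obstacle).} Work conditionally on $(X_{1:N},T_{1:N-1})$. Take $j\in\mathcal I_1\setminus\mathcal H_N$, so $\mathcal N_j(X_{1:N-1})=\mathcal N_j(X_{1:N})$. The subtle point is that the neighborhood rule is defined for $N$ units versus $N-1$ units, so equivariance alone does not relate the two. What is needed is that, for such $j$, the multiset $\mathcal L_j$ is literally the same in both networks. This holds because the neighbor sets coincide and $N$ is not a neighbor; I would argue it directly from the definition of $\mathcal H_N$. The exposure laws of these units therefore coincide, and we couple $E_j=\tilde E_j^N$ and $Y_j(1)=\tilde Y_j^N(1)$.

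For $j\in\mathcal H_N$ we use any coupling with the correct marginals. Conditional independence across units (Assumptions \ref{ass:exposure_mechanism} and \ref{ass:no_hidden_interference_confounding}) lets these per-unit couplings combine into a product coupling that preserves both score-vector laws and the target link from Step 2. The unaffected units then cancel in $p_{N,y}^{\mathrm{int}}-p_{N,y}^{\mathrm{obs}}$. Each affected term satisfies
\begin{align}
\mathds 1\{V_j^{\mathrm{int}}\ge V_N(y)\}-\mathds 1\{V_j^{\mathrm{obs}}\ge V_N(y)\}\le\mathds 1\{V_j^{\mathrm{obs}}<V_N(y)\}.
\end{align}
This gives $p_{N,y}^{\mathrm{int}}\le p_{N,y}^{\mathrm{obs}}+\rho_{N,y}$ almost surely, simultaneously for all $y$.

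\textbf{Conclusion.} Overlap ensures $p_{N,y}^{\mathrm{obs}}>0$, so
\begin{align}
\{p_{N,y}^{\mathrm{obs}}\le(\alpha-\rho_{N,y})_+\}\subseteq\{p_{N,y}^{\mathrm{int}}\le\alpha\}.
\end{align}
Evaluating this inclusion at $y=Y_N(1)$ and applying Step 1 gives $\Pr[Y_N(1)\notin\Gamma_N^{\mathrm{IA-WCP}}]\le\alpha$.
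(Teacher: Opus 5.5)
Your proposal is correct and follows essentially the same route as the paper. It establishes weighted exchangeability of the augmented interventional sample with $w(x)=1/\pi(x)$, derives validity of the ideal WCP $p$-value, builds a product coupling that makes unaffected calibration scores and the target identical so that $p_{N,y}^{\mathrm{int}}\le p_{N,y}^{\mathrm{obs}}+\rho_{N,y}$ for all $y$, and closes with the same $(\alpha-\rho_{N,y})_+$ threshold argument; the only cosmetic difference is that you couple directly on $\mathcal I_1\setminus\mathcal H_N$, whereas the paper first couples on the exposure-law set $\mathcal H_N^{\mathrm{ex}}\subseteq\mathcal H_N$ and then enlarges the correction sum.
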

\begin{proof}
    See Appendix \ref{proof:inductive_appendix}.
\end{proof}

For $Y_N(0)$, the corresponding prediction set is obtained using the control units and weights $1/(1-\pi(X_j))$. To distinguish the two constructions, we now make the target treatment explicit in the notation. Let $\Gamma_N^{t,\mathrm{IA-WCP}}$ denote the prediction set for $Y_N(t)$ constructed at miscoverage level $\alpha_t$. Choosing $\alpha_1+\alpha_0=\alpha$, the ITE prediction set is defined as
\begin{align}
    \Gamma_N^{\mathrm{ITE}}
    =\left\{y_1-y_0:
    y_1\in\Gamma_N^{1,\mathrm{IA-WCP}},
    y_0\in\Gamma_N^{0,\mathrm{IA-WCP}}\right\},
\end{align}
and it satisfies \eqref{eq:coverage_inductive_id} by the union bound. 
\subsection{Inductive IA-WCP+}
As in the transductive setting, the correction can be sharpened when the intervention-induced increase in each affected calibration score is stochastically bounded.

\begin{assumption}
	\label{ass:delta_score_stability_ind}
	For every affected calibration unit $j\in\mathcal H_{N}$, there exists a finite, nonnegative, observable margin $\Delta_j^{N}$ that is a measurable function of $(X_{1:N},T_{1:N},E_j,Y_j(1))$, but does not depend on the random exposures or outcomes of other units, such that for every $t\in\mathbb R$,
	\begin{align}
		\Pr\left[ V_j^{\mathrm{int}}>t \mid X_{1:N},T_{1:N} \right] \le \Pr\left[ V_j^{\mathrm{obs}}+\Delta_j^{N}>t \mid X_{1:N},T_{1:N} \right].
		\label{eq:delta_score_tail_condition_ind}
	\end{align}
\end{assumption}

Assumption \ref{ass:delta_score_stability_ind} states that, conditionally on the augmented covariate and treatment vectors, the interventional score of each affected calibration unit is stochastically dominated by its observational score shifted upward by $\Delta_j^{N}$. Using the stability margins in \eqref{eq:delta_score_tail_condition_ind}, IA-WCP+ defines the refined correction factor
\begin{align}
	\rho_{N,y}^{+} = \frac{ \sum_{j\in\mathcal H_{N}} W_j \mathds 1 \left\{ V_{N}(y)-\Delta_j^{N} \le V_j^{\mathrm{obs}} < V_{N}(y) \right\} }{ W_{N}+\sum_{j\in\mathcal I_1}W_j }.
	\label{eq:rho_delta_ind}
\end{align}
Under Assumption \ref{ass:delta_score_stability_ind}, the observational and interventional score vectors admit a coupling with the correct conditional marginals such that, simultaneously for every $y\in\mathcal Y$,
\begin{align}
	p_{N,y}^{\mathrm{int}} \le p_{N,y}^{\mathrm{obs}}+\rho_{N,y}^{+} \qquad\text{a.s.}
	\label{eq:p_value_delta_comparison_ind}
\end{align}
The correction $\rho_{N,y}^{+}$ counts only the conformal weight of affected calibration units whose observational scores lie within a margin $\Delta_j^{N}$ below the test score.

Under this coupling, every candidate accepted by the ideal interventional procedure is also accepted by the adjusted observational procedure. This motivates the IA-WCP+ prediction set
\begin{align}
	\Gamma_N^{\mathrm{IA-WCP+}} = \left\{ y\in\mathcal Y: p_{N,y}^{\mathrm{obs}} > (\alpha-\rho_{N,y}^{+})_+ \right\}.
	\label{eq:conf_set_ind_delta_adj}
\end{align}
Because $\rho_{N,y}^{+}\le\rho_{N,y}$, the IA-WCP+ prediction set $\Gamma_N^{\mathrm{IA-WCP+}}$ is contained in the IA-WCP prediction set $\Gamma_N^{1,\mathrm{IA-WCP}}$ in \eqref{eq:iawcp_inductive_set}.

\begin{theorem}
	\label{thm:cov_loss_ind_ite_delta}
	Under Assumptions \ref{ass:prop_unit}--\ref{ass:consistency} and \ref{ass:delta_score_stability_ind}, the IA-WCP+ prediction set in \eqref{eq:conf_set_ind_delta_adj} satisfies
	\begin{align}
		\Pr\left[ Y_{N}(1)\notin\Gamma_N^{\mathrm{IA-WCP+}} \right] \le \alpha.
	\end{align}
\end{theorem}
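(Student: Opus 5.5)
The argument mirrors Theorem~\ref{thm:plus_trans_properties_reorganized}: establish superuniformity of the ideal interventional $p$-value, dominate it by $p_{N,y}^{\mathrm{obs}}+\rho_{N,y}^{+}$ under a coupling, and conclude by set inclusion.

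\textbf{Step 1 (interventional superuniformity).} I take from Theorem~\ref{thm:iawcp_inductive_coverage} and its proof that the ideal inductive procedure is valid:
\begin{align}
\Pr\left[p_{N,Y_N(1)}^{\mathrm{int}}\le\alpha\right]\le\alpha .
\end{align}
Here the calibration exposures and outcomes are regenerated in the augmented $N$-unit network with $T_N=1$. Concretely, I condition on the treated set $\mathcal I_1\subseteq\{1,\ldots,N-1\}$. The analogue of Lemma~\ref{lemma:w_exch:merged} then applies. The covariates of $\mathcal I_1\cup\{N\}$ have conditional density proportional to
\begin{align}
\prod_{j\in\mathcal I_1}\pi(X_j)p_X(X_j)\cdot p_X(X_N)=\frac{1}{\pi(X_N)}\prod_{j\in\mathcal I_1\cup\{N\}}\pi(X_j)p_X(X_j).
\end{align}
This gives weighted exchangeability with weights $1/\pi(\cdot)$. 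Once all units in $\mathcal I_1\cup\{N\}$ are treated in the augmented network, the interventional exposures and outcomes are permutation invariant by Assumptions~\ref{ass:eq_exp_map}--\ref{ass:no_hidden_interference_confounding}. The weighted conformal argument of \citet{tibshirani2019conformal} then yields the display above. I would simply invoke this.

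\textbf{Step 2 (coupling).} I work conditionally on $(X_{1:N},T_{1:N})$. For $j\in\mathcal I_1\setminus\mathcal H_N$, the neighborhood is unchanged when unit $N$ is added. The multiset $\mathcal L_j$ is therefore unchanged, so the observational and interventional exposure laws coincide and $(E_j,Y_j(1))$ can be set equal to their interventional versions. The test unit exists only in the augmented network, so its observational and interventional variables are identical by definition. For $j\in\mathcal H_N$, Assumption~\ref{ass:delta_score_stability_ind} combined with Strassen's theorem \citep{strassen1965existence} gives a coupling with $V_j^{\mathrm{int}}\le V_j^{\mathrm{obs}}+\Delta_j^N$ almost surely. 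This coupling preserves the joint law of $(V_j^{\mathrm{obs}},\Delta_j^N)$, which is a function of $(E_j,Y_j(1))$ only.

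Conditional independence across units holds in both networks by Assumptions~\ref{ass:exposure_mechanism}--\ref{ass:no_hidden_interference_confounding}. The unit-level couplings therefore combine into a product coupling that preserves both score-vector laws and their joint law with the target outcome. \emph{This is the main obstacle.} One must check that the observational calibration variables, which are generated in the $(N-1)$-network, are conditionally independent of the test unit's variables given $(X_{1:N},T_{1:N})$. This holds because the exposure of unit $N$ depends only on covariates and treatments, and its outcome is drawn independently given its exposure.

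\textbf{Step 3 (comparison and conclusion).} The weights do not change under the intervention, and unaffected terms cancel. Exactly as in \eqref{eq:delta_indicator_bound}, each affected unit contributes at most $\mathds 1\{V_N(y)-\Delta_j^N\le V_j^{\mathrm{obs}}<V_N(y)\}$, which gives \eqref{eq:p_value_delta_comparison_ind} simultaneously for every $y$. Since $p^{\mathrm{obs}}_{N,y}>0$, we have
\begin{align}
\{p_{N,y}^{\mathrm{obs}}\le(\alpha-\rho^{+}_{N,y})_+\}\subseteq\{p_{N,y}^{\mathrm{int}}\le\alpha\}.
\end{align}
Evaluating this inclusion at $y=Y_N(1)$ and applying Step~1 gives $\Pr[Y_N(1)\notin\Gamma_N^{\mathrm{IA-WCP+}}]\le\alpha$.
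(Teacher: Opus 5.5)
Your proposal is correct and follows essentially the same route as the paper. You establish validity of the ideal inductive $p$-value via weighted exchangeability with weights $1/\pi(\cdot)$, couple the units with the identity on $\mathcal I_1\setminus\mathcal H_N$ and with a Strassen coupling giving $V_j^{\mathrm{int}}\le V_j^{\mathrm{obs}}+\Delta_j^N$ on $\mathcal H_N$, combine these into a product coupling, and conclude with the indicator bound and the threshold inclusion evaluated at $y=Y_N(1)$. Your explicit check is also a point the paper leaves implicit when it says the target variables are ``coupled identically'': for the product coupling to preserve the observational joint law, the observational calibration variables must be conditionally independent of the test unit's $(E_N,Y_N(1))$ given $(X_{1:N},T_{1:N})$.
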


\begin{proof}
See Appendix \ref{proof:thm:cov_loss_ind_ite_delta}.
\end{proof}

\section{Proofs of Inductive IA-WCP and Inductive IA-WCP+ Validity}
\label{proof:inductive_appendix}

In this section, we provide the proofs of Theorem \ref{thm:iawcp_inductive_coverage} and Theorem \ref{thm:cov_loss_ind_ite_delta}. Throughout, the observational network contains $N-1$ units and the new unit is indexed by $N$, as in the main text. Following the same template as the transductive case, we first establish the validity of ideal WCP in the augmented network, then compare its scores with the observational scores through a coupling and establish the validity of IA-WCP. In Appendix \ref{proof:thm:cov_loss_ind_ite_delta} we show that the same reasoning applies to IA-WCP+ under the score stability condition stated in Assumption \ref{ass:delta_score_stability_ind}.

\subsection{Validity of the ideal WCP procedure}
\label{app:inductive_interventional_sample}

We consider the case in which the test unit is embedded into the observed network with treatment $T_{N}=1$ and we wish to estimate its potential outcome $Y_{N}(1)$. Define the augmented covariate and treatment vectors
\begin{align}
	X_{1:N} = (X_1,\ldots,X_{N-1},X_{N}), \qquad T_{1:N} = (T_1,\ldots,T_{N-1},1),
\end{align}
where $X_N\sim P_X$ is independent of the observed covariates $X_{1:N-1}$ and treatments $T_{1:N-1}$.
For each treated calibration unit $j\in\mathcal I_1$, let $\tilde E_j$ denote the exposure of unit $j$ in the augmented network, and let $\tilde Y_j(1)$ denote the corresponding treated potential outcome. Thus,
\begin{align}
	\tilde E_j \mid (X_{1:N},T_{1:N}) \sim P_{E\mid X,  \mathcal{L}}\left(\cdot\mid X_j,  \mathcal{L}_j(X_{1:N},T_{1:N})\right),
\end{align}
and
\begin{align}
	\tilde Y_j(1)\mid X_j,\tilde E_j \sim P_{Y(1)\mid X,E}(\cdot\mid X_j,\tilde E_j).
\end{align}
For the test unit, let
\begin{align}
	E_{N} \mid (X_{1:N},T_{1:N}) \sim P_{E\mid X,  \mathcal{L}}\left(\cdot\mid X_{N},  \mathcal{L}_{N}(X_{1:N},T_{1:N})\right),
\end{align}
and
\begin{align}
	Y_{N}(1)\mid X_{N},E_{N} \sim P_{Y(1)\mid X,E}(\cdot\mid X_{N},E_{N}).
\end{align}
We define the interventional nonconformity scores $V_j^{\mathrm{int}}=S(X_j,\tilde E_j,\tilde Y_j(1))$ and the triples
\begin{align}
    \tilde Z_j = (X_j,\tilde E_j,\tilde Y_j(1)), \qquad j\in\mathcal I_1,
\end{align}
and
\begin{align}
	Z_{N} = (X_{N},E_{N},Y_{N}(1)).
\end{align}

We first establish the weighted exchangeability of the augmented interventional sample.

\begin{lemma}[Inductive weighted exchangeability]
	\label{lem:ind_weighted_exchangeability}
	Under Assumptions \ref{ass:prop_unit}, \ref{ass:exposure_mechanism}, \ref{ass:no_hidden_interference_confounding}, and \ref{ass:eq_exp_map}, the joint density of
	\begin{align}
		\bigl((\tilde Z_j)_{j\in\mathcal I_1},Z_N\bigr),
	\end{align}
	conditional on the realized treated set $\mathcal I_1$, admits the decomposition
	\begin{align}
		p\left( (\tilde Z_j)_{j\in\mathcal I_1}, Z_{N} \mid \mathcal I_1 \right) = cw(X_{N}) g\left( (\tilde Z_j)_{j\in\mathcal I_1}, Z_{N} \right),
	\end{align}
	where $c>0$ is a normalizing constant,
	\begin{align}
		w(x)=\frac{1}{\pi(x)},
	\end{align}
	and $g(\cdot)$ is invariant under permutations of its arguments.
\end{lemma}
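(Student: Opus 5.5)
We follow the template of Lemma \ref{lemma:w_exch:merged}, adapted to the augmented network in which unit $N$ is added with treatment fixed to one. Let $\mathcal S=\mathcal I_1\cup\{N\}$ and write $A_{\mathcal I_1}$ for the event that the treated set among the first $N-1$ units equals $\mathcal I_1$.

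\textbf{Step 1: covariate factorization.} By Assumption \ref{ass:prop_unit} and independence of $X_N$ from the observed data,
\begin{align}
p(X_{1:N}\mid A_{\mathcal I_1}) = c_0\, p_X(X_N)\prod_{j\in\mathcal I_1}\pi(X_j)p_X(X_j)\prod_{j\notin\mathcal S}\bigl(1-\pi(X_j)\bigr)p_X(X_j).
\end{align}
Multiplying and dividing the factor of unit $N$ by $\pi(X_N)$ gives
\begin{align}
c_0\,w(X_N)\prod_{j\in\mathcal S}\pi(X_j)p_X(X_j)\prod_{j\notin\mathcal S}\bigl(1-\pi(X_j)\bigr)p_X(X_j),
\end{align}
with $w(x)=1/\pi(x)$. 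The product over $\mathcal S$ is symmetric in the covariates of $\mathcal S$, and the product over $\mathcal S^c$ involves only the control units.

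\textbf{Step 2: exposure and outcome factors.} Conditionally on $(X_{1:N},T_{1:N})$ with $T_N=1$, Assumption \ref{ass:exposure_mechanism} gives a product of kernels
\begin{align}
\prod_{j=1}^N p_{E\mid X,\mathcal L}\bigl(\tilde E_j\mid X_j,\mathcal L_j(X_{1:N},T_{1:N})\bigr),
\end{align}
where we write $\tilde E_N=E_N$. Assumption \ref{ass:no_hidden_interference_confounding} likewise gives a product of the kernels $p_{Y(1)\mid X,E}$. The exposures and outcomes of the control units, together with their covariates, are then integrated out. The quantity $g$ is the integrand after extracting $w(X_N)$.

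\textbf{Step 3: permutation invariance (main obstacle).} Take a permutation $\sigma$ of $\mathcal S$, extended by the identity outside $\mathcal S$. In the augmented network every unit in $\mathcal S$ has treatment one, so the relabeled treatment vector equals $T_{1:N}$. Assumption \ref{ass:eq_exp_map} then gives
\begin{align}
\mathcal L_j(X_{\sigma(1:N)},T_{1:N})=\mathcal L_{\sigma(j)}(X_{1:N},T_{1:N})
\end{align}
as multisets. Hence the joint exposure kernel, including the factors of the integrated-out control units, is invariant under simultaneous relabeling of the triples indexed by $\mathcal S$. The outcome kernels and $\prod_{j\in\mathcal S}\pi(X_j)p_X(X_j)$ are manifestly symmetric. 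The integral over $\mathcal S^c$ is unchanged because relabeling within $\mathcal S$ does not move the integration variables.

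The delicate point is that the observed network uses the neighborhoods $\mathcal N_j(X_{1:N-1})$, whereas $g$ must be defined through the $N$-unit rule $\mathcal N_j(X_{1:N})$. This is precisely why the interventional exposures $\tilde E_j$ are regenerated in the augmented network; if the observational $E_j$ were used instead, unit $N$ would play an asymmetric role and invariance would fail. I would verify carefully that, after the relabeling, the conditional laws of both the control exposures and the treated exposures match under the relabeled arguments, exactly as in the proof of Lemma \ref{lemma:w_exch:merged}. This yields the claimed decomposition $c\,w(X_N)\,g$ with $g$ symmetric.
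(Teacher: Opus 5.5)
Your proposal is correct and follows essentially the same route as the paper. Both extract $w(X_N)=1/\pi(X_N)$ from the test unit's covariate factor $p_X(X_N)=w(X_N)\pi(X_N)p_X(X_N)$, write the exposure and outcome laws in the augmented network as products of the common kernels, integrate out the control units, and obtain invariance of $g$ from Assumption \ref{ass:eq_exp_map} together with the fact that every unit in $\mathcal I_1\cup\{N\}$ has treatment one. Your explicit multiset identity $\mathcal L_j(X_{\sigma(1:N)},T_{1:N})=\mathcal L_{\sigma(j)}(X_{1:N},T_{1:N})$ just spells out the invariance step that the paper states more briefly.
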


\begin{proof}
Conditionally on the event defining the treated calibration set $\mathcal I_1$, the covariates of treated calibration units have density proportional to $\pi(x)p_X(x)$. In contrast, the test-unit covariate has density $p_X(x)$. Since
\begin{align}
	p_X(X_{N}) = w(X_{N})\pi(X_{N})p_X(X_{N}), \qquad w(x)=\frac{1}{\pi(x)},
\end{align}
the covariate density of the augmented sample can be written, up to a normalizing constant, as
\begin{align}
	w(X_{N}) \prod_{j\in\mathcal I_1}\pi(X_j)p_X(X_j) \pi(X_{N})p_X(X_{N}).
\end{align}
After extracting the factor $w(X_{N})$, the remaining covariate density is symmetric in the augmented covariates $\{X_j:j\in\mathcal I_1\}\cup\{X_{N}\}$.

Under the augmented treatment vector $T_{1:N}$, all units in $\mathcal I_1\cup\{N\}$ have treatment equal to one. By Assumption \ref{ass:exposure_mechanism}, the exposure variables are conditionally independent given the augmented covariates and treatments. By Assumption \ref{ass:no_hidden_interference_confounding}, the corresponding treated potential outcomes are conditionally independent given their covariates and exposures, with common conditional distribution $P_{Y(1)\mid X,E}$.

It remains to check the invariance of the remaining factor. Since all units in the augmented set $\mathcal I_1\cup\{N\}$ have treatment equal to one, permuting these units only relabels their positions.  By the permutation equivariance of the neighborhood rule in Assumption \ref{ass:eq_exp_map} and the ordering invariance of the common exposure kernel, the joint exposure distribution is unchanged under a simultaneous permutation of the triples
\begin{align}
	(X_j,\tilde E_j,\tilde Y_j(1)), \qquad j\in\mathcal I_1,
\end{align}
and the test triple $(X_{N},E_{N},Y_{N}(1))$. Covariates and variables of the control units are integrated out; their contribution is unchanged by such a relabeling. Hence the remaining factor $g(\cdot)$ is invariant under permutations of the augmented sample, proving the claim.
\end{proof}

By Lemma \ref{lem:ind_weighted_exchangeability}, the augmented interventional sample is weighted exchangeable with weights $w(x)=1/\pi(x)$. Therefore, the standard weighted conformal validity argument \citep{lei2021conformal} implies that the ideal interventional conformal $p$-value
\begin{align}
    \label{eq:inductive_ideal_p_appendix}
	p_{N,y}^{\mathrm{int}} = \frac{ W_{N} + \sum_{j\in\mathcal I_1} W_j \mathds 1\{V_j^{\mathrm{int}}\ge V_{N}(y)\} }{ W_{N} + \sum_{j\in\mathcal I_1}W_j }
\end{align}
is valid and the ideal prediction set
\begin{align}
    \tilde\Gamma_N^{\mathrm{WCP}}=\{y\in\mathcal Y:p_{N,y}^{\mathrm{int}}>\alpha\}
\end{align}
satisfies 
\begin{align}
   \Pr[Y_N(1)\in\tilde\Gamma_N^{\mathrm{WCP}}]\ge1-\alpha.
\end{align}

\subsection{Coupling comparison}
\label{proof:lemma_inductive}

Recall the definition of the units affected by the addition of unit $N$ to the network,
\begin{align}
    \mathcal H_N=\{j\in\mathcal I_1:\mathcal N_j(X_{1:N-1})\neq\mathcal N_j(X_{1:N})\},
\end{align}
and write $\mathcal U_N=\mathcal I_1\setminus\mathcal H_N$. For the proof, we consider the tighter exposure-law affected set
\begin{align}
    \mathcal H_{N}^{\mathrm{ex}} = \Bigl\{j\in\mathcal I_1: &P_{E\mid X,  \mathcal{L}} \left(\cdot\mid X_j,  \mathcal{L}_j(X_{1:N-1},T_{1:N-1})\right) \neq P_{E\mid X,  \mathcal{L}} \left(\cdot\mid X_j,  \mathcal{L}_j(X_{1:N},T_{1:N})\right) \Bigr\},
    \label{eq:exact_affected_ind}
\end{align}
and let $\mathcal U_{N}^{\mathrm{ex}}=\mathcal I_1\setminus\mathcal H_{N}^{\mathrm{ex}}$. Since equal neighborhoods give equal local configurations before and after augmentation, $ \mathcal H_{N}^{\mathrm{ex}}\subseteq \mathcal H_{N}.$
The tighter set $\mathcal H_{N}^{\mathrm{ex}}$ is used to construct the coupling, after which the index set of the nonnegative correction sum is enlarged from $\mathcal H_{N}^{\mathrm{ex}}$ to $ \mathcal H_{N}$.

\begin{lemma}
	\label{lem:ind_p_value_comparison}
	Under Assumptions \ref{ass:exposure_mechanism} and \ref{ass:no_hidden_interference_confounding}, define the observational conformal $p$-value by 
	\begin{align}
	\label{eq:conf_p_value_ind}
	p_{N,y}^{\mathrm{obs}} = \frac{W_{N}+\sum_{j\in\mathcal I_1}W_j\mathds 1\{V_j^{\mathrm{obs}}\ge V_{N}(y)\}}{W_{N}+\sum_{j\in\mathcal I_1}W_j}.
	\end{align}
	 Then this $p$-value and the interventional conformal $p$-value in \eqref{eq:inductive_ideal_p_appendix} admit a coupling such that, simultaneously for every $y\in\mathcal Y$,
	\begin{align}
		p_{N,y}^{\mathrm{int}} \le p_{N,y}^{\mathrm{obs}}+\rho_{N,y} \qquad\text{a.s.},
	\end{align}
	where $\rho_{N,y}$ is defined in \eqref{eq:correction_factor_inductive}.
\end{lemma}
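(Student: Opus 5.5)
The proof follows the template of Lemma \ref{lem:trans_p_value_comparison:merged}, adapted to network augmentation instead of a treatment flip. We work conditionally on $(X_{1:N},T_{1:N})$, with the augmented treatment vector having $T_N=1$. The observational variables $(E_j,Y_j(1))_{j\in\mathcal I_1}$ are generated in the $(N-1)$-unit network. The interventional variables $(\tilde E_j,\tilde Y_j(1))_{j\in\mathcal I_1}$ are generated in the augmented network. The test pair $(E_N,Y_N(1))$ is common to both $p$-values, since $p^{\mathrm{obs}}_{N,y}$ and $p^{\mathrm{int}}_{N,y}$ use the same test score $V_N(y)=S(X_N,E_N,y)$. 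Hence no target coupling is needed beyond using one copy of $(E_N,Y_N(1))$. The weights $W_j=1/\pi(X_j)$ depend only on covariates, so they coincide in the two $p$-values.

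\textbf{Step 1: coupling of unaffected units.} For $j\in\mathcal U_N^{\mathrm{ex}}$, the definition of $\mathcal H_N^{\mathrm{ex}}$ in \eqref{eq:exact_affected_ind} gives
\begin{align}
P_{E\mid X,\mathcal L}\left(\cdot\mid X_j,\mathcal L_j(X_{1:N-1},T_{1:N-1})\right)=P_{E\mid X,\mathcal L}\left(\cdot\mid X_j,\mathcal L_j(X_{1:N},T_{1:N})\right).
\end{align}
We can therefore set $\tilde E_j=E_j$ almost surely. Both outcomes are then drawn from $P_{Y(1)\mid X,E}(\cdot\mid X_j,E_j)$, so we also set $\tilde Y_j(1)=Y_j(1)$. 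This gives $V_j^{\mathrm{int}}=V_j^{\mathrm{obs}}$.

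\textbf{Step 2: coupling of affected units.} For $j\in\mathcal H_N^{\mathrm{ex}}$, we take any coupling with the correct observational and interventional marginals, for instance the independent one.

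\textbf{Step 3: gluing into a product coupling.} Assumptions \ref{ass:exposure_mechanism} and \ref{ass:no_hidden_interference_confounding} make the unit-level pairs conditionally independent across units, both in the $(N-1)$-unit network and in the $N$-unit network. The unit-wise couplings therefore combine into a product coupling. Its marginals are the full observational score vector and the full interventional score vector, and it is jointly distributed with the test pair $(E_N,Y_N(1))$, which is itself independent across units given $(X_{1:N},T_{1:N})$. This step is the one needing care. We must check that the observational law of the calibration data given $(X_{1:N-1},T_{1:N-1})$ is unaffected by additionally conditioning on $X_N$, which holds because $X_N$ is independent of the observed network. We must also check that the joint law of the interventional calibration scores with $Z_N$ is exactly the one used in Lemma \ref{lem:ind_weighted_exchangeability}, so that the ideal validity still applies under the coupling.

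\textbf{Step 4: comparison of $p$-values.} Under this coupling, the terms for $j\in\mathcal U_N^{\mathrm{ex}}$ cancel, and
\begin{align}
p_{N,y}^{\mathrm{int}}-p_{N,y}^{\mathrm{obs}}=\frac{\sum_{j\in\mathcal H_N^{\mathrm{ex}}}W_j\left[\mathds 1\{V_j^{\mathrm{int}}\ge V_N(y)\}-\mathds 1\{V_j^{\mathrm{obs}}\ge V_N(y)\}\right]}{W_N+\sum_{j\in\mathcal I_1}W_j}.
\end{align}
Each bracket is at most $\mathds 1\{V_j^{\mathrm{obs}}<V_N(y)\}$. The summands are nonnegative and $\mathcal H_N^{\mathrm{ex}}\subseteq\mathcal H_N$, since an unchanged neighborhood gives an unchanged local multiset. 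We can therefore enlarge the index set to obtain $p^{\mathrm{int}}_{N,y}\le p^{\mathrm{obs}}_{N,y}+\rho_{N,y}$. The coupling was built without reference to $y$, so the bound holds simultaneously for all $y\in\mathcal Y$ almost surely. Finally, we integrate over $(X_{1:N},T_{1:N})$.
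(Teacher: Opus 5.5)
Your proposal is correct and follows essentially the same route as the paper's proof. Both condition on $(X_{1:N},T_{1:N})$, couple exposures and outcomes identically on $\mathcal U_N^{\mathrm{ex}}$, take arbitrary couplings on $\mathcal H_N^{\mathrm{ex}}$, and glue these via cross-unit conditional independence into a $y$-independent product coupling. Both then bound each affected bracket by $\mathds 1\{V_j^{\mathrm{obs}}<V_N(y)\}$ and enlarge $\mathcal H_N^{\mathrm{ex}}$ to $\mathcal H_N$. Your extra remarks in Step 3 make explicit what the paper states in one line: that $X_N$ is independent of the observed network, and that the shared test pair must keep its joint law with the interventional calibration scores.
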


\begin{proof}
	We construct the coupling conditionally on $(X_{1:N},T_{1:N})$. For every $j\in\mathcal U_{N}^{\mathrm{ex}}$, the definition of $\mathcal U_{N}^{\mathrm{ex}}$ gives
	\begin{align}
		P_{E\mid X,  \mathcal{L}} \left( \cdot \mid {X_j,  \mathcal{L}_j(X_{1:N-1},T_{1:N-1})} \right) = P_{E\mid X,  \mathcal{L}} \left( \cdot \mid {X_j,  \mathcal{L}_j(X_{1:N},T_{1:N})} \right).
	\end{align}
	Thus, for each $j\in\mathcal U_{N}^{\mathrm{ex}}$, the observational and interventional exposures may be coupled so that
	\begin{align}
		E_j = \tilde E_j \qquad\text{a.s.}
	\end{align}
	Conditional on these coupled exposures, the corresponding treated potential outcomes have the same conditional distribution $P_{Y(1)\mid X,E}$ and may therefore be coupled so that
	\begin{align}
		Y_j(1) = \tilde Y_j(1) \qquad\text{a.s.}
	\end{align}
    For every $j\in\mathcal H_{N}^{\mathrm{ex}}$, choose any coupling with the correct observational and interventional marginals.

	By Assumptions \ref{ass:exposure_mechanism} and \ref{ass:no_hidden_interference_confounding}, the unit-specific exposure and outcome variables are conditionally independent across units. These couplings can therefore be combined into a product coupling that preserves the observational and interventional score-vector marginals. Under this coupling,
	\begin{align}
		V_j^{\mathrm{obs}} = V_j^{\mathrm{int}} \qquad\text{a.s. for every }{ j\in\mathcal U_{N}^{\mathrm{ex}}}.
	\end{align}
	The test exposure and outcome have the same joint law with the augmented covariates and treatments in both constructions, and are coupled identically. The two $p$-values therefore use the same target score $V_{N}(y)$ and the same weights, since the weights depend only on the covariates. Consequently,
	\begin{align}
		p_{N,y}^{\mathrm{int}} - p_{N,y}^{\mathrm{obs}} &= \frac{ {\sum_{j\in\mathcal H_{N}^{\mathrm{ex}}}} W_j \left[ \mathds 1\{V_j^{\mathrm{int}}\ge V_{N}(y)\} - \mathds 1\{V_j^{\mathrm{obs}}\ge V_{N}(y)\} \right] }{ W_{N}+\sum_{j\in\mathcal I_1}W_j }.
	\end{align}
	For every $j\in\mathcal H_{N}^{\mathrm{ex}}$,
	\begin{align}
		\mathds 1\{V_j^{\mathrm{int}}\ge V_{N}(y)\} - \mathds 1\{V_j^{\mathrm{obs}}\ge V_{N}(y)\} \le \mathds 1\{V_j^{\mathrm{obs}}<V_{N}(y)\}.
	\end{align}
	Therefore,
	\begin{align}
		p_{N,y}^{\mathrm{int}} - p_{N,y}^{\mathrm{obs}} \le \frac{ \sum_{j\in\mathcal H_{N}^{\mathrm{ex}}} W_j \mathds 1\{V_j^{\mathrm{obs}}<V_{N}(y)\} }{ W_{N}+\sum_{j\in\mathcal I_1}W_j }&\le \frac{ \sum_{j\in\mathcal H_{N}} W_j \mathds 1\{V_j^{\mathrm{obs}}<V_{N}(y)\} }{ W_{N}+\sum_{j\in\mathcal I_1}W_j}=\rho_{N,y}.
	\end{align}
	Hence,
	\begin{align}
		p_{N,y}^{\mathrm{int}} \le p_{N,y}^{\mathrm{obs}}+\rho_{N,y} \qquad\text{a.s.}
	\end{align}
	Since the coupling does not depend on $y$, this inequality holds simultaneously for every $y\in\mathcal Y$.
\end{proof} 

\subsection{Proof of Theorem \ref{thm:iawcp_inductive_coverage}}
\label{proof:cov_loss_ind_ite}

By Lemma \ref{lem:ind_p_value_comparison}, there is a coupling preserving the observational and interventional marginals such that $p_{N,y}^{\mathrm{int}}\le p_{N,y}^{\mathrm{obs}}+\rho_{N,y}$ simultaneously for every $y\in\mathcal Y$. If $\rho_{N,y}<\alpha$, then $p_{N,y}^{\mathrm{int}}>\alpha$ implies $p_{N,y}^{\mathrm{obs}}>\alpha-\rho_{N,y}$. If $\rho_{N,y}\ge\alpha$, the IA-WCP acceptance threshold is zero, and $p_{N,y}^{\mathrm{obs}}>0$ because $W_N>0$. Hence,
\begin{align}
    \tilde\Gamma_N^{\mathrm{WCP}}\subseteq\Gamma_N^{\mathrm{IA-WCP}}\qquad\text{a.s.}
\end{align}
Evaluating this inclusion at the shared target outcome and using the validity of the interventional $p$-value gives
\begin{align}
    \Pr[Y_N(1)\notin\Gamma_N^{\mathrm{IA-WCP}}]
    \le\Pr[p_{N,Y_N(1)}^{\mathrm{int}}\le\alpha]\le\alpha,
\end{align}
which proves Theorem \ref{thm:iawcp_inductive_coverage}.

For the control potential outcome, fix the target treatment at zero, use the control calibration units and weights $1/(1-\pi(X_j))$, and repeat the argument. Write $\Gamma_N^{t,\mathrm{IA-WCP}}$ for the set targeting $Y_N(t)$ at miscoverage level $\alpha_t$. Since the neighborhood excludes the unit itself, the target exposure law is unchanged by its own treatment; both potential-outcome guarantees therefore apply to the common target exposure in the ITE definition. If both potential outcomes belong to their respective prediction sets, their difference belongs to $\Gamma_N^{\mathrm{ITE}}$. Thus,
\begin{align}
    \Pr[Y_N(1)-Y_N(0)\notin\Gamma_N^{\mathrm{ITE}}]
    &\le\Pr[Y_N(1)\notin\Gamma_N^{1,\mathrm{IA-WCP}}]
    +\Pr[Y_N(0)\notin\Gamma_N^{0,\mathrm{IA-WCP}}]\le\alpha_1+\alpha_0=\alpha.
\end{align}

\subsection{Proof of Theorem \ref{thm:cov_loss_ind_ite_delta}}

\label{proof:thm:cov_loss_ind_ite_delta}
For every $j\in\mathcal U_{N}$, the coupling in the proof of Lemma \ref{lem:ind_p_value_comparison} gives
\begin{align}
	V_j^{\mathrm{int}} = V_j^{\mathrm{obs}} \qquad\text{a.s.}
\end{align}
For every $j\in\mathcal H_{N}$, Assumption \ref{ass:delta_score_stability_ind} establishes the required conditional stochastic ordering. By the coupling characterization of stochastic order \citep{strassen1965existence}, applied conditionally on $(X_{1:N},T_{1:N})$, there exists a coupling of $V_j^{\mathrm{int}}$ and $(V_j^{\mathrm{obs}},\Delta_j^{N})$ with the correct conditional marginals such that
\begin{align}
	V_j^{\mathrm{int}} \le V_j^{\mathrm{obs}}+\Delta_j^{N} \qquad\text{a.s.}
	\label{eq:delta_score_coupling_ind}
\end{align}

Conditionally on $(X_{1:N},T_{1:N})$, each pair $(V_j^{\mathrm{obs}},\Delta_j^{N})$ depends only on $(E_j,Y_j(1))$. Assumptions \ref{ass:exposure_mechanism} and \ref{ass:no_hidden_interference_confounding} therefore imply that these pairs are conditionally independent across units. The interventional scores are conditionally independent for the same reason. Hence, the coordinatewise couplings can be combined into a product coupling under which \eqref{eq:delta_score_coupling_ind} holds simultaneously for every $j\in\mathcal H_{N}$, while preserving the observational and interventional score-vector marginals.

For every $j\in\mathcal H_{N}$, \eqref{eq:delta_score_coupling_ind} implies
\begin{align}
	\mathds 1\{V_j^{\mathrm{int}}\ge V_{N}(y)\} - \mathds 1\{V_j^{\mathrm{obs}}\ge V_{N}(y)\}\le \mathds 1 \left\{ V_{N}(y)-\Delta_j^{N} \le V_j^{\mathrm{obs}} < V_{N}(y) \right\}.
\end{align}
The target exposure and outcome are coupled identically, as in Lemma \ref{lem:ind_p_value_comparison}. Since the unaffected scores coincide under the coupling, the interventional and observational $p$-values satisfy, simultaneously for every $y\in\mathcal Y$,
\begin{align}
	p_{N,y}^{\mathrm{int}} \le p_{N,y}^{\mathrm{obs}}+\rho_{N,y}^{+} \qquad\text{a.s.}
\end{align}
Because $p_{N,y}^{\mathrm{obs}}>0$, the same threshold argument used in Appendix \ref{proof:cov_loss_ind_ite} gives
\begin{align}
	\left\{ p_{N,y}^{\mathrm{obs}} \le (\alpha-\rho_{N,y}^{+})_+ \right\} \subseteq \left\{ p_{N,y}^{\mathrm{int}} \le \alpha \right\}.
\end{align}
Evaluating this inclusion at $y=Y_{N}(1)$ yields
\begin{align}
	\Pr\left[ Y_{N}(1)\notin\Gamma_N^{\mathrm{IA-WCP+}} \right] \le \alpha.
\end{align}

\section{Additional Experiments}
\label{app:add_experiments}

\subsection{Traffic Slicing}
In this section, we evaluate the proposed methods in a wireless traffic-slicing problem \citep{foukas2017network,bao2017prediction}. We consider a wireless-network setting in which a finite amount of radio resources is allocated among multiple users. The users are organized into traffic slices representing services with similar requirements, such as latency-sensitive, high-throughput, or best-effort traffic. Each slice is assigned a resource pool that must be shared among its scheduled users. Resource allocation therefore induces interference, since scheduling one user reduces the resources available to other scheduled users in the same slice. Consequently, a scheduled user's outcome depends not only on its own channel and traffic characteristics, but also on the number of other scheduled users sharing the same resource pool \citep{foukas2017network,bao2017prediction}.

\subsubsection{Setup}
We consider a population of $N=200$ devices sharing radio resources across $N_S$ slices. Each device $i$ is characterized by a feature vector $X_i$ that includes its channel-quality indicator, traffic characteristics, backlog information, and the slice $Z_i$ to which it is assigned. The binary treatment $T_i\in\{0,1\}$ indicates whether device $i$ is scheduled and granted radio resources ($T_i=1$) or not ($T_i=0$). Scheduling decisions are drawn independently according to
\begin{align}
T_i\mid X_i\sim\operatorname{Bernoulli}(\pi(X_i)),
\end{align}
where the propensity score $\pi(X_i)$ is a function of the device covariates.

The neighborhood of device $i$ is the set of other devices assigned to the same slice,
\begin{align}
    \mathcal N_i(X_{1:N})=\{j\ne i:Z_j=Z_i\},
\end{align}
and the exposure of device $i$ corresponds to the number of users in this neighborhood that are scheduled,
 \begin{align}
E_i = \sum_{j\in \mathcal N_i(X_{1:N})} T_j.
\label{eq:traffic_slicing_exposure}
\end{align}
Thus, the exposure $E_i$ measures the same-slice contention experienced by device $i$. 

The potential outcome $Y_i(t)$ represents the throughput of device $i$ under the scheduling decision $t$. If device $i$ is not scheduled, $T_i=0$, it receives no radio resources and its throughput is zero, $Y_i(0)=0$. When device $i$ is scheduled, its throughput depends on its channel quality and the amount of same-slice contention. Specifically, this is given by
\begin{align}
Y_i(1) = \min\left\{ \frac{\eta(X_i)\xi_i}{(E_i+1)N_S}, R_{\max} \right\},
\label{eq:traffic_slicing_y1}
\end{align} 
where $\eta(X_i)$ is a spectral-efficiency term determined by the channel-quality indicator contained in $X_i$, and $\xi_i$ is a log-normal multiplicative noise variable normalized to have mean one. The factor $N_S^{-1}$ represents the fraction of the total resources assigned to each slice, while the term $(E_i+1)^{-1}$ models equal resource sharing among all scheduled devices in the slice, including device $i$. Consequently, greater exposure corresponds to stronger resource contention and lower achievable throughput. The throughput is capped at the maximum supported rate $R_{\max}$.

We use gradient-boosted quantile regression as the baseline predictor for the treated potential outcome. The predictor is trained on treated samples from five independently generated network realizations.  The resulting lower and upper quantile estimates, $\hat q_{\alpha/2}(X)$ and $\hat q_{1-\alpha/2}(X)$, define the exposure-independent nonconformity score of each treated calibration device as
\begin{align}
S(X_i,E_i,Y_i(1)) = \max\left\{ \hat q_{\alpha/2}(X_i)-Y_i(1), Y_i(1)-\hat q_{1-\alpha/2}(X_i) \right\}.
\end{align}

\subsubsection{Benchmarks}
We consider transductive counterfactual estimation and implement the following methods: $(a)$ uncalibrated quantile-regression interval (QR) \citep{koenker1978regression}, given by
\begin{align}
	\Gamma^{\rm QR}(X)=[\hat q_{\alpha/2}(X),\hat q_{1-\alpha/2}(X)],
\end{align}
 $(b)$ conventional WCP \citep{lei2021conformal}, $(c)$ ideal WCP, $(d)$ IA-WCP, and $(e)$ the stability-aware IA-WCP+. The last two methods are proposed in this work, while ideal WCP is the prediction set constructed from the interventional $p$-value in \eqref{eq:conf_p_value_trans_int}. Ideal WCP is available in simulation because the data-generating mechanism allows us to recompute the interventional exposures and outcomes, and therefore serves as an oracle baseline. For IA-WCP+, intervening to schedule the target device $I$ increases the exposure by one for every treated calibration device in the same slice, while leaving the other devices unaffected. Since the quantile-regression nonconformity score is $1$-Lipschitz in the outcome, the score-stability condition in Assumption \ref{ass:delta_score_stability_trans} is satisfied with
 
\begin{align}
	\label{eq:stab_exp_slicing}
	\Delta_j^I = \frac{Y_j(1)}{E_j+2} \mathds{1}\{Z_j=Z_I\}.
\end{align}

\subsubsection{Results}

\begin{figure}[t]
    \centering
    \includegraphics[width=\textwidth]{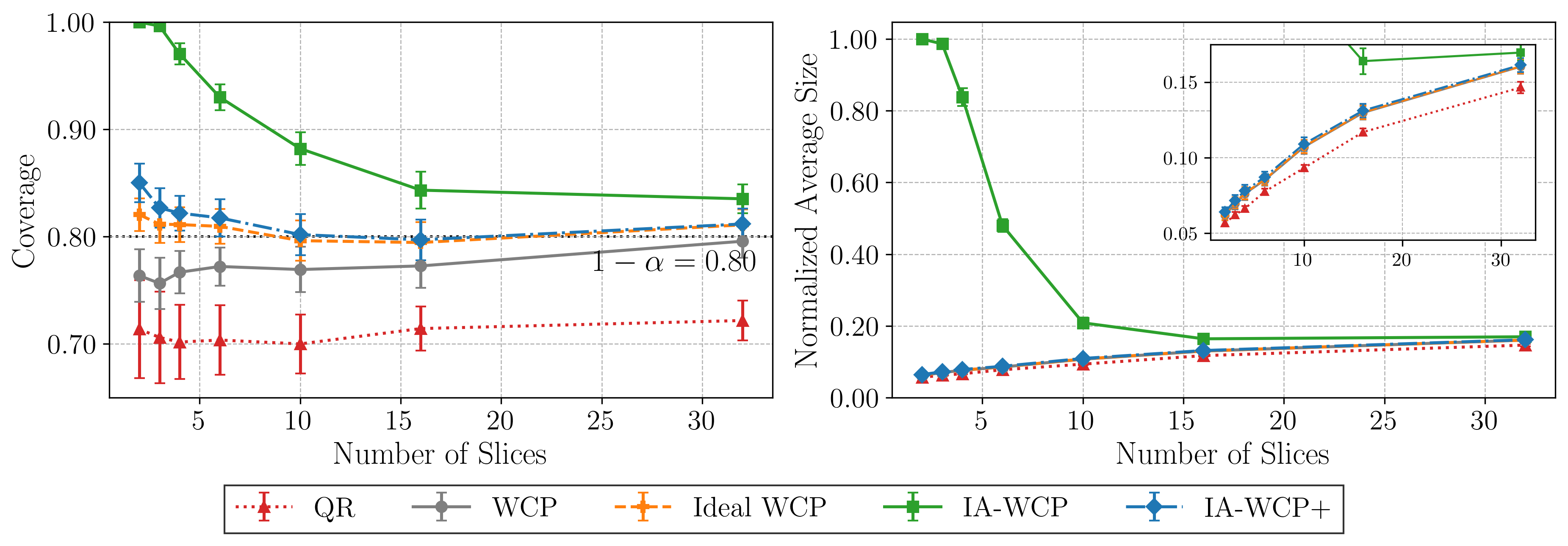}
	\caption{Empirical coverage (left) and average prediction-set size (right) for QR \citep{koenker1978regression}, WCP \citep{lei2021conformal}, ideal WCP based on interventional $p$-values, and the two proposed interference-adjusted methods, IA-WCP and IA-WCP+. The prediction sets target the treated potential outcomes of unscheduled devices. Prediction-set sizes are normalized by $R_{\max}$, and the target coverage is $1-\alpha=0.8$. Results are averaged over $250$ independently generated network realizations. The inset provides a magnified view of the prediction-set sizes for all methods except the IA-WCP procedure.}
    \label{fig:slicing}
\end{figure}

Figure \ref{fig:slicing} reports the empirical coverage and normalized average prediction-set size at the target miscoverage level $\alpha=0.2$, as the number of slices varies over $N_S\in\{2,3,4,6,10,16,32\}$. For each network realization, both quantities are averaged over all unscheduled target devices. A smaller number of slices $N_S$ corresponds to a larger average number of devices in each slice and stronger interference. By varying the number of slices $N_S$ we evaluate the performance of the proposed methods under different interference regimes. 

For every value of the number of slices $N_S$, the uncalibrated QR interval fails to attain the target coverage level $1-\alpha=0.8$.  WCP approaches nominal coverage when interference is weak, attaining coverage of approximately $0.81$ at $N_S=32$. As the number of slices decreases and the level of interference increases, however, its coverage falls below the nominal level, reaching approximately $0.76$ at $N_S=2$. In contrast, ideal WCP remains above the target coverage for all values of $N_S$. The coverage gap between WCP and ideal WCP illustrates the loss of validity caused by calibrating with observational scores.

Both proposed interference-adjusted methods maintain coverage above the target level for all values of $N_S$. IA-WCP uses a worst-case correction and therefore becomes highly conservative when a large fraction of the calibration weight is assigned to devices in the target slice. This conservativeness results in substantially larger prediction sets, particularly in the strongest-interference regimes. In contrast, IA-WCP+ mitigates this loss of efficiency by exploiting the device-specific stability margins in \eqref{eq:stab_exp_slicing}, and its coverage and prediction-set size remain close to those of ideal WCP.

\subsection{Additional Synthetic Results}
In this section, we consider the synthetic model introduced in Section \ref{sec:experiments} and present results that complement those in the main text. Specifically, we consider both transductive and inductive inference, additional target coverage levels, and experiments that vary the interference strength and outcome noise.

\subsubsection{Interference Strength}

\begin{figure}[t]
    \centering
    \begin{subfigure}[t]{0.49\textwidth}
        \centering
        \includegraphics[width=\textwidth]{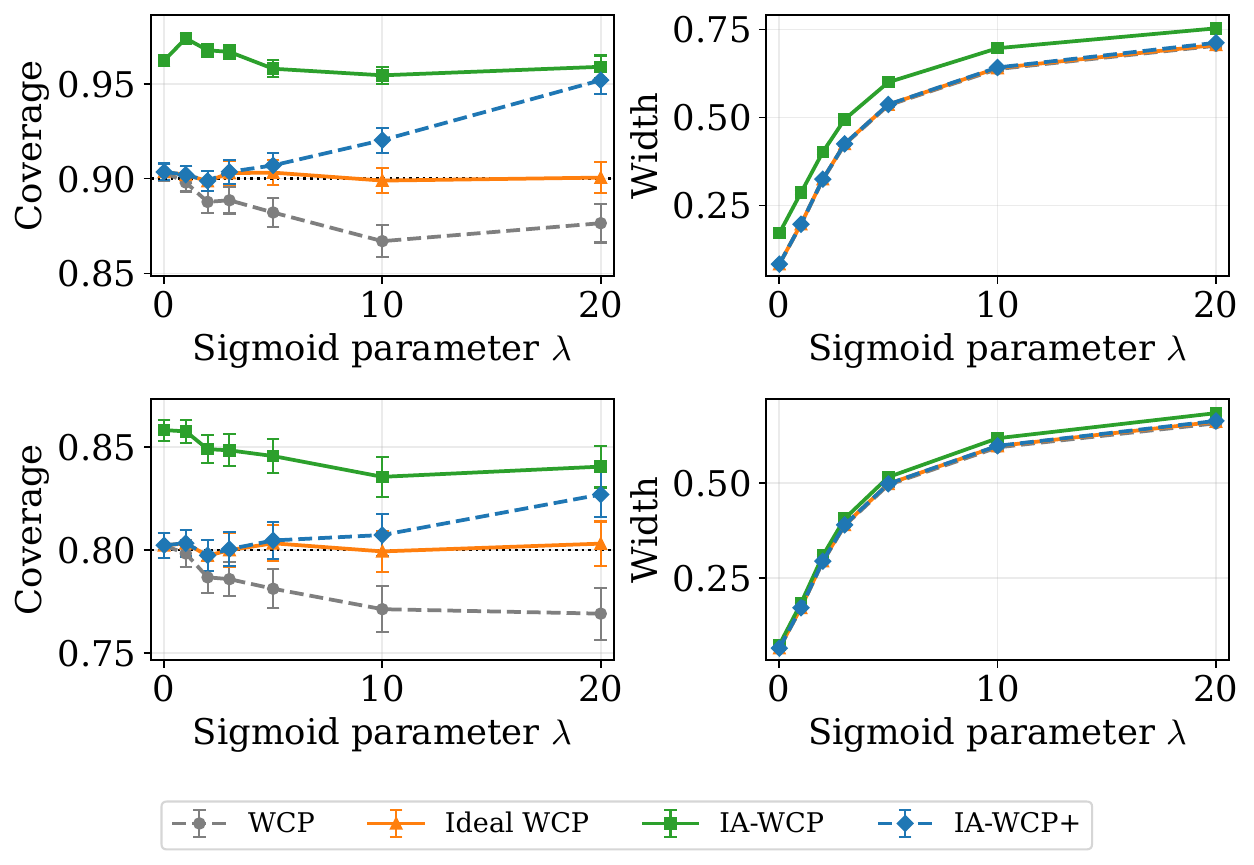}
        \caption{Transductive setting.}
        \label{fig:sweep_transductive}
    \end{subfigure}
    \hfill
    \begin{subfigure}[t]{0.49\textwidth}
        \centering
        \includegraphics[width=\textwidth]{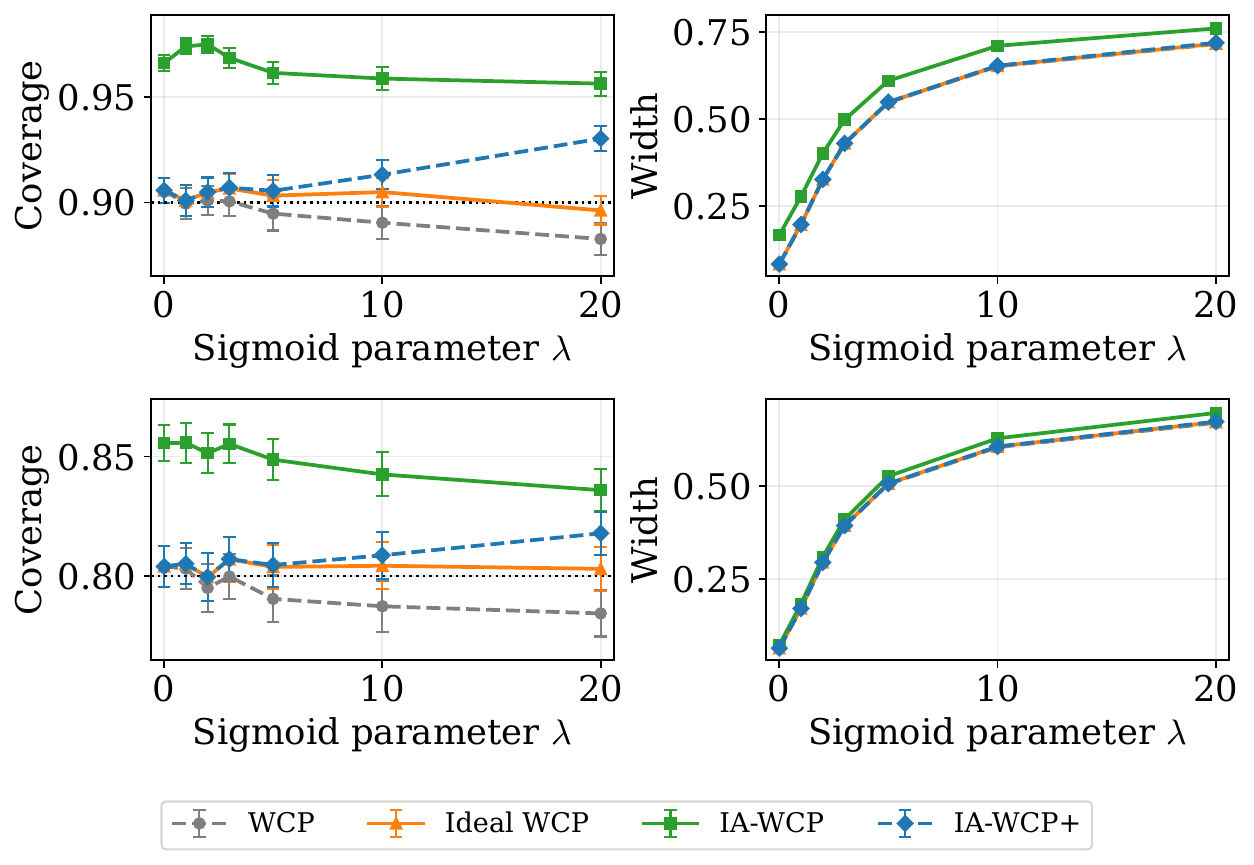}
        \caption{Inductive setting.}
        \label{fig:sweep_inductive}
    \end{subfigure}
    \caption{Coverage and normalized width of  WCP, ideal WCP, IA-WCP, and IA-WCP+ for transductive and inductive inference as a function of the sigmoid parameter $\lambda$. The first row corresponds to target miscoverage $\alpha=0.1$, while the second row corresponds to $\alpha=0.2$.}
    \label{fig:sweep_trans_ind}
\end{figure}

Figure \ref{fig:sweep_trans_ind} reports the empirical coverage and normalized width of prediction sets returned by WCP, ideal WCP, the proposed IA-WCP, and IA-WCP+ in both the transductive and inductive settings as a function of the sigmoid parameter $\lambda$.  The parameter $\lambda$ controls the sharpness of the exposure-outcome mechanism in \eqref{eq:outcome_ablation}.  In particular, when $\lambda$ is close to zero, the exposure-outcome mechanism is flat and peer exposure has little effect on the treated outcome. As $\lambda$ increases, the treated outcome changes rapidly depending on whether a majority of peers are treated.

For $\lambda=0$,  WCP and ideal WCP coincide and attain the target coverage levels. In this regime, the observational calibration scores used by  WCP are the same as the interventional calibration scores used by ideal WCP. However, as $\lambda$ increases, the gap between  WCP and ideal WCP becomes more visible, and  WCP begins to fall below nominal coverage. This degradation is more pronounced in the transductive setting. IA-WCP and IA-WCP+ remain above the target coverage level across the range of values considered.  WCP and ideal WCP remain relatively close in width, while IA-WCP becomes wider as the sigmoid parameter $\lambda$ grows because it applies a worst-case correction for affected calibration units. In contrast, IA-WCP+ maintains a width close to that of ideal WCP across the range of $\lambda$ values considered, while still attaining the target coverage level.

\subsubsection{Outcome Noise}

\begin{figure}[t]
    \centering
    \begin{subfigure}[t]{0.49\textwidth}
        \centering
        \includegraphics[width=\textwidth]{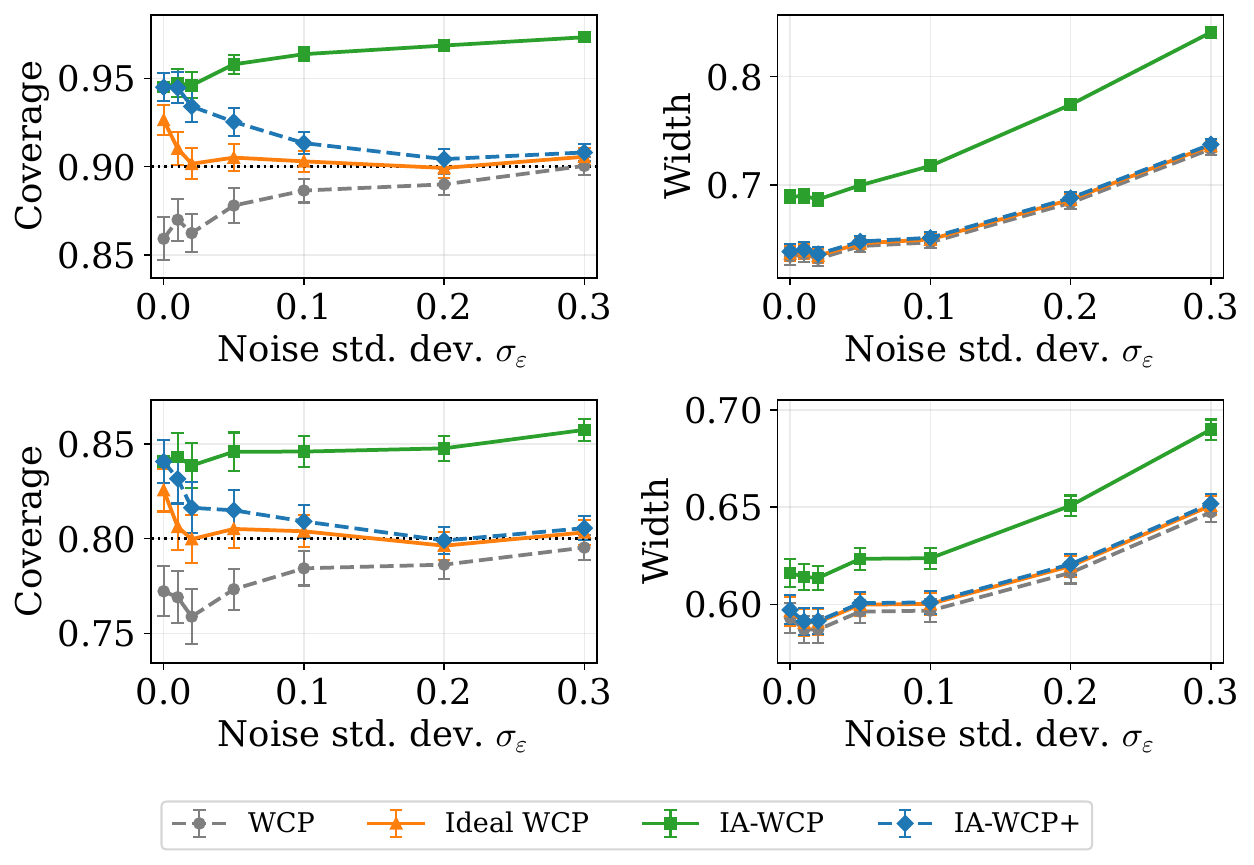}
        \caption{Transductive setting.}
        \label{fig:noise_transductive}
    \end{subfigure}
    \hfill
    \begin{subfigure}[t]{0.49\textwidth}
        \centering
        \includegraphics[width=\textwidth]{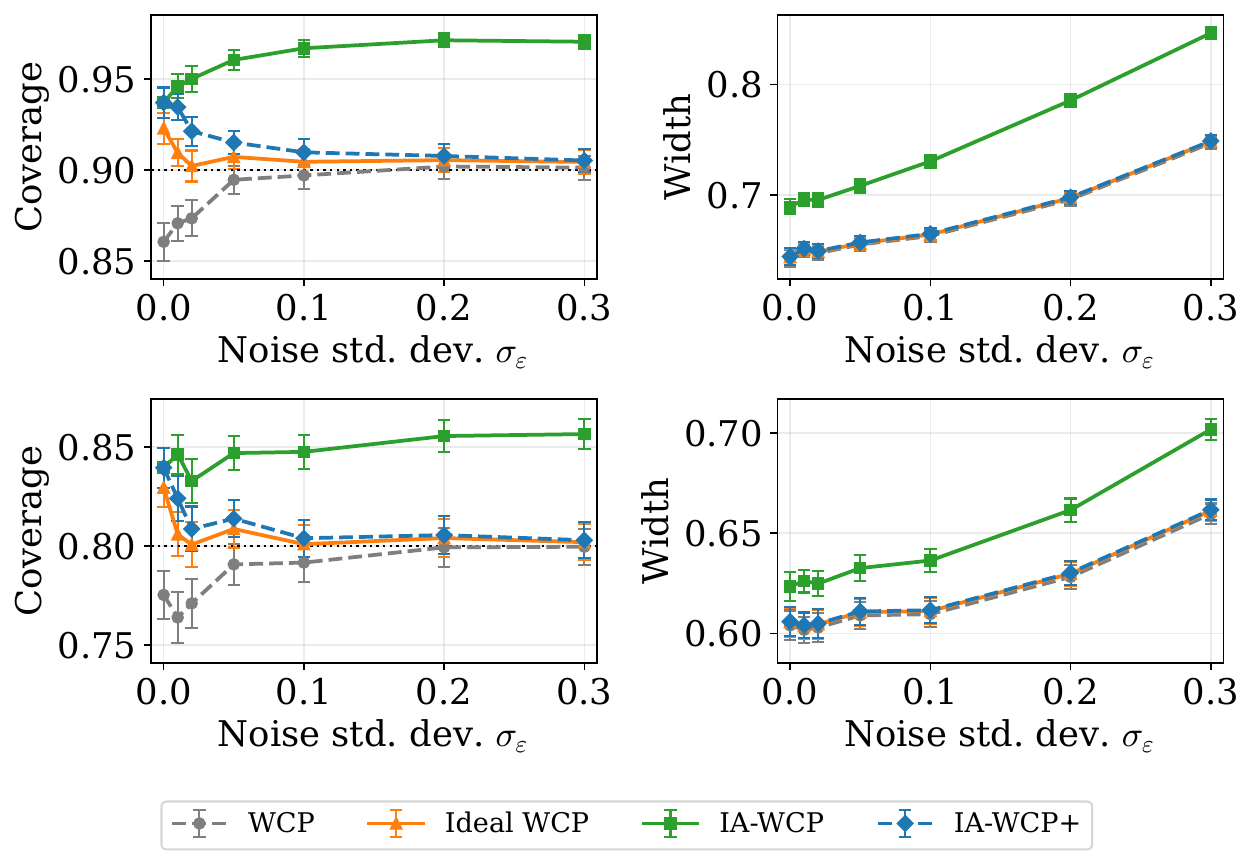}
        \caption{Inductive setting.}
        \label{fig:noise_inductive}
    \end{subfigure}
    \caption{Coverage and normalized width of  WCP, ideal WCP, IA-WCP, and IA-WCP+ for transductive and inductive inference as a function of the noise parameter $\sigma_\varepsilon$. The first row corresponds to target miscoverage $\alpha=0.1$, while the second row corresponds to $\alpha=0.2$.}
    \label{fig:sweep_noise}
\end{figure}

Figure \ref{fig:sweep_noise} studies the effect of the outcome noise level $\sigma_\varepsilon$. When the noise level is small, the nonconformity scores are determined primarily by the deterministic exposure mechanism, and changes in peer exposure induced by the target intervention can substantially alter the calibration scores. In this regime, WCP falls below the nominal coverage level, especially in the transductive setting.

As $\sigma_\varepsilon$ increases, the contribution of noise to the variability of the nonconformity scores grows. In this regime, WCP becomes closer to ideal WCP, and its empirical coverage degradation is reduced. IA-WCP and IA-WCP+ maintain coverage above the target level in both the transductive and inductive settings across the range of noise levels considered. Larger noise levels naturally lead to wider prediction sets for all methods, and the gap between IA-WCP and other baselines is most visible in the large-noise regime.
\subsubsection{Interference Locality}

\begin{figure}[t]
    \centering
    \begin{subfigure}[t]{0.49\textwidth}
        \centering
        \includegraphics[width=\textwidth]{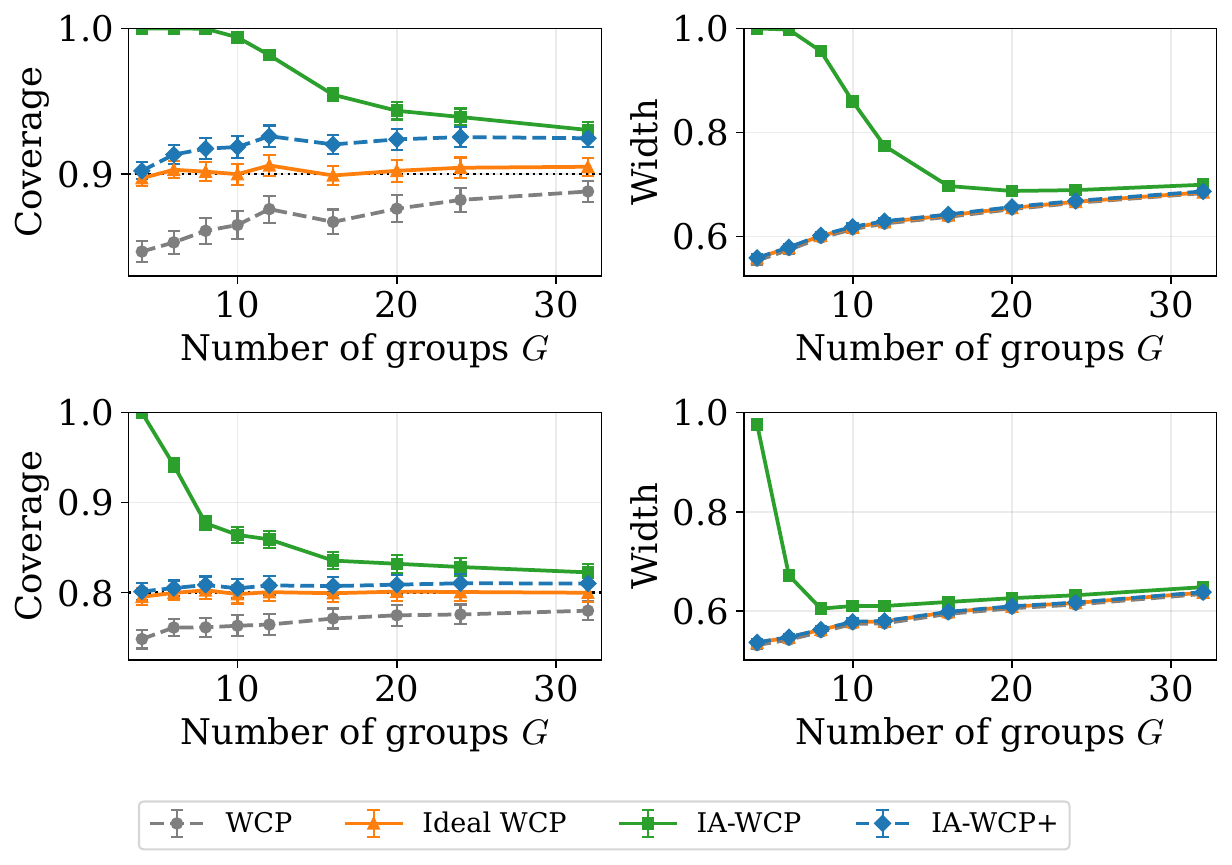}
        \caption{Transductive setting.}
        \label{fig:groups_transductive}
    \end{subfigure}
    \hfill
    \begin{subfigure}[t]{0.49\textwidth}
        \centering
        \includegraphics[width=\textwidth]{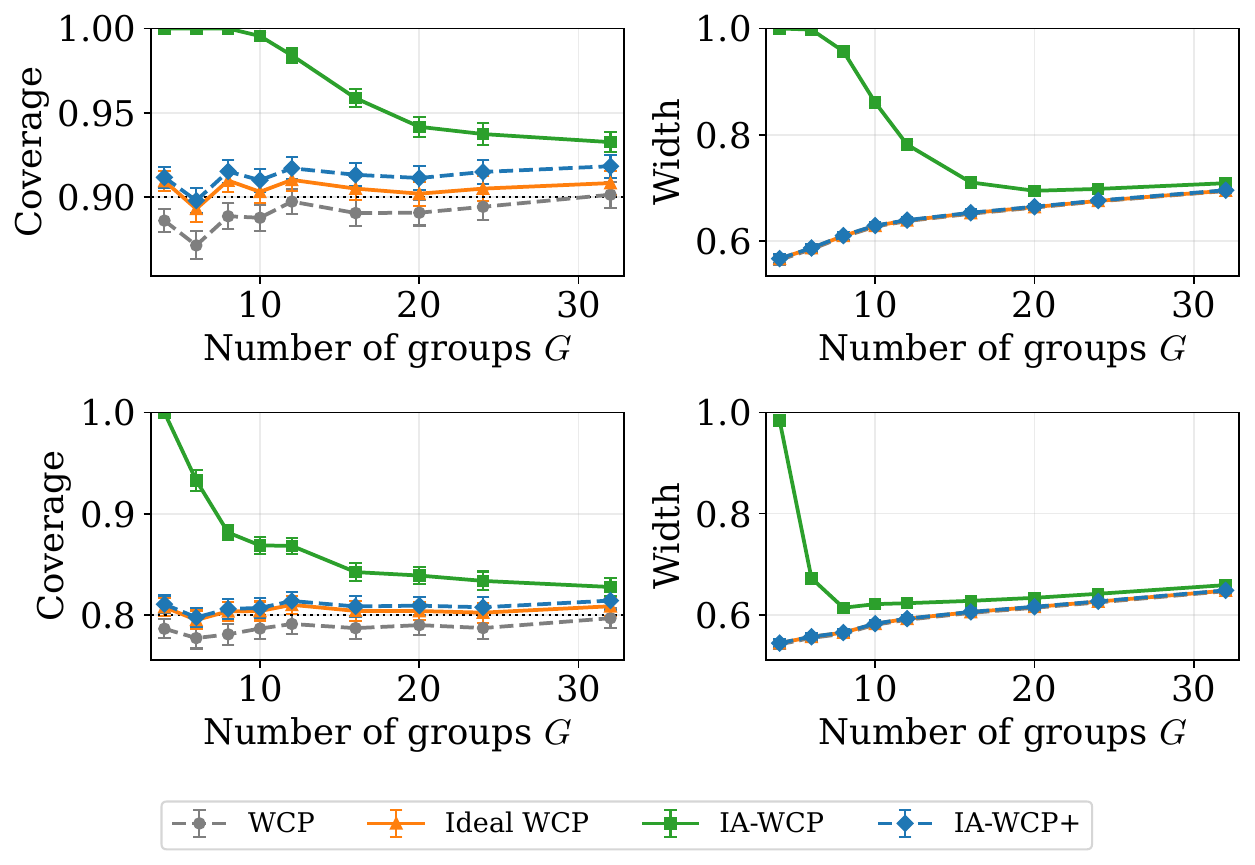}
        \caption{Inductive setting.}
        \label{fig:groups_inductive}
    \end{subfigure}
    \caption{Coverage and normalized width of WCP, ideal WCP, IA-WCP and IA-WCP+ for transductive and inductive inference as a function of the number of groups $G$. The first row corresponds to target miscoverage $\alpha=0.1$, while the second row corresponds to $\alpha=0.2$.}
    \label{fig:sweep_groups}
\end{figure}

Figure \ref{fig:sweep_groups} varies the number of groups $G$, which controls the locality of interference. For fixed $N$, smaller values of $G$ correspond to larger peer groups. In this regime, changing the treatment of a target unit, or embedding a new treated unit in the inductive setting, can affect a larger fraction of the calibration sample. The discrepancy between the observational and interventional calibration scores is therefore larger for smaller values of $G$.

For small values of $G$, WCP coverage falls below the nominal level. As $G$ increases, peer groups become smaller and interference becomes more localized. As a result, the observational calibration scores used by WCP become closer to the ideal interventional scores, and the empirical coverage of WCP improves, approaching that of ideal WCP. IA-WCP and IA-WCP+ compensate for the presence of interference, and their coverage levels remain above the nominal level throughout. As $G$ grows, the correction term in IA-WCP becomes smaller, so the conservativeness of the method decreases and its performance approaches that of ideal WCP.

\subsubsection{Base Predictors}
\label{sec:predictor_covariates}
\begin{figure}[t]
    \centering
    \begin{subfigure}[t]{0.49\textwidth}
        \centering
        \includegraphics[width=\textwidth]{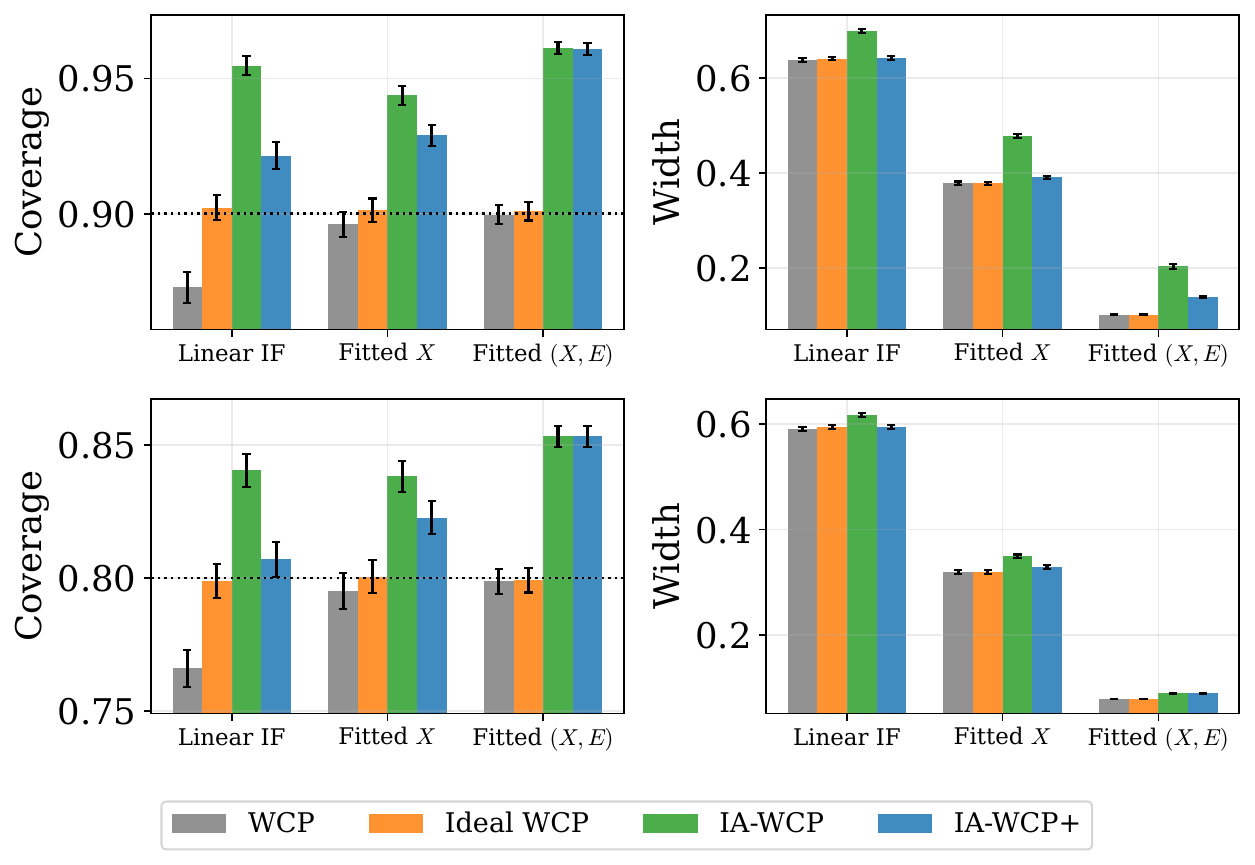}
        \caption{Transductive setting.}
        \label{fig:preds_transductive}
    \end{subfigure}
    \hfill
    \begin{subfigure}[t]{0.49\textwidth}
        \centering
        \includegraphics[width=\textwidth]{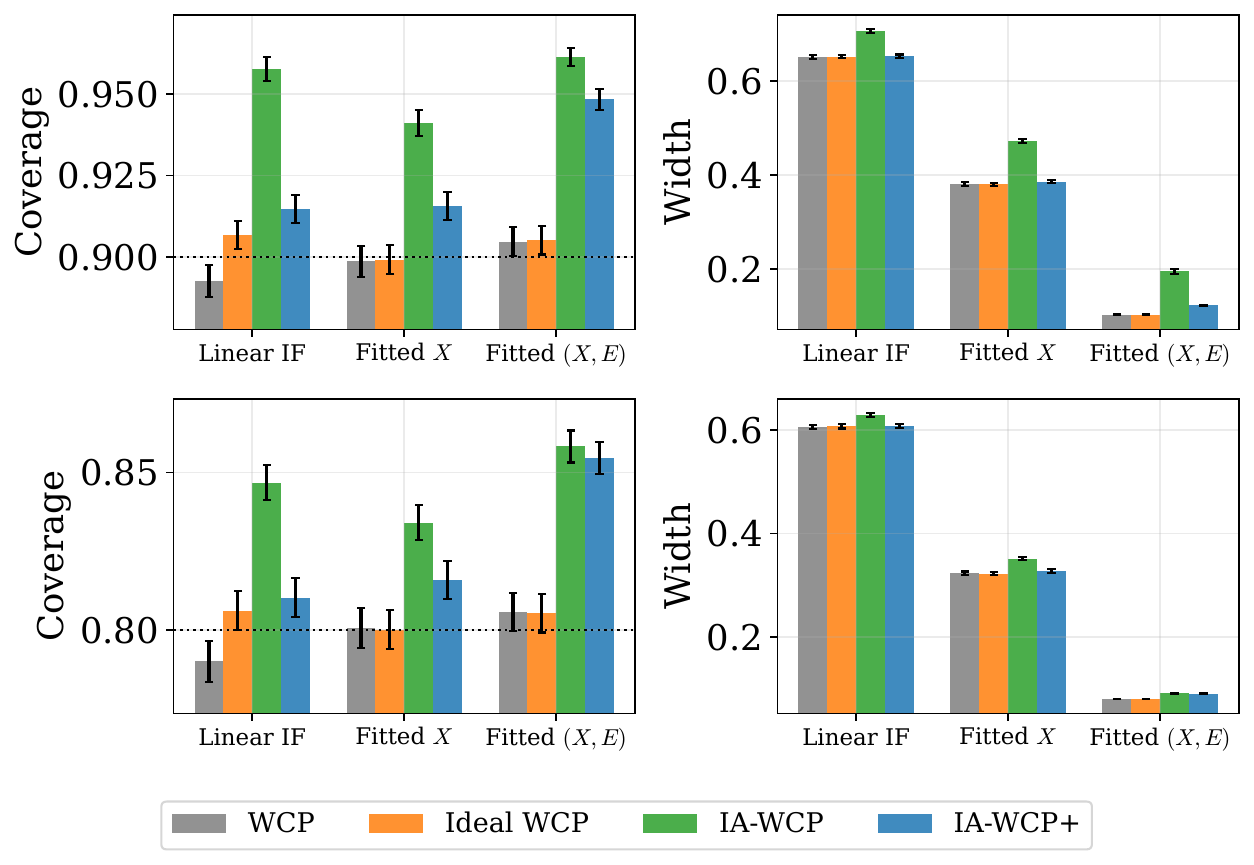}
        \caption{Inductive setting.}
        \label{fig:preds_inductive}
    \end{subfigure}
    \caption{Coverage and normalized width of WCP, ideal WCP, IA-WCP, and IA-WCP+ for transductive and inductive inference for different types of base predictors. The first row corresponds to target miscoverage $\alpha=0.1$, while the second row corresponds to $\alpha=0.2$.}
    \label{fig:sweep_preds}
\end{figure}

We conclude by evaluating the performance of WCP, ideal WCP, IA-WCP, and IA-WCP+ with the different base predictors introduced in Section \ref{subsec:exp_benchmarks}.

Figure \ref{fig:sweep_preds} reports the empirical coverage and normalized width of the prediction sets obtained using these methods and predictors. As expected, the mismatched interference-free predictor yields the widest prediction sets, whereas the predictor fitted using both $X$ and $E$ is the most efficient. We also observe that the miscoverage gap of WCP is larger when using the interference-free predictor. In contrast, with the fitted predictors, the coverage achieved by WCP is much closer to the target level, despite the absence of coverage guarantees. The proposed methods, IA-WCP and IA-WCP+, remain valid for all predictors considered, and their efficiency increases with the quality of the base predictor.
\end{document}